\def\isarxiv{1} 
\definecolor{darkblue}{rgb}{0, 0, 0.5}
\theoremstyle{plain}
\newtheorem{theorem}{Theorem}[section]
\newtheorem{lemma}[theorem]{Lemma}
\newtheorem{definition}[theorem]{Definition}
\newtheorem{corollary}[theorem]{Corollary}
\newtheorem{fact}[theorem]{Fact}
\newtheorem{remark}[theorem]{Remark}
\newcommand{\wh}{\widehat}
\newcommand{\wt}{\widetilde}
\newcommand{\ov}{\overline}
\newcommand{\R}{\mathbb{R}}
\DeclareMathOperator*{\E}{{\mathbb{E}}}
\DeclareMathOperator*{\Z}{\mathbb{Z}}
\DeclareMathOperator{\poly}{poly}
\DeclareMathOperator{\diag}{diag}
\newcommand{\x}{\mathsf{x}} 
\renewcommand{\S}{\mathcal{S}} 
\newcommand{\maj}{\mathsf{MAJ}} 
\newcommand*{\RN}[1]{\expandafter\@slowromancap\romannumeral #1@}
\renewcommand{\citet}{\cite}
\renewcommand{\cite}{\citep}
\begin{document}

\ifdefined\isarxiv

\date{}

\title{Provable Failure of Language Models in Learning Majority Boolean Logic via Gradient Descent}
\author{
Bo Chen\thanks{\texttt{ bc7b@mtmail.mtsu.edu}. Middle Tennessee State University.}
\and
Zhenmei Shi\thanks{\texttt{
zhmeishi@cs.wisc.edu}. University of Wisconsin-Madison.}
\and
Zhao Song\thanks{\texttt{ magic.linuxkde@gmail.com}. The Simons Institute for the Theory of Computing at the UC, Berkeley.}
\and
Jiahao Zhang\thanks{\texttt{ ml.jiahaozhang02@gmail.com} Independent Researcher.}
}

\else

\title{Provable Failure of Language Models in Learning Simple Boolean Logic via Gradient Descent} 

\author{Antiquus S.~Hippocampus, Natalia Cerebro \& Amelie P. Amygdale \thanks{ Use footnote for providing further information
about author (webpage, alternative address)---\emph{not} for acknowledging
funding agencies.  Funding acknowledgements go at the end of the paper.} \\
Department of Computer Science\\
Cranberry-Lemon University\\
Pittsburgh, PA 15213, USA \\
\texttt{\{hippo,brain,jen\}@cs.cranberry-lemon.edu} \\
\And
Ji Q. Ren \& Yevgeny LeNet \\
Department of Computational Neuroscience \\
University of the Witwatersrand \\
Joburg, South Africa \\
\texttt{\{robot,net\}@wits.ac.za} \\
\AND
Coauthor \\
Affiliation \\
Address \\
\texttt{email}
}

\ifdefined\colmsubmissiontrue
\linenumbers
\fi

\maketitle

\fi

\ifdefined\isarxiv
\begin{titlepage}
  \maketitle
  \begin{abstract}
Recent advancements in Transformer-based architectures have led to impressive breakthroughs in natural language processing tasks, with models such as GPT-4, Claude, and Gemini demonstrating human-level reasoning abilities. However, despite their high performance, concerns remain about the inherent limitations of these models, especially when it comes to learning basic logical functions. While complexity-theoretic analyses indicate that Transformers can represent simple logic functions (e.g., $\mathsf{AND}$, $\mathsf{OR}$, and majority gates) by its nature of belonging to the $\mathsf{TC}^0$ class, these results assume ideal parameter settings and do not account for the constraints imposed by gradient descent-based training methods. In this work, we investigate whether Transformers can truly learn simple \textbf{majority} functions when trained using gradient-based methods. We focus on a simplified variant of the Transformer architecture and consider both $n=\mathrm{poly}(d)$ and $n=\exp(\Omega(d))$ number of training samples, where each sample is a $d$-size binary string paired with the output of a basic majority function. Our analysis demonstrates that even after $\mathrm{poly}(d)$ gradient queries, the generalization error of the Transformer model still remains substantially large, growing exponentially with $d$. This work highlights fundamental optimization challenges in training Transformers for the simplest logical reasoning tasks and provides new insights into their theoretical limitations. 

  \end{abstract}
  \thispagestyle{empty}
\end{titlepage}

{\hypersetup{linkcolor=black}
\tableofcontents
}
\newpage

\else

\begin{abstract}

\end{abstract}

\fi

\section{Introduction}

In recent years, Transformer-based architectures~\cite{vsp+17} have achieved unprecedented breakthroughs in a wide range of natural language processing tasks, including machine translation~\cite{cfb+18,ajf19,has+23}, text summarization~\cite{esrm21,vvb+24}, and question answering~\cite{lly+23}. Representative models such as GPT-4~\cite{o23}, Claude~\cite{c24}, and Gemini~\cite{g24} have not only pushed benchmark performance to new heights but have also exhibited remarkable human-level reasoning abilities, ranging from generating coherent narratives to solving complex mathematical problems and even coding~\cite{lag+22,rgg+23,llzm24}. These success stories have established Transformers as the de facto framework for advancing NLP research and applications, making the study of their effectiveness a central topic in modern NLP.

Despite the notable success of Transformer architectures, concerns regarding their inherent limitations are rising. For instance, several studies~\cite{lpk+22,xlll24,hym+25} have pointed out the issue of hallucinations, where language models produce outputs that deviate from their training data and user prompts, often leading to unreliable or misleading content. This challenge has motivated researchers to examine the theoretical foundations of Transformers. In particular, recent work~\cite{lag+22,llzm24,cll+24,cpw24} has used complexity theory to model the forward computation of Transformers as logic circuits, which allows for establishing bounds on their expressive power. These analyses suggest that Transformers belong to a relatively weak complexity class with different levels of uniformity, namely $\mathsf{TC}^0$, meaning they may struggle to solve certain problems, such as parity problems~\cite{v84} and arithmetic formula evaluation~\cite{f93}, that are known to be $\mathsf{NC}^1$-hard unless $\mathsf{TC}^0 = \mathsf{NC}^1$ holds.  

Recently, a beautiful work~\cite{ks24} provides an inspiring theoretical analysis showing that Transformers can not learn the uniform $k$-parity problem, where parity checking belongs to the $\mathsf{NC}^1$ complexity class~\cite{pip79}. However, whether Transformers can provably represent simpler logic functions within the $\mathsf{TC}^0$ class remains an open question. Previous complexity-theoretic results indicate that Transformers can represent such simple logic functions (e.g., $\mathsf{AND}$, $\mathsf{OR}$, and majority gates) because Transformers themselves belong to the $\mathsf{TC}^0$ class, but these results only address the network’s expressiveness when its parameters are chosen arbitrarily. In practice, Transformers are trained using gradient descent-based methods (e.g., SGD~\cite{rm51} or Adam~\cite{kb14}), which constrain the evolution of their parameters during optimization. This observation motivates our key research question:

\begin{center}
    {\it Can Transformers learn simple logic functions in $\mathsf{TC}^0$ using gradient descent?}
\end{center}

To answer this, we move beyond analyzing forward computation alone and examine the training dynamics of Transformers. Although a Transformer with ideal parameter settings can compute simple Boolean functions, the constraints imposed by gradient descent may prevent the model from reaching these optimal settings. In our work, we focus on a simplified variant of the Transformer architecture (details provided in Section~\ref{sec:prelim:tf_model}) and consider a finite training set of $d$ samples, where each sample is a binary string paired with the output of a basic majority function. We prove that, under both $n=\mathrm{poly}(d)$ and $n=\exp(\Omega(d))$ number of training samples settings, even after $\poly(d)$ queries to the gradient oracle, the generalization error, i.e., the gap between the Transformer’s output and the true Boolean function, still remains substantially large, with this error gap growing exponentially with $d$.

Our main contribution is that we establish a rigorous theoretical framework that provides generalization error lower bounds for any differentiable parametric model, including the transformer, learning the majority function under polynomial (Theorem~\ref{thm:finite_sample_majority_2}) or exponential (Theorem~\ref{thm:finite_sample_majority_1}) sample settings. 
Our analysis introduces novel combinatorial and probabilistic tools, which enable precise characterizations of gradient variance and generalization error.

\paragraph{Roadmap.} We present the relevant works in Section~\ref{sec:related_work}. In Section~\ref{sec:prelim}, we introduce some important basic facts and concepts. In Section~\ref{sec:main}, we introduce the main findings about the hardness of learning the majority function with Transformers. We conclude in Section~\ref{sec:conclusion}.
\section{Related Works}\label{sec:related_work}

\paragraph{Computation Limits of Transformers.}
Transformers have demonstrated remarkable performance in natural language processing but still exhibit notable limitations in mathematical computation tasks~\citep{c22}. To better understand these constraints, recent research has focused on characterizing their computational capabilities through the analysis of two primary Transformer variants: (1) average-head attention Transformers, which convert the attention probability vector into a one-hot distribution by assigning a probability of 1 to the maximum entry and 0 to others; and (2) softmax-attention Transformers, where the attention probabilities are computed via a softmax function, characterized by $\mathsf{Softmax}(X) = \diag(\exp(X) \cdot {\bf 1}_n)^{-1} \cdot \exp(X)$. An interesting work~\cite{mss22} demonstrated that the average-head attention Transformer can recognize languages beyond the complexity class $\mathsf{AC}^0$, yet remains within the capability of constant-depth threshold circuits, placing them in the non-uniform $\mathsf{TC}^0$ class.~\cite{lag+22} established that softmax-attention Transformers are similarly contained within non-uniform $\mathsf{TC}^0$. Extending this line of inquiry,~\cite{ms23} introduced a generalized similarity function framework applicable to various similarity mappings, demonstrating that softmax-attention Transformers fall within $\mathsf{L}$-uniform $\mathsf{TC}^0$. By translating Transformer operations into first-order logic with majority quantifiers ($\mathsf{FOM}$),~\cite{ms23_neurips,imm98} showed that $\mathsf{Dlogtime}$-uniform $\mathsf{TC}^0$ circuits can approximate Transformers with softmax-attention. Building on this foundation,~\cite{chi24} enhanced accuracy and precision, eliminating approximation errors for average-head attention Transformers and improving the precision of softmax-attention Transformers from $O(\log n)$ to $\poly(n)$. They further demonstrated that softmax-attention Transformers, even with an absolute error bounded by $2^{-O(\poly(n))}$, remain in the $\mathsf{Dlogtime}$-uniform $\mathsf{TC}^0$ clas. Despite the effectiveness of complexity-theoretic analyses in establishing Transformer expressivity, these approaches assume an arbitrary range of parameter values and may overlook the training dynamics imposed by gradient descent. In contrast, our work examines the hardness of the majority problem, which is solvable within the $\mathsf{TC}^0$ class, but may still remain challenging to learn when gradient descent is taken into account. And here are more related works aiming at exploring the capability of transformer such as~\cite{ks24,losw24,sfh+24,hhc+24,mfb+24,kls+25,ossw24,cll+25_icl,cls+25,tly+25}.

\paragraph{Complexity and Neural Networks.}
Circuit complexity, a fundamental field within theories of computing, investigates specific capabilities and limitations of computational models built from logical gates. By categorizing Boolean circuits into distinct complexity classes, researchers can analyze machine learning models and uncover the inherent limits to their computational efficiency. An essential finding with significant implications for machine learning is the hierarchy among circuit complexity classes, specifically the inclusion relationship $\mathsf{AC}^0 \subset \mathsf{TC}^0 \subseteq \mathsf{NC}^1$. However, determining whether the classes $\mathsf{TC}^0$ and $\mathsf{NC}^1$ coincide is still unresolved~\cite{v99,ab09}. Analyzing neural network structures through the lens of circuit complexity has emerged as an effective approach for assessing their computational strengths and limitations. Recently, Transformers and two main Transformer variants, Average-Head Attention Transformers (AHATs) and SoftMax-Attention Transformers (SMATs), have attracted significant attention. Existing studies illustrate that AHAT models can be efficiently approximated by non-uniform circuits with threshold operations and constrained to constant depth (thus within the $\mathsf{TC}^0$ complexity class)~\cite{mss22}. Complementary research indicates that SMAT models exhibit similar efficiency under L-uniform circuit simulations~\cite{lag+22}. Circuit complexity frameworks have successfully analyzed neural network architectures beyond standard Transformers, including Transformers with Rotary Position Embedding (RoPE)~\cite{sal+24}, which are broadly applied in large language models~\cite{cll+24}. The Mamba architecture~\cite{gd23}, a newer model gaining attention, has similarly been classified into the $\mathsf{DLOGTIME}$-uniform $\mathsf{TC}^0$ family~\cite{cll+24_mamba}. Furthermore, Hopfield networks, originally developed as associative memory models, have recently been demonstrated to possess complexity bounds consistent with $\mathsf{TC}^0$ circuits~\cite{lll+24_hopfield}. However, despite these advancements, applying circuit complexity approaches to analyze Graph Neural Networks (GNNs) remains relatively unexplored. Although previous studies~\cite{bkm+20,g24_gnn,lls+25_gnn} have addressed the computational capabilities of GNNs using circuit models, their perspectives differ significantly from our work. 

\section{Preliminary}\label{sec:prelim}
In Section~\ref{sec:prelim:basic_fact}, we collect simple but useful results on binomial coefficients and their expansions. In Section~\ref{sec:prelim:coefficient}, we define and apply a coefficient-extraction operator for polynomials. In Section~\ref{sec:prelim:cal_bio}, we further derive advanced identities and asymptotic estimates for binomial expressions. In Section~\ref{sec:prelim:majority} we formalize the setup of learning a hidden $k$-majority function and explain how it differs from parity. In Section~\ref{sec:prelim:loss_majority}, we introduce both population and empirical losses to quantify the learning objective. In Section~\ref{sec:prelim:tf_model}, we detail the token embedding and parameter setup for our specialized transformer architecture.

\paragraph{Notations.} Considering an arbitrary positive integer $m$, we let $[m]:=\{1,2,\ldots,m\}$. For a $d$ dimensional vector $x\in\R^d$, we use $x_j$ to denote its $j$-th entry. We use $\circ$ to denote the composition of functions. In this paper, $x$ denotes empirical samples and input binary vectors (see Sections~\ref{sec:prelim:majority} and \ref{sec:prelim:loss_majority}), while $\mathsf{x}$ represents transformer token vectors (see Section~\ref{sec:prelim:tf_model}). 
We use $\mathbf{1}[\cdot]$ to denote the indicator function, which outputs $1$ when the condition inside the square brackets is true and outputs $0$ otherwise.

\subsection{Basic Facts}\label{sec:prelim:basic_fact}
 
We state a simple fact for binomial coefficient.
\begin{fact}[Folklore, See example in textbook \cite{clrs22}
] 
\label{fac:cardinality}
Let $d$ and $k$ denote positive integers.
    Let $d \geq k \geq 2$. 
    We have 
    \begin{align*}
    (d/k)^k \leq \binom{d}{k} \leq (ed/k)^k.
    \end{align*}
\end{fact}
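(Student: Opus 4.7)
The plan is to prove the two inequalities separately by elementary manipulations of the product form of the binomial coefficient, avoiding any appeal to Stirling's formula beyond its simplest consequence.

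For the lower bound, I would start from the factorized expression
\begin{align*}
\binom{d}{k} = \frac{d!}{k!(d-k)!} = \prod_{i=0}^{k-1} \frac{d-i}{k-i}.
\end{align*}
The key observation is that for each $i \in \{0,1,\ldots,k-1\}$, the ratio $\frac{d-i}{k-i}$ is at least $\frac{d}{k}$ whenever $d \geq k$. This follows because $(d-i)k - (k-i)d = i(d-k) \geq 0$, so $(d-i)k \geq (k-i)d$, and dividing by $k(k-i) > 0$ yields the claim. Multiplying these $k$ inequalities together gives $\binom{d}{k} \geq (d/k)^k$.

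For the upper bound, I would use the crude estimate
\begin{align*}
\binom{d}{k} = \prod_{i=0}^{k-1} \frac{d-i}{k-i} \leq \prod_{i=0}^{k-1} \frac{d}{k-i} = \frac{d^k}{k!},
\end{align*}
combined with the standard inequality $k! \geq (k/e)^k$. The latter can be proved directly from the Taylor expansion of $e^k = \sum_{j\geq 0} k^j/j! \geq k^k/k!$, which rearranges to $k! \geq (k/e)^k$. Substituting yields $\binom{d}{k} \leq d^k \cdot (e/k)^k = (ed/k)^k$, completing the proof.

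There is no real obstacle here: both bounds are classical folklore and rely only on elementary algebraic manipulation together with the one-line Taylor argument for $k! \geq (k/e)^k$. The only care needed is to verify the monotonicity $\frac{d-i}{k-i} \geq \frac{d}{k}$ in the lower bound, which requires the hypothesis $d \geq k$ (so that the cross-multiplication is valid and produces a nonnegative term).
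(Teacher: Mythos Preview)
Your proof is correct and complete; both the lower bound via the termwise inequality $\frac{d-i}{k-i}\ge\frac{d}{k}$ and the upper bound via $\binom{d}{k}\le d^k/k!$ together with $k!\ge(k/e)^k$ are standard and sound. The paper does not supply its own proof of this fact, citing it as folklore from a textbook, so there is nothing further to compare against.
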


\begin{fact}[Informal version of Fact~\ref{fac:binomial_thm_append} in Section~\ref{sec:append_prelim:basic_fact}]\label{fac:binomial_thm}
    Considering $t,n,k \in \mathbb{Z}$, we have
    \begin{align*}
        (1+t)^n 
        = & ~ \sum_{k=0}^n \binom{n}{k} t^k.
    \end{align*}
\end{fact}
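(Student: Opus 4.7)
The plan is to prove the binomial expansion by induction on $n \geq 0$, using Pascal's identity $\binom{n+1}{k} = \binom{n}{k} + \binom{n}{k-1}$ as the central combinatorial tool. The base case $n=0$ is immediate, since $(1+t)^0 = 1 = \binom{0}{0} t^0$. For the inductive step I would write $(1+t)^{n+1} = (1+t)\cdot(1+t)^n$ and substitute the inductive hypothesis into the second factor.

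After distributing $(1+t)$, the product becomes $\sum_{k=0}^n \binom{n}{k} t^k + \sum_{k=0}^n \binom{n}{k} t^{k+1}$. I would then reindex the second sum by letting $j = k+1$, split off the boundary terms ($k=0$ from the first sum contributing $1$, and $j = n+1$ from the reindexed sum contributing $t^{n+1}$), and collect like powers of $t$. For each $1 \leq k \leq n$, the coefficient of $t^k$ becomes $\binom{n}{k} + \binom{n}{k-1}$, which equals $\binom{n+1}{k}$ by Pascal's identity, and the boundary contributions match $\binom{n+1}{0}$ and $\binom{n+1}{n+1}$, closing the induction.

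An alternative and perhaps cleaner route is purely combinatorial: expand $(1+t)^n$ as a product of $n$ identical factors $(1+t)$, and observe that each monomial in the expansion arises by selecting either $1$ or $t$ from every factor. A term of the form $t^k$ is produced exactly by choosing $t$ from $k$ of the $n$ factors, and there are $\binom{n}{k}$ such choices; summing over $k$ from $0$ to $n$ yields the claimed identity. There is no serious obstacle in either route; the only minor pitfall is correctly handling the index shift in the inductive proof so that the reindexed sum runs over $k = 0, \ldots, n+1$ and the boundary terms align with $\binom{n+1}{0}$ and $\binom{n+1}{n+1}$.
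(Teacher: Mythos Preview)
Your proposal is correct, but it does considerably more work than the paper. The paper's own proof is a single sentence: it simply cites the general binomial theorem $(x+y)^n = \sum_{k=0}^n \binom{n}{k} x^{n-k} y^k$ as a known identity and specializes to $x=1$, $y=t$. You, by contrast, give a self-contained induction argument (with Pascal's identity handling the coefficient recombination) and an alternative combinatorial counting argument. Both of your routes are standard and valid; the difference is just that the paper treats the binomial theorem as a black-box fact to be invoked, whereas you prove it from scratch. Your approach buys self-containment; the paper's buys brevity.
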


\begin{lemma}[Clipping Property, Informal version of Lemma~\ref{lem:clipping_append} in Section~\ref{sec:append_prelim:basic_fact}]\label{lem:clipping}

If the following conditions hold:
\begin{itemize}
    \item We denote $f$ as a function such that $f: \{\pm 1\}^d \to \R$.
    \item Let the clipped version of $f$ be $\ov{f}(x) = \min\{\max\{f(x), -1\}, 1\}$.
    \item We denote $g$ as any binary function which satisfies $g: \{\pm 1\}^d \to \pm1$.
\end{itemize}
Then the following statement is true:
\begin{align*}
    (g(x) - f(x))^2 \ge (g(x)-\ov{f}(x))^2.
\end{align*}
\end{lemma}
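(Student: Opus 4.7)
The plan is to prove this by a straightforward case analysis on where $f(x)$ lies relative to the interval $[-1,1]$, exploiting the fact that $g(x) \in \{\pm 1\} \subseteq [-1,1]$, so $\overline{f}(x)$ is the metric projection of $f(x)$ onto an interval that contains $g(x)$. Since projection onto a convex set cannot increase the distance to any point inside the set, the inequality should follow. I will carry out the argument concretely through three cases rather than invoking a general projection lemma, because the former is more self-contained.

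First I would partition the analysis into three cases based on $f(x)$. In the easy case $-1 \le f(x) \le 1$, the definition of clipping gives $\overline{f}(x) = f(x)$, so $(g(x)-f(x))^2 = (g(x)-\overline{f}(x))^2$ and the claim is immediate. In the case $f(x) > 1$, we have $\overline{f}(x) = 1$. Since $g(x) \in \{-1,+1\}$, it follows that $g(x) \le 1 < f(x)$, which gives $g(x) - f(x) < g(x) - 1 \le 0$, so $|g(x)-f(x)| > |g(x)-1| = |g(x)-\overline{f}(x)|$, and squaring preserves this inequality. The case $f(x) < -1$ is symmetric: $\overline{f}(x) = -1$, and $g(x) \ge -1 > f(x)$ yields $|g(x)-f(x)| > |g(x)-(-1)|$.

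An equivalent, slightly slicker way I might present it is to write the difference of squares and factor:
\begin{align*}
(g(x)-f(x))^2 - (g(x)-\overline{f}(x))^2 = (\overline{f}(x)-f(x))\bigl(2g(x)-f(x)-\overline{f}(x)\bigr),
\end{align*}
and then verify in each of the three cases that both factors have the same sign (both zero in the first case, both nonpositive when $f(x) > 1$, both nonnegative when $f(x) < -1$). This yields the nonnegativity of the difference in a unified way.

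There is no real obstacle here; the only mildly delicate point is keeping the inequalities oriented correctly in the second and third cases, which the explicit use of the bounds $g(x) \in \{\pm 1\}$ and the defining inequality of the active clipping direction handles cleanly. No additional structure of $g$ beyond its range being $\{\pm 1\} \subseteq [-1,1]$ is used, which is why the statement holds for arbitrary binary $g$.
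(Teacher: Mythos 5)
Your proof is correct and rests on the same elementary idea as the paper's (a direct case analysis showing that clipping can only move $\overline{f}(x)$ closer to the target $g(x)$), but your organization is cleaner. The paper first splits on $g(x)=+1$ versus $g(x)=-1$ and then, within each of those, on the three subcases for $f(x)$, for six cases in total. You instead case only on $f(x)$ and use $g(x)\in\{\pm 1\}\subseteq[-1,1]$ uniformly, cutting this to three cases, which buys both brevity and a slightly more general statement (nothing about $g$ is used beyond its range lying in $[-1,1]$; the paper's proof also only uses this, but obscures it by branching on the two values). Your alternative factoring
\begin{align*}
(g-f)^2-(g-\overline{f})^2 = (\overline{f}-f)\bigl(2g-f-\overline{f}\bigr)
\end{align*}
is a genuine stylistic improvement: it collapses the verification to a sign check on two factors and makes the nonnegativity transparent, and it connects the lemma to the standard fact that metric projection onto a convex set containing the target is $1$-Lipschitz. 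Either presentation would be acceptable; there is no gap.
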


\subsection{Coefficient Operator}\label{sec:prelim:coefficient}
In this section, we define and then introduce the coefficient operator related to our results.
\begin{definition}\label{def:k_coefficient_operator}
   Let $x\in\R$ be an arbitrary real number. Let $f(t) := a_0 + a_1 t + a_2 t^2 +\dots +a_k t^k$ be an order-$k$ polynomial function with variable $t$. Let $i\in\{0,1,\ldots,k\}$. We use operator $\mathsf{Coeff}_i[f(t)]$ to extract the coefficient of $t^i$, i.e., 
   \begin{align*}
       \mathsf{Coeff}_i[f(t)]  := a_i.
   \end{align*}
\end{definition}

\begin{fact}[Relationship to Combinations, Informal version of Fact~\ref{fac:binom_to_tk_append} in Section~\ref{sec:append_prelim:coefficient}]\label{fac:binom_to_tk}
If the following conditions hold:
\begin{itemize}
    \item Let $n$ and $k$ be non-negative integers and $n \geq k$. 
    \item  Let $\mathsf{Coeff}_i[f(t)]$ be Definition~\ref{def:k_coefficient_operator}. 
    \item Let $t\in\R$. 
\end{itemize} Then we can show
    \begin{align*}
        \binom{n}{k} = \mathsf{Coeff}_k[ (1+t)^n ].
    \end{align*}
\end{fact}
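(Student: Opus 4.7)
The plan is to obtain the identity as an immediate consequence of the binomial theorem stated in Fact~\ref{fac:binomial_thm}, together with the definition of the coefficient operator in Definition~\ref{def:k_coefficient_operator}. The statement is really a bookkeeping lemma: it just says that the coefficient operator, when applied to the binomial expansion, picks out the binomial coefficient. So there is essentially no analytic content, only an algebraic verification.

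First, I would apply Fact~\ref{fac:binomial_thm} with the indexing variable renamed (say, to $j$) to obtain
\begin{align*}
(1+t)^n = \sum_{j=0}^{n} \binom{n}{j} t^j.
\end{align*}
This is a polynomial in $t$ of degree $n$, with coefficient $a_j = \binom{n}{j}$ for each $j \in \{0,1,\ldots,n\}$. Since $n \geq k$, the index $k$ lies in the valid range $\{0,1,\ldots,n\}$, so the coefficient of $t^k$ is well-defined.

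Next, I would invoke Definition~\ref{def:k_coefficient_operator} with $f(t) = (1+t)^n$ and $i = k$. By the definition of $\mathsf{Coeff}_k[\cdot]$, the operator extracts the coefficient $a_k$ of $t^k$ in the polynomial expansion, which is precisely $\binom{n}{k}$. Combining the two steps yields
\begin{align*}
\mathsf{Coeff}_k\bigl[(1+t)^n\bigr] = \binom{n}{k},
\end{align*}
as claimed.

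Since both steps are purely definitional, there is no genuine obstacle in the proof. The only subtlety worth noting is that Definition~\ref{def:k_coefficient_operator} is stated for polynomials of a fixed order $k$ with index $i \in \{0,1,\ldots,k\}$; to apply it here I need the implicit understanding that the definition extends naturally to polynomials of any degree $n \geq k$, with the operator returning $a_k$. The condition $n \geq k$ in the hypothesis is exactly what ensures the extraction is meaningful. Once this is noted, the proof is a one-line application of the binomial theorem.
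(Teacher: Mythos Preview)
Your proposal is correct and matches the paper's own proof essentially line for line: the paper also expands $(1+t)^n$ via Fact~\ref{fac:binomial_thm} and then invokes Definition~\ref{def:k_coefficient_operator} to read off the coefficient of $t^k$. Your additional remark about the implicit extension of Definition~\ref{def:k_coefficient_operator} to polynomials of degree $n\geq k$ is a nice bit of care that the paper does not spell out.
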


\begin{fact}[Index Shift]\label{fac:coeff_op_change_index}
    If the following conditions hold:
    \begin{itemize}
        \item Let $k\geq j \geq 0$ be non-negative integers. 
        \item Let $\mathsf{Coeff}_k[\cdot]$ be the coefficient operator as defined in Definition~\ref{def:k_coefficient_operator}.
    \end{itemize}
    The following statement is true:
    \begin{align*}
        \mathsf{Coeff}_{k-j}[f(t)] = \mathsf{Coeff}_{k}[f(t)t^j].
    \end{align*}
\end{fact}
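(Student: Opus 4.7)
The plan is to prove the index shift identity by directly expanding $f(t)$ in its monomial basis and tracking what multiplication by $t^j$ does to each coefficient. Specifically, I would begin by writing $f(t) = \sum_{i=0}^{k} a_i t^i$ in accordance with Definition~\ref{def:k_coefficient_operator}, so that by definition the left-hand side equals
\begin{align*}
\mathsf{Coeff}_{k-j}[f(t)] = a_{k-j},
\end{align*}
which is well-defined because the assumption $k \geq j \geq 0$ guarantees $0 \le k-j \le k$.

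Next I would compute $f(t)\,t^j$ by distributing the multiplication term by term, obtaining $f(t)\,t^j = \sum_{i=0}^{k} a_i\, t^{i+j}$. Reindexing via $\ell := i+j$ (so that $\ell$ ranges over $\{j, j+1, \ldots, k+j\}$) turns this into $\sum_{\ell=j}^{k+j} a_{\ell - j}\, t^{\ell}$. Applying the coefficient operator $\mathsf{Coeff}_{k}[\cdot]$ then picks out the monomial $t^k$, which arises precisely from $\ell = k$ (inside the summation range since $j \le k \le k+j$), yielding $\mathsf{Coeff}_{k}[f(t)\,t^j] = a_{k-j}$. Matching the two evaluations gives the claimed equality.

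I do not anticipate any real obstacle: this is essentially a bookkeeping exercise about how $t^j$ shifts the coefficient sequence by $j$ positions. The only minor subtlety is that the product $f(t)\,t^j$ has degree $k+j$ rather than $k$, so strictly speaking Definition~\ref{def:k_coefficient_operator} must be applied to this higher-degree polynomial; I would simply note that the same extraction rule applies verbatim (equivalently, one can pad $f$ with zero coefficients above degree $k$), after which the argument proceeds without change.
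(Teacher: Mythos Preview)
Your proposal is correct and follows essentially the same approach as the paper's own proof, which simply observes in one sentence that multiplying $f(t)$ by $t^j$ shifts every term's degree by $j$, so the coefficient of $t^k$ in $f(t)t^j$ equals the coefficient of $t^{k-j}$ in $f(t)$. You have just made this shift explicit via the monomial expansion and reindexing.
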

\begin{proof}
Multiplying the polynomial function $f(t)$ by $t^j$ shifts every term's degree by $j$, so the coefficient of $t^k$ in $f(t)t^j$ is exactly the coefficient of $t^{k-j}$ in $f(t)$. Thus, we complete the proof.
\end{proof}

\begin{fact}[Linearity]\label{fac:coeff_op_lin}
    If the following conditions hold:
    \begin{itemize}
        \item Let $k\geq j \geq 0$ be non-negative integers . 
        \item Let $\mathsf{Coeff}_k[\cdot]$ be the coefficient operator as defined in Definition~\ref{def:k_coefficient_operator}.
        \item Let $\alpha_1,\alpha_2,\ldots,\alpha_m\in\R$ be real numbers.
        \item Let $f_1(t), f_2(t), \ldots, f_m(t)$ be polynomial functions with variable $t$. 
    \end{itemize}
    The following statement is true:
    \begin{align*}
        \mathsf{Coeff}_{k}[\sum_{i=1}^m\alpha_if_i(t)] = \sum_{i=1}^m\alpha_i\mathsf{Coeff}_{k}[f_i(t)].
    \end{align*}
\end{fact}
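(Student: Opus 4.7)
The plan is to unfold Definition~\ref{def:k_coefficient_operator} and exploit the fact that polynomial addition proceeds coefficient-wise. First, I would write each $f_i(t)$ in its standard polynomial expansion $f_i(t) = \sum_{j=0}^{K} a_{i,j}\, t^j$, where $K$ is taken to be an upper bound on the degrees of all the $f_i$ (padding with zero coefficients where needed so that each $a_{i,j}$ is well-defined and $\mathsf{Coeff}_k[f_i(t)] = a_{i,k}$ for every $i$). This padding step is the only notational care needed, since the definition of $\mathsf{Coeff}_i$ is stated for a single polynomial of fixed order.

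Next, I would form the linear combination $\sum_{i=1}^{m} \alpha_i f_i(t) = \sum_{i=1}^{m} \alpha_i \sum_{j=0}^{K} a_{i,j}\, t^j$ and interchange the order of the two finite sums, which is justified by the distributive law, to obtain the expansion $\sum_{j=0}^{K} \bigl(\sum_{i=1}^{m} \alpha_i a_{i,j}\bigr) t^j$. Applying the operator $\mathsf{Coeff}_k[\cdot]$ then reads off the coefficient of $t^k$, giving $\sum_{i=1}^{m} \alpha_i a_{i,k}$. Recognizing each $a_{i,k}$ as $\mathsf{Coeff}_k[f_i(t)]$ yields exactly the claimed identity $\mathsf{Coeff}_{k}\bigl[\sum_{i=1}^{m}\alpha_i f_i(t)\bigr] = \sum_{i=1}^{m}\alpha_i \mathsf{Coeff}_{k}[f_i(t)]$.

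There is no genuine obstacle here; the proof is essentially a one-line consequence of distributivity together with the fact that two polynomials are equal if and only if their coefficients agree term by term. The only thing to be mindful of is the uniform degree bound $K$, which ensures that the sum and the coefficient extraction commute without ambiguity. I would therefore keep the written proof to a handful of lines, mirroring the structure of the proof of Fact~\ref{fac:coeff_op_change_index} immediately above.
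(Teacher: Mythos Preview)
Your proposal is correct and follows essentially the same approach as the paper, which simply remarks that coefficient extraction respects term-by-term polynomial addition and scalar multiplication; you have merely spelled out explicitly the padding and sum-interchange steps that the paper compresses into a single sentence.
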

\begin{proof}
    Since the coefficient operator extracts the coefficient corresponding to a specific power of $t$ and polynomial addition and scalar multiplication operate term-by-term, this linearity trivially holds.
\end{proof}

\subsection{Calculations about Binomial Coefficients}\label{sec:prelim:cal_bio}
We introduce the important facts about calculations in binomial coefficients.
\begin{lemma}[Informal version of Lemma~\ref{lem:append_ratio_of_card_pplus_p} in Section~\ref{sec:append:binom_coeff}
]\label{lem:ratio_of_card_pplus_p}
Let $m,d,k$ be non-negative integers. Assume $d\geq m\geq 0.5k$ and $d-m\ge 0.5k$.  Let $A:= \sum_{j=0}^{k/2}\binom{m}{j}\binom{d-m}{k-j}$.   Let $B:= \frac{1}{2} \binom{d}{k}$.    Let $\Gamma(t) := (1+t)^{d-m}$.  Let $\Delta(t):= \sum_{j=0}^{k/2} \binom{m}{j}t^j -\sum_{j=k/2+1}^m \binom{m}{j} t^j$.  
Then we can show
\begin{itemize}
\item Part 1. 
$
A = \frac{1}{2} \binom{d}{k} +\frac{1}{2} \mathsf{Coeff}_{k}[\Gamma(t) \Delta(t)].
$
\item Part 2.
$
    | \frac{A}{B} - 1 | \leq e^{-\Omega(d)}.
    $
\end{itemize}
\end{lemma}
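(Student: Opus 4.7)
The plan for Part~1 is to run the Vandermonde identity $\binom{d}{k}=\sum_{j=0}^{k}\binom{m}{j}\binom{d-m}{k-j}$ through the coefficient-operator toolkit of Section~\ref{sec:prelim:coefficient}. Splitting this sum at $j=k/2$ and writing $A':=\sum_{j=k/2+1}^{k}\binom{m}{j}\binom{d-m}{k-j}$ gives $A+A'=\binom{d}{k}$, hence $A=\tfrac{1}{2}\binom{d}{k}+\tfrac{1}{2}(A-A')$. It then remains to identify $A-A'$ with $\mathsf{Coeff}_{k}[\Gamma(t)\Delta(t)]$. I would first extend the summation range in $A'$ from $j\leq k$ up to $j\leq m$, which is harmless because $\binom{d-m}{k-j}=0$ whenever $j>k$. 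Next, Fact~\ref{fac:binom_to_tk} followed by Fact~\ref{fac:coeff_op_change_index} rewrites each $\binom{d-m}{k-j}$ as $\mathsf{Coeff}_{k}[\Gamma(t)\,t^{j}]$. Finally, the linearity of the coefficient operator (Fact~\ref{fac:coeff_op_lin}) pulls the $\binom{m}{j}$-weights with their signs inside the $\mathsf{Coeff}_{k}$ operator and collects them into the polynomial $\Delta(t)$, yielding exactly $\mathsf{Coeff}_{k}[\Gamma(t)\Delta(t)]$.

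For Part~2, the plan is to divide the Part~1 identity by $B=\tfrac{1}{2}\binom{d}{k}$ to obtain $\tfrac{A}{B}-1=\mathsf{Coeff}_{k}[\Gamma(t)\Delta(t)]/\binom{d}{k}$, and then control the right-hand side probabilistically. The key observation is that $A/\binom{d}{k}=\Pr[X\leq k/2]$, where $X$ is the hypergeometric random variable counting red balls in an unordered size-$k$ draw from an urn containing $m$ red and $d-m$ blue balls. The target bound thus becomes $|\Pr[X\leq k/2]-\tfrac{1}{2}|\leq \tfrac{1}{2}e^{-\Omega(d)}$. I would prove this by combining two ingredients: a symmetry/median-location argument showing that the median of $X$ sits at $k/2$ up to an exponentially small error, and a Chernoff--Hoeffding-style tail bound for the hypergeometric distribution around its mean $km/d$. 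Together these pin $\Pr[X\leq k/2]$ within $e^{-\Omega(d)}$ of $\tfrac{1}{2}$, which closes the estimate.

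The hard part is calibrating the exponential rate in Part~2 against the relatively loose hypotheses $d\geq m\geq 0.5k$ and $d-m\geq 0.5k$. Achieving an $e^{-\Omega(d)}$ gap between $\Pr[X\leq k/2]$ and $\tfrac{1}{2}$ forces tight control on both the median of $X$ and the central point mass $\Pr[X=k/2]$, quantities that are sensitive to how close $m$ sits to $d/2$. Matching the sharp bounds of Fact~\ref{fac:cardinality} on $\binom{d}{k}$ against the Chernoff--Hoeffding envelope for the hypergeometric, and tracking how the constant hidden in $\Omega(d)$ depends on $m$ and $k$, is where I expect the bulk of the technical bookkeeping to sit.
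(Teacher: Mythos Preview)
Your Part~1 argument is correct and essentially identical to the paper's: both routes use Facts~\ref{fac:binom_to_tk}, \ref{fac:coeff_op_change_index}, \ref{fac:coeff_op_lin} to pull the sum inside $\mathsf{Coeff}_k$. The paper organizes it by first rewriting $\Delta(t)=2\sum_{j\le k/2}\binom{m}{j}t^j-(1+t)^m$ and then computing $A$ directly, whereas you start from the Vandermonde split $A+A'=\binom{d}{k}$; these are the same manipulation in two orders.

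For Part~2, your hypergeometric framing is in fact equivalent to the paper's. The paper (Lemma~\ref{lem:append:negligible_terms}) introduces independent binomials $A'\sim\mathrm{Bin}(d-m,\tfrac12)$ and $B'\sim\mathrm{Bin}(m,\tfrac12)$ and interprets the ratio as $\Pr[B'\le k/2\mid A'+B'=k]-\Pr[B'>k/2\mid A'+B'=k]$; since $(B'\mid A'+B'=k)$ is exactly your hypergeometric $X$, the two pictures coincide. So at the level of strategy you are aligned with the paper, just with a cleaner interpretation.

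There is, however, a genuine gap in your plan, and it is precisely the point you flagged as ``the hard part.'' Your concern about the central point mass $\Pr[X=k/2]$ is not mere bookkeeping; it is a hard obstruction. Take the best-behaved case $m=d/2$ with $k$ even: by the symmetry $X\overset{d}{=}k-X$ one has exactly
\[
\Pr[X\le k/2]-\tfrac12 \;=\; \tfrac12\,\Pr[X=k/2],
\]
and a local limit estimate gives $\Pr[X=k/2]=\Theta(k^{-1/2})$. Hence $|A/B-1|=\Theta(k^{-1/2})$, not $e^{-\Omega(d)}$; e.g.\ for $d=4,\,k=2,\,m=2$ one computes $A=5$, $B=3$, $|A/B-1|=2/3$. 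No median-location or Chernoff--Hoeffding tail bound can beat the size of this central atom, so your proposed ingredients cannot deliver the stated exponential rate from the hypotheses $m\ge 0.5k$, $d-m\ge 0.5k$ alone. The paper's own argument at this step is a one-line appeal to a binomial tail ``deviating from its mean $m/2$ by $\Omega(m)$,'' which sidesteps rather than resolves the same issue: when $k/2$ is near the mean there is no large deviation to invoke. In short, your plan matches the paper's in spirit, but the exponential conclusion in Part~2 is not attainable under the stated hypotheses by either route.
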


\subsection{The Majority Problem}\label{sec:prelim:majority}
In this section, we introduce some definitions about the majority problem.
\begin{definition}[The Majority Function]\label{def:maj_func}
    Let $d\in\mathbb{N}_+$ and $S\subseteq[d]$ be the support of a $d$-dimensional vector. Let $x\in\{\pm1\}^d$ be a $d$-dimensional binary vector. The output of the majority function is defined as:
    \begin{align*}
        \mathsf{MAJ}(x,S):= \begin{cases}
            +1, & 
        \sum_{j\in S}x_j\geq 0; \\
        -1, & 
        \sum_{j\in S}x_j < 0.
        \end{cases}
    \end{align*}
\end{definition}

\begin{definition}[The $k$-Majority Problem]\label{def:maj_prob}
    Let $d \ge k \ge 2$ be positive integers. Let $\S$ denote the set of all $k$-element subsets of $[d]$. Let there be a fixed but unknown support $S \in \S$, which is sampled uniformly at random from the set $\S$. The $k$-majority problem aims to find the underlying $p$ by querying a finite number of $d$-bit inputs: 
    \begin{align*}
        x := (x_j)_{j=1}^d \sim \mathrm{Unif}(\{\pm1\}^d) \in \R^d,
    \end{align*}
    and their corresponding labels $y := \mathsf{MAJ}(x, S)\in\{\pm 1\}$.
\end{definition}

\begin{remark}[Difference between Parity and Majority]\label{rmk:diff_par_maj}
    Let $x_1:=(1,1,\underbrace{-1,\ldots,-1}_{d-2})^\top \in \R^d$ and $x_2 :=(-1,-1,\underbrace{1,\ldots,1}_{d-2})^\top \in \R^d$. Let the support of input vectors be $S = [d]$. We have $\mathsf{PAR}(x_1,S)\neq \mathsf{PAR}(x_2,S)$, while $\mathsf{MAJ}(x_1,S)\neq \mathsf{MAJ}(x_2,S)$. This indicates the inherent difference between the parity problem and the majority problem. 
\end{remark}

\subsection{Loss Function for the Majority Problem}\label{sec:prelim:loss_majority}
In this section, we introduce some definitions about loss functions.

\begin{definition}[Population Loss]\label{def:pop_loss}
    Let $\{f_\theta ~|~ \theta \in \Theta\}$ be any differentiable (with respect to $\theta$) parametric model and let $y$, we define the population loss
    \begin{align*}
        \wt{L}_S(\theta) := \E_{x \sim \mathrm{Unif}(\{\pm1\}^d) } [ (y - f_\theta(x))^2 ],
    \end{align*}
    where $y = \mathsf{MAJ}(x,S) \in \R$ as defined in Definition~\ref{def:maj_prob}. 
\end{definition}

\begin{definition}[Empirical Loss]\label{def:empirical_loss}
  Let $f_{\theta} : \{\pm1\}^d \to \R$ be a differentiable model parameterized with $\theta$.  
  Let there be $n$ i.i.d. empirical samples $\{(x_i, y_i)\}_{i=1}^n$ such that $x_i \sim \mathrm{Unif}(\{\pm1\}^d)$ and $y_i = \mathsf{MAJ}(x_i,S)$. 
  We define the empirical loss as
    \begin{align*}
    L_{n,S}(\theta) 
     :=  \frac{1}{2n} \sum_{i=1}^n (y_i - f_{\theta}(x_i))^2. 
    \end{align*}
    For notation simplicity, we denote the empirical inner product of two functions $f:\R^d\rightarrow \R$ and $g:\R^d\rightarrow \R$ by $\langle f, g \rangle_n := \frac{1}{n} \sum_{i=1}^n f(x_i)g(x_i)$ and the corresponding empirical norm as $\|f\|^2_n := \langle f, f \rangle_n$. Thus, the empirical loss can also be written as: 
    \begin{align*}
        L_{n,S}(\theta) = \frac{1}{2} \|\mathsf{MAJ}(\cdot,S) - f_{\theta}\|_n^2.
    \end{align*}
\end{definition}

\subsection{Transformer Model}\label{sec:prelim:tf_model}
In this subsection, we define the transformer model.
\begin{definition}\label{def:ell_L}
For notation convenience, we define $\ell: = d +k -1$ and define $L:= n + \ell$.
\end{definition}
Here, we introduce the definition of token embedding.

\begin{definition}[Token Embedding]\label{def:token_embed}
    Let $n, d, k \in \mathbb{N}_+$ where $n$ is the number of samples, $d$ is the input dimension, and $k$ is the number of majority bits.
    Given $n$ input samples $\{  x_1, x_2, \cdots, x_n \} \subset \{-1, +1 \}^d$. Recall that $\ell = d+k-1$. 

    We define $\x_j$ in two cases
    \begin{itemize}
    \item
    
    for each $j \in [d]$, we define the token vector $\x_j \in \R^n$ as $ 
        \x_j := \begin{bmatrix} x_{1,j} & x_{2,j} & \cdots & x_{n,j} \end{bmatrix}^\top  
    $
    where $x_{i,j}$ is the $j$-th bit of empirical sample $x_i$,
    
    \item  $\forall j \in \{d+1, \cdots, \ell \}$.  we define the dummy tokens as $\x_j := {\bf 0}_n$.
    \end{itemize}
    
    Each token $\x_j$ is concatenated with a one-hot positional encoding $e_j \in \R^{
    \ell}$ to form. For each $j \in [\ell]$, we define $p_j \in \R^{n + \ell}$ as follows
    $
        p_j := \begin{bmatrix} \x_j \\ e_j \end{bmatrix} 
    $.
    
\end{definition}

We introduce the attention score below.
\begin{definition}[Attention Score]\label{def:attn}
Recall that $\ell$ and $L$ is Defined as Definition~\ref{def:ell_L}. 
     Define $W \in \R^{\ell \times \ell }$ as our parameter matrix. The Query, Key, and Value matrices are defined as size $Q, K, V \in \R^{n \times L}$ . 
    In our specific model, we formulate the key-query product $K^\top Q$ and value matrix $V$ as follows:
    \begin{align*}
        K^\top Q :=  ~ 
        \begin{bmatrix}
        0_{n\times n} & 0_{n\times \ell} \\
        0_{\ell \times n} & W
        \end{bmatrix} \in \R^{ L \times L } , ~~~
        V := ~
        \begin{bmatrix}
        I_{n\times n} & 0_{n\times \ell}
        \end{bmatrix} \in \R^{n \times L }.
    \end{align*}
    This parameterization ensures that attention scores depend only on positional encodings, with the key-query product containing our parameters $W$ in the lower right block.
\end{definition}

\section{Hardness of Majority Problem}\label{sec:main}
In Section~\ref{sec:main:main}, we state two key hardness theorems establishing why finite-sample majority learning is difficult. In particular, we consider both the exponential sample complexity case and the polynomial sample complexity case. In Section~\ref{sec:main:gradient_variance}, we define gradient variance for majority learning and derive tight upper bounds. In Section~\ref{sec:main:def_oracle} we introduce an approximate gradient oracle that masks the choice of the underlying majority bits. In Section~\ref{sec:main:oracle} we prove the guarantees showing that the oracle can conceal the true support with high probability. In Section~\ref{sec:main:l_infty_loss} we show a lower bound for the worst-case error. We show a similar lower bound on the MSE in Section~\ref{sec:main:mse}.

\subsection{Main Theorem}\label{sec:main:main}

Here, we introduce our main theorem.

\begin{theorem}[Main Result I, Hardness of Polynomial-Sample Majority, Informal version of Theorem~\ref{thm:finite_sample_majority_2_append} in Section~\ref{sec:append_main:main}]\label{thm:finite_sample_majority_2}
Let $c = 4c_1 + 4c_2 + 2c_3 + 2c_4 + 1$ where $c_i > 0$ for all $i \in [4]$.  $n = \Omega( d^{c} ) $.
  Let $k = \Theta(d)$.
  $\|\nabla f_{\theta}\|_2 =  O (d^{c_1} )$.
  Let $S \in \S$ be a fixed underlying support.
  Let $T= O(d^{c_3})$. 
Then the following statement is true:
\begin{itemize}
    \item There exists an $O (d^{-c_2} )$-approximate gradient oracle $\wt{\nabla}$
  such that with probability $1 - e^{-\Omega(d)}$ over the random choice of samples, 
  for \emph{any} (possibly randomized) iterative algorithm ${\cal A}$ making at most $T$ queries 
  to $\wt{\nabla}L_n$, the output $\theta({\cal A})$ satisfies
    \begin{align*}
       \E_{p,x} [ (\mathsf{MAJ}(x,S) - f_{\theta({\cal A})}(x) )^2 ]
     \geq  1 - O (d^{-c_4} ).
    \end{align*} 
\end{itemize}
\end{theorem}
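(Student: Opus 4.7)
The plan is to establish a statistical-query style lower bound by constructing an approximate gradient oracle $\wt\nabla L_n$ whose output is completely independent of the hidden support $S$. If such an oracle is $O(d^{-c_2})$-close to the true empirical gradient along the algorithm's trajectory with probability $1-e^{-\Omega(d)}$, then the iterates of any $T$-query algorithm $\mathcal{A}$ can be simulated without ever observing $S$, forcing the output $\theta(\mathcal{A})$ to be essentially independent of $S$. Consequently $\theta(\mathcal{A})$ cannot encode enough information about $S$ to approximate $\mathsf{MAJ}(\cdot, S)$ better than a trivial predictor, and an orthogonality-style lower bound over a uniform choice of $S \in \S$ yields the claimed $1 - O(d^{-c_4})$ generalization gap.

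\paragraph{Gradient variance and oracle design.} The central technical step is to bound $\mathrm{Var}_{S\sim\mathrm{Unif}(\S)}[\nabla L_{n,S}(\theta)]$. Writing
\begin{align*}
\nabla L_{n,S}(\theta) = -\frac{1}{n}\sum_{i=1}^n \bigl(\mathsf{MAJ}(x_i,S) - f_\theta(x_i)\bigr) \nabla f_\theta(x_i)
\end{align*}
and using $\|\nabla f_\theta(x)\|_2 = O(d^{c_1})$, the variance reduces to controlling empirical correlations of the form $\E_{S,S'}[\mathsf{MAJ}(x_i,S)\,\mathsf{MAJ}(x_j,S')]$. I would invoke Lemma~\ref{lem:ratio_of_card_pplus_p} together with the coefficient-operator identities (Facts~\ref{fac:binom_to_tk},~\ref{fac:coeff_op_change_index},~\ref{fac:coeff_op_lin}) to show that majorities with distinct supports are $(1 - e^{-\Omega(d)})$-orthogonal in the population, and then a standard empirical-to-population transfer gives $\mathrm{Var}_S[\nabla L_{n,S}(\theta)] = O(d^{2c_1}/n) + e^{-\Omega(d)}$. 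I then define $\wt\nabla L_n(\theta) := \E_{S'\sim\mathrm{Unif}(\S)}[\nabla L_{n,S'}(\theta)]$, which is $S$-independent by construction, and by Chebyshev this estimator is within $O(d^{-c_2})$ of the true $\nabla L_{n,S}(\theta)$ at any fixed $\theta$ with polynomially small failure probability; a union bound over the $T = O(d^{c_3})$ queries, combined with $n = \Omega(d^{c})$ and the chosen exponent $c = 4c_1+4c_2+2c_3+2c_4+1$, certifies that $\wt\nabla$ is a valid $O(d^{-c_2})$-approximate oracle throughout the full execution of $\mathcal{A}$ with total failure probability $e^{-\Omega(d)}$.

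\paragraph{From oblivious trajectory to error lower bound.} Conditioned on $\wt\nabla$ acting as the oracle throughout, the iterates $\theta_0,\theta_1,\ldots,\theta_T$ are measurable with respect to the training inputs $\{x_i\}_{i\in[n]}$ and the internal randomness of $\mathcal{A}$ alone, and are therefore independent of $S$. Using Lemma~\ref{lem:clipping}, I may replace $f_{\theta(\mathcal{A})}$ by its clipped version $\ov f_{\theta(\mathcal{A})}$ at no cost in squared error. For any fixed $\ov f : \{\pm1\}^d \to [-1,1]$ and $S$ uniform on $\S$, expanding
\begin{align*}
\E_x\bigl[(\mathsf{MAJ}(x,S) - \ov f(x))^2\bigr] = 1 - 2\E_x[\mathsf{MAJ}(x,S)\,\ov f(x)] + \E_x[\ov f(x)^2]
\end{align*}
and averaging over $S$, the cross-term is controlled to $O(d^{-c_4})$ via another application of Lemma~\ref{lem:ratio_of_card_pplus_p}, so that $\E_S[\E_x[(\mathsf{MAJ}(x,S)-\ov f(x))^2]] \geq 1 - O(d^{-c_4})$. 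Since $\ov f_{\theta(\mathcal{A})}$ is independent of $S$ (conditional on the high-probability good event), this bound transfers to a high-probability statement over the random draw of $S$, yielding the theorem.

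\paragraph{Main obstacle.} The most delicate step is the joint variance-plus-union-bound estimate: the variance $\mathrm{Var}_S[\nabla L_{n,S}(\theta)] \lesssim d^{2c_1}/n$ must be small enough that a Chebyshev-style tail of order $d^{-c_2}$ survives a union bound over $T = d^{c_3}$ queries while still leaving slack to push the final MSE to within $d^{-c_4}$ of one. This is precisely what dictates the sample-count exponent $c = 4c_1 + 4c_2 + 2c_3 + 2c_4 + 1$: the factors of $4$ absorb Chebyshev's quadratic loss relative to the moment we actually control, the $2c_3$ pays for the union bound, the $2c_4$ preserves slack in the final error expansion, and the additive $1$ covers logarithmic overheads arising from the empirical-to-population passage in the variance calculation.
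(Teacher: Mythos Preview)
Your overall architecture matches the paper's: construct an $S$-oblivious oracle equal to the mean gradient $\ov\nabla(\theta) = \E_{S'}\nabla L_{n,S'}(\theta)$, argue via a variance bound and Chebyshev plus a union bound over $T$ queries that this is a valid $O(d^{-c_2})$-approximate oracle, and then lower-bound the MSE by expanding the square and controlling the cross-term through $\E_S[\mathsf{MAJ}(x,S)] \approx 0$. The final step is essentially the paper's Theorem~\ref{thm:mse_bound}, and your use of clipping is the same.

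The gap is in your variance bound. You assert that the variance ``reduces to controlling empirical correlations of the form $\E_{S,S'}[\mathsf{MAJ}(x_i,S)\,\mathsf{MAJ}(x_j,S')]$'' and that Lemma~\ref{lem:ratio_of_card_pplus_p} shows ``majorities with distinct supports are $(1-e^{-\Omega(d)})$-orthogonal in the population.'' Neither claim holds. First, the variance is over a \emph{single} random $S$, so expanding $\E_S\|\frac{1}{n}\sum_i h_S(x_i)\nabla f_\theta(x_i)\|^2$ produces terms $\E_S[h_S(x_i)h_S(x_j)]$, not a product over independent $S,S'$. Second, and more seriously, the population inner product $\E_x[h_{S_1}(x)h_{S_2}(x)]$ is \emph{not} exponentially small for distinct $S_1\ne S_2$: when $|S_1\cap S_2|=k-1$ the two majorities agree on almost all $x$, so the correlation is $1-o(1)$. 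Lemma~\ref{lem:ratio_of_card_pplus_p} concerns $\E_S[h_S(x)]$ for a fixed $x$ (averaging over supports), which is a different object and does not yield pairwise orthogonality of the functions $\{h_S\}$. Consequently your claimed bound $\mathrm{Var}_S = O(d^{2c_1}/n)$ is unsupported, and indeed it would not reproduce the stated exponent $c=4c_1+4c_2+2c_3+2c_4+1$.

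The paper circumvents this by working entirely at the empirical level: it applies Hoeffding (Lemma~\ref{lem:hoeffding}) to each empirical inner product $\langle h_{S_1},h_{S_2}\rangle_n$ over the random draw of the samples $x_1,\dots,x_n$, obtaining $|\langle h_{S_1},h_{S_2}\rangle_n|\le 2\sqrt{d/n}$ uniformly over $S_1\ne S_2$ with probability $1-e^{-\Omega(d)}$, and then bounds the largest eigenvalue of the empirical Gram matrix $G_\S=(\langle h_{S_1},h_{S_2}\rangle_n)$ (Fact~\ref{fac:grad_var_tool}, Lemma~\ref{lem:grad_var}). This yields $\mathrm{Var}_n(\theta;\S)\le 2\sqrt{d/n}\,\sup_{\theta,x}\|\nabla f_\theta(x)\|_2^2$, and it is precisely the $\sqrt{d/n}$ (rather than $1/n$) scaling that, after Chebyshev with $\epsilon=O(d^{-c_2})$ and a union bound over $T=O(d^{c_3})$ queries, forces $n=\Omega(d^{4c_1+4c_2+2c_3+2c_4+1})$. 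Your ``additive $1$'' is not logarithmic overhead but exactly this residual $\sqrt{d}$.
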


\begin{theorem}[Main Result II, Hardness of Exponential-Sample Majority, Informal version of Theorem~\ref{thm:finite_sample_majority_1_append} in Section~\ref{sec:append_main:main}]\label{thm:finite_sample_majority_1} 
Let $n = e^{\Omega(d)}$. Let $k = \Theta(d)$.
 Let $\|\nabla f_{\theta}\|_2 =  \poly(d)$. 
  Let $S \in \S$ be a fixed underlying support. Let $T = \poly(d)$. 
 
Then the following statement is true:
\begin{itemize}
    \item There exists an $e^{-\Omega(d)}$-approximate gradient oracle $\wt{\nabla}$ 
  such that with probability $1 - e^{-\Omega(d)}$ over the random choice of samples, 
  for \emph{any} (possibly randomized) iterative algorithm ${\cal A}$ that makes at most $T$ queries 
  to $\wt{\nabla} L_n$, the output $\theta({\cal A})$ satisfies
  \begin{align*}
      \E_{p,x} [ (\mathsf{MAJ}(x,S) - f_{\theta({\cal A})}(x) )^2 ]
     \geq  1 - e^{-\Omega(d)}.
  \end{align*}
\end{itemize}
\end{theorem}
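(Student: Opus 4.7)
The plan is to adapt a statistical query (SQ)-style indistinguishability argument to the gradient-descent setting: construct a fixed approximate gradient oracle $\wt{\nabla} L_n$ that is $S$-independent but agrees with $\nabla L_{n,S}$ up to $e^{-\Omega(d)}$ error with high probability over $S$, argue that $\poly(d)$ oracle queries cannot reveal the underlying support, and then use the clipping lemma together with a symmetry computation to show the squared error is forced to $1 - e^{-\Omega(d)}$.

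\textbf{Step 1 (Oracle construction).} Define the $S$-symmetrized oracle $\wt{\nabla} L_n(\theta) := \E_{S' \sim \mathrm{Unif}(\S)}[\nabla L_{n,S'}(\theta)]$. This is a deterministic function of $\theta$ and the sample set, and carries no information about the true support $S$.

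\textbf{Step 2 (Gradient variance).} The heart of the argument is to show that for any $\theta$ (with $\|\nabla f_\theta\|_2 = \poly(d)$) and for a $1 - e^{-\Omega(d)}$ fraction of supports $S \in \S$, we have $\|\wt{\nabla} L_n(\theta) - \nabla L_{n,S}(\theta)\|_2 \leq e^{-\Omega(d)}$. Expanding $\nabla L_{n,S}(\theta) = \tfrac{1}{n}\sum_i (f_\theta(x_i) - \mathsf{MAJ}(x_i,S))\nabla f_\theta(x_i)$, the only $S$-dependent piece is $\mathsf{MAJ}(x_i,S)$, and by Chebyshev it suffices to bound $\mathrm{Var}_{S}[\mathsf{MAJ}(x,S)]$. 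A standard counting argument writes this variance in terms of sums $\sum_{j}\binom{m}{j}\binom{d-m}{k-j}$ (where $m$ counts the $+1$ coordinates of $x$), which by Lemma~\ref{lem:ratio_of_card_pplus_p} are $e^{-\Omega(d)}$-close to $\tfrac{1}{2}\binom{d}{k}$. Hence $\mathrm{Var}_S[\mathsf{MAJ}(x,S)] \leq e^{-\Omega(d)}$ for typical $x$; combined with the law of large numbers across the $n = e^{\Omega(d)}$ samples and the $\poly(d)$ bound on $\|\nabla f_\theta\|_2$, the polynomial factors are absorbed into the exponential slack because $k=\Theta(d)$.

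\textbf{Step 3 (Indistinguishability via coupling).} Couple the ``true'' run of $\mathcal{A}$ on $\nabla L_{n,S}$ with the ``simulated'' run on $\wt{\nabla} L_n$. At each of the $T = \poly(d)$ queries, step 2 gives failure probability $e^{-\Omega(d)}$; a union bound shows the two runs agree with probability $1 - T \cdot e^{-\Omega(d)} = 1 - e^{-\Omega(d)}$. Consequently, the distribution of $\theta(\mathcal{A})$ is, up to $e^{-\Omega(d)}$ total-variation distance, independent of $S$.

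\textbf{Step 4 (Error lower bound).} By Lemma~\ref{lem:clipping}, it suffices to prove the lower bound for the clipped predictor $\ov{f}_{\theta(\mathcal{A})} \in [-1,1]$. Since $y = \mathsf{MAJ}(x,S) \in \{\pm1\}$,
\begin{align*}
    \E_{x,S}[(\mathsf{MAJ}(x,S) - \ov{f}_{\theta(\mathcal{A})}(x))^2] = 1 + \E[\ov{f}^2] - 2\,\E_{x,S}[\mathsf{MAJ}(x,S)\,\ov{f}_{\theta(\mathcal{A})}(x)].
\end{align*}
Using step 3 to treat $\ov{f}_{\theta(\mathcal{A})}$ as (approximately) independent of $S$, we factor the cross term as $\E_x[\E_S[\mathsf{MAJ}(x,S)]\,\ov{f}(x)]$. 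The inner expectation $\E_S[\mathsf{MAJ}(x,S)]$ depends only on the fraction $m/d$ of $+1$ entries of $x$ and, by the same combinatorial identity in Lemma~\ref{lem:ratio_of_card_pplus_p}, is $e^{-\Omega(d)}$ for a $1 - e^{-\Omega(d)}$ fraction of $x$. Since $|\ov{f}| \leq 1$, the cross term is therefore $e^{-\Omega(d)}$ in absolute value, yielding an average lower bound of $1 - e^{-\Omega(d)}$ and, by an averaging argument picking the worst $S$, the advertised bound for a fixed $S$.

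\textbf{Main obstacle.} The crux is step 2: turning the combinatorial closeness in Lemma~\ref{lem:ratio_of_card_pplus_p} into an $e^{-\Omega(d)}$ uniform variance bound on $\nabla L_{n,S}(\theta)$ across all iterates visited by $\mathcal{A}$. One has to absorb the $\poly(d)$ factor from $\|\nabla f_\theta\|_2$, an additional $\poly(d)$ union bound over $T$ queries, and the sampling noise from the $n$ training examples, all while keeping the loss landscape sufficiently ``flat in $S$'' that the symmetrized oracle genuinely fools the algorithm; this is where the assumption $k = \Theta(d)$ (which makes the binomial-ratio deviation truly exponentially small) is indispensable.
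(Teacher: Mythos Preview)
Your overall architecture (Steps 1, 3, 4) matches the paper's, but Step 2 contains a genuine error that breaks the argument. You claim that Lemma~\ref{lem:ratio_of_card_pplus_p} yields $\mathrm{Var}_S[\mathsf{MAJ}(x,S)] \leq e^{-\Omega(d)}$ for typical $x$, but the lemma says the opposite thing: it shows $|\S_+|/|\S|$ is $e^{-\Omega(d)}$-close to $\tfrac12$, i.e.\ $|\E_S[\mathsf{MAJ}(x,S)]| \leq e^{-\Omega(d)}$. Since $\mathsf{MAJ}(x,S)\in\{\pm1\}$, this forces
\[
\mathrm{Var}_S[\mathsf{MAJ}(x,S)] \;=\; 1 - \bigl(\E_S[\mathsf{MAJ}(x,S)]\bigr)^2 \;=\; 1 - e^{-\Omega(d)},
\]
so the per-$x$ variance is essentially $1$, not exponentially small. (You use the lemma correctly in Step 4 to kill the cross term $\E_S[\mathsf{MAJ}(x,S)]\,\ov f(x)$; it is only the variance claim in Step 2 that is wrong.) As a consequence, the $S$-dependent piece $\tfrac1n\sum_i \mathsf{MAJ}(x_i,S)\nabla f_\theta(x_i)$ does \emph{not} concentrate around its $S$-mean termwise, and the ``Chebyshev on $\mathrm{Var}_S[\mathsf{MAJ}(x,S)]$'' route cannot deliver the oracle accuracy you need.

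The paper's Lemma~\ref{lem:grad_var} controls the gradient variance by a genuinely different mechanism. It rewrites $\mathrm{Var}_n(\theta;\S)\le \tfrac{1}{|\S|}\sum_{S}\sum_j \langle \nabla_{\theta_j} f_\theta,\, h_S\rangle_n^2$ and bounds this via the top eigenvalue of the Gram matrix $G_\S=(\langle h_{S_1},h_{S_2}\rangle_n)_{S_1,S_2\in\S}$ (Fact~\ref{fac:grad_var_tool}). The off-diagonal entries $\langle h_{S_1},h_{S_2}\rangle_n$ are shown to be $O(\sqrt{d/n})$ by Hoeffding \emph{over the $n$ random samples $x_i$}, not over $S$; this gives $\lambda_{\max}(G_\S)/|\S|\le 2\sqrt{d/n}$ and hence $\mathrm{Var}_n(\theta;\S)\le 2\sqrt{d/n}\,\sup_{\theta,x}\|\nabla f_\theta(x)\|_2^2$. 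It is precisely here---and only here---that the assumption $n=e^{\Omega(d)}$ does work, turning a $\sqrt{d/n}\cdot\poly(d)$ bound into $e^{-\Omega(d)}$. Your Step 2 invokes ``the law of large numbers across the $n$ samples'' only in passing and never identifies this empirical-orthogonality / frame argument, which is the real source of concentration.
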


Theorem~\ref{thm:finite_sample_majority_2} shows that when training with a polynomial number of samples, any differentiable parametric model, including the transformer, cannot learn the majority function, i.e., the generalization error is large with high probability. 
Furthermore, in Theorem~\ref{thm:finite_sample_majority_1}, we can extend our results to the exponential number of training samples case with a new corresponding error bound. 

\subsection{Bounding the Gradient Variance}\label{sec:main:gradient_variance}
In this section, we introduce some results about bounding the gradient variance.

\begin{definition}[Gradient Variance]\label{def:grad_var}
    Let $\S$ be the set of all the $k$-size support sets over $\{-1,1\}^d$, and $S\in\S$ be one fixed underlying support. Let $L_{n,S}(\theta)$ be the empirical loss from Definition~\ref{def:empirical_loss} for learning majority function $\mathsf{MAJ}(\cdot, S)$ with model parameters $\theta \in \Theta$, and $n$ be the number of training samples.
    We define the gradient variance as follows:
    \begin{align*}
        \mathrm{Var}_n(\theta; \S) 
        :=  \E_{S_1\in \S}[\|\nabla L_{n,S_1}(\theta) - \E_{S_2\in \S}[\nabla L_{n,S_2}(\theta)]\|_2^2]. 
    \end{align*}
\end{definition}

\begin{fact}[Informal version of Fact~\ref{fac:grad_var_tool_append} in Section~\ref{sec:append_main:gradient_variance}]\label{fac:grad_var_tool}
    If the following conditions hold:
        \begin{itemize}
            \item Let $\S$ be the set of all the $k$-size support sets over $\{\pm 1\}^d$.
            \item Let there be $n$ fixed empirical samples $\{(x_i, y_i)\}_{i=1}^n$ as defined in Definition~\ref{def:empirical_loss}. 
            \item Let the empirical inner product $\langle \cdot, \cdot \rangle_n$ empirical norm $\| \cdot \|_n$ as defined in Definition~\ref{def:empirical_loss}. 
            \item Let $h_S:=\maj(x,S)$ denote the majority function in Definition~\ref{def:maj_func} for any $S\in\S$.
            \item Let the gram matrix be $G_{\S}:=(\langle  h_{S_1},h_{S_2}\rangle_n)_{S_1,S_2\in \S}$, in which the largest eigenvalue $\lambda_{\max}(G_{\S})$ is upper bounded. 
        \end{itemize}
    The following statement is true:
    $
        \sum_{S\in\S}\langle f, h_S\rangle_n^2 \leq \lambda_{\max}(G_{\S}) \|f\|_n^2.
    $
\end{fact}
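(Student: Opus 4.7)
The plan is to recognize the statement as a standard Gram-matrix / Rayleigh-quotient inequality in disguise, and to reduce it to a short linear-algebraic fact relating an $|\S|\times n$ matrix to its two natural Gram products.

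First I would pass from functions on $\{\pm 1\}^d$ to Euclidean vectors on the $n$ fixed samples. For each $S\in\S$, define $u_S\in\R^n$ by $(u_S)_i := h_S(x_i)/\sqrt{n}$, and define $v\in\R^n$ by $v_i := f(x_i)/\sqrt{n}$. Under this identification the empirical inner product becomes the ordinary Euclidean inner product, so $\langle f,h_S\rangle_n = u_S^\top v$ and $\|f\|_n^2 = \|v\|_2^2$. Likewise $(G_\S)_{S_1,S_2} = \langle h_{S_1},h_{S_2}\rangle_n = u_{S_1}^\top u_{S_2}$, so the Gram matrix lives naturally on top of these rescaled sample vectors.

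Next I would stack the $u_S$ as the rows of a matrix $U\in\R^{|\S|\times n}$. Then
\begin{align*}
\sum_{S\in\S}\langle f,h_S\rangle_n^2
= \sum_{S\in\S}(u_S^\top v)^2
= \|Uv\|_2^2
= v^\top (U^\top U)\, v,
\end{align*}
while by construction $G_\S = UU^\top$. The key observation is the standard SVD fact that $U^\top U$ and $UU^\top$ share the same nonzero eigenvalues, so $\lambda_{\max}(U^\top U) = \lambda_{\max}(G_\S)$. Applying the Rayleigh-quotient bound to the symmetric PSD matrix $U^\top U$ then gives
\begin{align*}
v^\top (U^\top U)\, v \le \lambda_{\max}(U^\top U)\cdot \|v\|_2^2 = \lambda_{\max}(G_\S)\cdot \|f\|_n^2,
\end{align*}
which is exactly the desired inequality.

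No step is genuinely hard; the only piece requiring care is the bookkeeping between the two ambient dimensions $n$ and $|\S|$, since $|\S|=\binom{d}{k}$ can be vastly larger than $n$. The identity $\lambda_{\max}(U^\top U)=\lambda_{\max}(UU^\top)$ is precisely what lets us discard the large dimension cleanly, so once the matrix reformulation is in place the proof collapses to essentially one line.
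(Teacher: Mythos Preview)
Your proof is correct, and it takes a genuinely different route from the paper's. The paper argues via an orthogonal decomposition: it writes $f=\sum_{S}\alpha_S h_S + f_0$ with $f_0\perp\operatorname{span}\{h_S\}$, then shows $\|f\|_n^2\ge \|G_\S^{1/2}\alpha\|_2^2$ and $\sum_S\langle f,h_S\rangle_n^2=\|G_\S\alpha\|_2^2\le \|G_\S^{1/2}\|^2\|G_\S^{1/2}\alpha\|_2^2$, combining the two. Your approach instead embeds everything in $\R^n$, forms the $|\S|\times n$ matrix $U$ with $G_\S=UU^\top$, and invokes the SVD fact $\lambda_{\max}(U^\top U)=\lambda_{\max}(UU^\top)$ together with a single Rayleigh-quotient bound. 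Your argument is shorter and sidesteps the mild nuisance that the coefficient vector $\alpha$ in the paper's decomposition is not unique when $|\S|>n$; the paper's framing, on the other hand, connects the statement to frame theory (it explicitly calls $\{h_S\}$ a ``partial frame''), which may be conceptually useful elsewhere but is not needed for the inequality itself.
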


Then we give the important bound on gradient variance for majority learning.

\begin{lemma}[Bound on Gradient Variance for Majority Learning, Formal version of Lemma~\ref{lem:grad_var_append} in Section~\ref{sec:append_main:gradient_variance}]\label{lem:grad_var}
    Let $\mathrm{Var}_n(\theta; \S)$ be the gradient variance as defined in Definition~\ref{def:grad_var}. Then, with probability at least $1 - e^{-\Omega(d)}$ over random sampling, the variance of the gradient of the empirical loss with respect to the target majority is bounded as:
    $
        \mathrm{Var}_n(\theta; \S) 
        \leq   2\sqrt{\frac{d}{n}} \sup_{\theta,x} \|\nabla f_\theta(x)\|_2^2.
    $
\end{lemma}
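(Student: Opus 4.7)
The plan is to exploit the quadratic structure of the empirical loss so that its gradient decomposes into an $S$-dependent part and an $S$-independent part, reducing the variance bound to a spectral question about a random Gram matrix $G_\S$ that Fact~\ref{fac:grad_var_tool} can handle.

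First, I would expand
\begin{align*}
\nabla L_{n,S}(\theta) = -\langle h_S - f_\theta, \nabla f_\theta\rangle_n = -\langle h_S, \nabla f_\theta\rangle_n + \langle f_\theta, \nabla f_\theta\rangle_n.
\end{align*}
The second term is independent of $S$, so it cancels in $\nabla L_{n,S_1}(\theta) - \E_{S_2}[\nabla L_{n,S_2}(\theta)]$. The variance-is-bounded-by-second-moment inequality (applied coordinate-wise to the gradient vector) then yields
\begin{align*}
\mathrm{Var}_n(\theta; \S) \leq \E_{S_1\in\S}\bigl[\|\langle h_{S_1}, \nabla f_\theta\rangle_n\|_2^2\bigr] = \frac{1}{|\S|}\sum_{S\in\S}\|\langle h_S, \nabla f_\theta\rangle_n\|_2^2.
\end{align*}

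Next, treating each component of $\nabla f_\theta$ as a scalar function $x\mapsto (\nabla f_\theta(x))_j$, I apply Fact~\ref{fac:grad_var_tool} to each coordinate and sum over $j$ to obtain
\begin{align*}
\sum_{S\in\S}\|\langle h_S, \nabla f_\theta\rangle_n\|_2^2 \leq \lambda_{\max}(G_\S)\cdot\frac{1}{n}\sum_{i=1}^n\|\nabla f_\theta(x_i)\|_2^2 \leq \lambda_{\max}(G_\S)\cdot\sup_{\theta,x}\|\nabla f_\theta(x)\|_2^2.
\end{align*}
It therefore suffices to show that $\lambda_{\max}(G_\S)/|\S| \leq 2\sqrt{d/n}$ with probability $1-e^{-\Omega(d)}$.

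For the spectral bound, I would use that the diagonal entries of $G_\S$ are $1$ and the off-diagonal entry for $S_1\neq S_2$ is the empirical average $\langle h_{S_1}, h_{S_2}\rangle_n = \frac{1}{n}\sum_i \mathsf{MAJ}(x_i,S_1)\mathsf{MAJ}(x_i,S_2)$ of bounded i.i.d.\ random variables. Its expectation equals $2\Pr_x[\mathsf{MAJ}(x,S_1)=\mathsf{MAJ}(x,S_2)]-1$, which, via a partitioning over $|S_1\cap S_2|$, reduces to sums of the form handled by Lemma~\ref{lem:ratio_of_card_pplus_p}; that lemma gives $|\E[\langle h_{S_1},h_{S_2}\rangle_n]| = e^{-\Omega(d)}$. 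Hoeffding's inequality combined with a union bound over the at most $|\S|^2$ pairs (with $\log|\S| = O(d)$) shows that every off-diagonal entry is bounded by $O(\sqrt{d/n})$ with probability $1-e^{-\Omega(d)}$. A Gerschgorin-disk argument then yields $\lambda_{\max}(G_\S)\leq 1+(|\S|-1)\cdot O(\sqrt{d/n})$, so $\lambda_{\max}(G_\S)/|\S|\leq 2\sqrt{d/n}$ for large $|\S|$, finishing the proof.

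The hardest part will be extracting the $\sqrt{d/n}$ rate rather than the naive matrix-concentration rate $\sqrt{|\S|/n}$, which is useless since $|\S|=\binom{d}{k}$ is exponentially large in $d$. The key is that Lemma~\ref{lem:ratio_of_card_pplus_p} forces the \emph{expected} off-diagonal entry to be exponentially small, so the concentration rate (not the mean) dictates the bound; one then pays only a $\log|\S|=O(d)$ factor in the union bound, which is precisely what gives the $\sqrt{d/n}$ scaling.
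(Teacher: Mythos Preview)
Your overall plan mirrors the paper's proof: decompose $\nabla L_{n,S}(\theta)$ into an $S$-dependent and an $S$-independent part, bound the variance by the second moment of the $S$-dependent part, apply Fact~\ref{fac:grad_var_tool} coordinatewise, and reduce everything to showing $\lambda_{\max}(G_\S)/|\S|\le 2\sqrt{d/n}$ with high probability. The only structural difference is cosmetic (you use a Gerschgorin row-sum bound where the paper uses $\lambda_{\max}(G_\S)\le\|G_\S\|_F$).

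The genuine gap is in how you control the off-diagonal Gram entries. Lemma~\ref{lem:ratio_of_card_pplus_p} bounds the cardinality ratio $|\{S:\mathsf{MAJ}(x,S)=1\}|/|\S|$ for a \emph{fixed} $x$ and \emph{random} $S$ (this is exactly how it is used in Lemma~\ref{lem:ppp}); it does not address $\E_x[h_{S_1}(x)h_{S_2}(x)]$ for fixed distinct $S_1,S_2$, and no ``partitioning over $|S_1\cap S_2|$'' reduces the latter to the former. In fact the bound you claim is false: when $|S_1\cap S_2|=k-1$ the two majorities disagree only when the shared $(k-1)$-bit partial sum sits exactly at the threshold, so $\E_x[h_{S_1}(x)h_{S_2}(x)]=1-\Theta(k^{-1/2})$, not $e^{-\Omega(d)}$. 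This breaks your Hoeffding-plus-union-bound step and hence the Gerschgorin conclusion. The paper does not route through Lemma~\ref{lem:ratio_of_card_pplus_p} at this point; it instead asserts that $a_i:=h_{S_1}(x_i)h_{S_2}(x_i)$ is i.i.d.\ $\mathrm{Unif}(\{\pm1\})$ (hence mean zero) and applies Hoeffding directly with $\delta=2\sqrt{d/n}$. To match the paper you should adopt that assertion rather than invoking Lemma~\ref{lem:ratio_of_card_pplus_p}---though you may want to note that the uniformity claim is inherited verbatim from the parity analysis (where $h_{S_1}h_{S_2}=h_{S_1\triangle S_2}$ really is a uniform bit) and the same $|S_1\cap S_2|=k-1$ example shows it does not hold literally for majority.
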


\begin{corollary}\label{cor:grad_var}
    Let $\mathrm{Var}_n(\theta; \S)$ be the gradient variance as defined in Definition~\ref{def:grad_var}. Then for any $\epsilon > 0$, we have: 
    \begin{align*}
        \Pr[\|\nabla L_{n,S_1}(\theta) - \E_{S_2\in \S}[\nabla L_{n,S_2}(\theta)]\|_2 > \epsilon] \leq 2\epsilon^{-2} \sqrt{\frac{d}{n}} \sup_{\theta,x} \|\nabla f_\theta(x)\|_2^2.
    \end{align*}
\end{corollary}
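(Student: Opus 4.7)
The plan is to derive Corollary~\ref{cor:grad_var} as a direct tail-bound consequence of Lemma~\ref{lem:grad_var} using Markov's inequality applied to the squared deviation, treating $S_1$ as the random variable and viewing the event $\{\|\nabla L_{n,S_1}(\theta) - \E_{S_2\in\S}[\nabla L_{n,S_2}(\theta)]\|_2 > \epsilon\}$ as a tail event for the corresponding squared norm.

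Concretely, I would first note the elementary equivalence
\begin{align*}
\Pr_{S_1\in\S}\bigl[\|\nabla L_{n,S_1}(\theta) - \E_{S_2\in\S}[\nabla L_{n,S_2}(\theta)]\|_2 > \epsilon\bigr] = \Pr_{S_1\in\S}\bigl[\|\nabla L_{n,S_1}(\theta) - \E_{S_2\in\S}[\nabla L_{n,S_2}(\theta)]\|_2^2 > \epsilon^2\bigr],
\end{align*}
since both sides of the inner inequality are nonnegative. Next, I would apply Markov's inequality to the nonnegative random variable $Z := \|\nabla L_{n,S_1}(\theta) - \E_{S_2\in\S}[\nabla L_{n,S_2}(\theta)]\|_2^2$, giving
\begin{align*}
\Pr_{S_1\in\S}[Z > \epsilon^2] \leq \frac{\E_{S_1\in\S}[Z]}{\epsilon^2}.
\end{align*}
By Definition~\ref{def:grad_var}, the numerator $\E_{S_1\in\S}[Z]$ is exactly $\mathrm{Var}_n(\theta;\S)$.

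Finally, I would invoke Lemma~\ref{lem:grad_var} to upper bound $\mathrm{Var}_n(\theta;\S)$ by $2\sqrt{d/n}\cdot\sup_{\theta,x}\|\nabla f_\theta(x)\|_2^2$ (which holds with probability $1-e^{-\Omega(d)}$ over the sample draw, so the resulting statement inherits this high-probability qualifier on the random samples defining $L_{n,\cdot}$). Chaining these three steps yields
\begin{align*}
\Pr_{S_1\in\S}\bigl[\|\nabla L_{n,S_1}(\theta) - \E_{S_2\in\S}[\nabla L_{n,S_2}(\theta)]\|_2 > \epsilon\bigr] \leq \frac{2\sqrt{d/n}\sup_{\theta,x}\|\nabla f_\theta(x)\|_2^2}{\epsilon^2},
\end{align*}
which is the desired inequality. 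There is no substantive obstacle here; the only subtlety worth flagging is that the bound in Lemma~\ref{lem:grad_var} is itself a high-probability statement over sample randomness, so the corollary should be read as holding on the same $1-e^{-\Omega(d)}$ probability event, with the probability $\Pr[\cdot]$ in the statement referring exclusively to the randomness of $S_1$ over $\S$.
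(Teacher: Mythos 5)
Your proof is correct and matches the paper's own argument: the paper cites Chebyshev's inequality and Lemma~\ref{lem:grad_var}, and your Markov-on-the-squared-deviation step is precisely Chebyshev. Your additional remark that the bound inherits the $1-e^{-\Omega(d)}$ sample-draw qualifier from Lemma~\ref{lem:grad_var} is a correct and useful clarification that the paper leaves implicit.
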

\begin{proof}    
    The result directly follows from the Chebyshev's inequality and Lemma~\ref{lem:grad_var}. 
\end{proof}

\subsection{Definition of Gradient Oracle}\label{sec:main:def_oracle}
In this section, we introduce some basic definitions of the gradient oracle.

\begin{definition}[Mean Gradient]\label{def:mean_grad}
    We use $\ov{\nabla}(\theta)  := \E_{S \in \S} [\nabla L_{n,S}(\theta) ]$ to denote the mean gradient for all $S\in \S$.
\end{definition}

\begin{definition}[$\epsilon$-Approximate Gradient Oracle in \cite{s18}]
\label{def:approx_grad_oracle}
We recall that $L_{n,S}(\theta)$ is our empirical loss depending on a majority function $\mathsf{MAJ}(\cdot,S), S\in \S$ and parameters $\theta$. Fix $\epsilon>0$. For any $\theta$ and any target support $S\in \S$, define 
\begin{align*}
       \wt{\nabla} L_{n,S}(\theta) := 
   \begin{cases}
      \ov{\nabla}(\theta) ,
      & \mathrm{if~}  \|\nabla L_{n,S}(\theta)  -  \ov{\nabla}(\theta) \|_2 \geq  \epsilon;\\
      \nabla L_{n,S}(\theta),
      & \mathrm{otherwise}.
   \end{cases}
\end{align*}
\end{definition}

\begin{remark}
    In other words, if the true gradient $\nabla L_{n,S}(\theta)$ deviates from the mean gradient by more than $\epsilon$, then we truncate to the mean gradient; if it is within $\epsilon$, we return the true gradient.
\end{remark}

Intuitively, this construction ensures that the oracle only reveals the true gradient when it is already close to the overall average. Whenever $\nabla L_{n,S}(\theta)$ contains ``too much'' information about the specific choice of $S$, the oracle reverts to returning the mean gradient $\ov{\nabla}(\theta)$, thus concealing the identity of $S$.

\subsection{Guarantee of the Oracle}\label{sec:main:oracle}
In this section, we introduce some important properties of the oracle.  
\begin{lemma}[Guarantee of the Oracle, Informal version of Lemma~\ref{lem:privacy_oracle_append} in Section~\ref{sec:append_main:oracle}]\label{lem:privacy_oracle}
 We represent the subset of the hypothesis space where the oracle defaults to the mean gradient by $Q\subseteq \S$.
  Let $T$ be the number of optimization steps.
 Let $d$ be the dimensionality of the input.  
  Let $n$ be the sample size.
 Let $f_\theta(x)$ be the model's prediction function.
 
Then the following statement is true: 
\begin{itemize}
    \item During $T$ optimization steps, the oracle reveals no information 
    about the true support $S\in\S$ with probability at least:
    $
\Pr[Q] \ge 1 - \frac{2T}{\epsilon^2}\sqrt{\frac{d}{n}} \sup_{\theta,x} \|\nabla f_\theta(x)\|^2.
$
\end{itemize} 
\end{lemma}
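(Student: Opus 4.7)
The plan is to apply Chebyshev's inequality (Corollary~\ref{cor:grad_var}) at each of the $T$ optimization steps and then take a union bound over the horizon.

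First, I would introduce the per-step ``good'' event
\begin{align*}
E_t := \{\|\nabla L_{n,S}(\theta_t) - \ov{\nabla}(\theta_t)\|_2 \le \epsilon\} \quad \text{for each } t \in [T],
\end{align*}
and set $Q := \bigcap_{t=1}^T E_t$. On $E_t$ the oracle returns $\nabla L_{n,S}(\theta_t)$, which by definition lies within $\epsilon$ of the $S$-independent mean $\ov{\nabla}(\theta_t)$; on $E_t^c$ the oracle returns $\ov{\nabla}(\theta_t)$ exactly. In both cases the returned gradient is $\epsilon$-indistinguishable from an $S$-independent quantity, matching the informal ``defaults to the mean gradient'' description in the lemma statement, so $Q$ is indeed the event that throughout the run the adversary effectively sees only the mean gradient (up to the $\epsilon$ tolerance).

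Next, for a fixed step $t$, Corollary~\ref{cor:grad_var} directly supplies
\begin{align*}
\Pr[E_t^c] \le \frac{2}{\epsilon^2}\sqrt{\frac{d}{n}}\sup_{\theta,x}\|\nabla f_\theta(x)\|_2^2.
\end{align*}
The per-step bound requires a short conditioning argument because $\theta_t$ is itself a function of $S$ through past oracle responses: conditioning on $\bigcap_{s<t} E_s$, all prior oracle outputs lie within $\epsilon$ of the corresponding means and hence are (nearly) $S$-independent, so the Corollary can be invoked at the conditionally fixed iterate $\theta_t$. Equivalently, one may couple the real trajectory with that of an idealized algorithm whose oracle always returns $\ov{\nabla}(\theta)$; the two trajectories agree up to $\epsilon$-perturbations on $\bigcap_{s \le t} E_s$, and the variance bound applies at the $S$-independent simulated iterate. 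A straightforward union bound over $t \in [T]$ then gives
\begin{align*}
\Pr[Q^c] \le \sum_{t=1}^T \Pr[E_t^c] \le \frac{2T}{\epsilon^2}\sqrt{\frac{d}{n}}\sup_{\theta,x}\|\nabla f_\theta(x)\|_2^2,
\end{align*}
which is precisely the stated lower bound on $\Pr[Q]$.

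The main obstacle is the adaptivity issue just highlighted: because $\theta_t$ is not a fixed point but a function of $S$ via the oracle feedback loop, a naive per-step application of Chebyshev at ``$\theta_t$'' is not immediately justified. The crux of the proof is therefore to rigorously discharge this adaptivity --- either by an inductive conditional argument that freezes $\theta_t$ on the intersection of past good events, or by the coupling with an idealized mean-oracle algorithm --- so that the variance bound of Lemma~\ref{lem:grad_var} applies cleanly at every step before the union bound is taken.
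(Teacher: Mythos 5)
Your proposal follows the same high-level route as the paper: bound the per-step ``bad'' probability via Corollary~\ref{cor:grad_var} (Chebyshev applied to Lemma~\ref{lem:grad_var}), then union bound over the $T$ steps, then take the complement. The final bound matches. Two remarks are worth making about how your chain of reasoning compares to the paper's.

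First, you invoke Corollary~\ref{cor:grad_var} in the correct direction: your $E_t^c$ is the event $\|\nabla L_{n,S}(\theta_t)-\ov{\nabla}(\theta_t)\|_2>\epsilon$, which is exactly what the corollary bounds. The paper's proof (Lemma~\ref{lem:privacy_oracle_append}) instead defines its event $X$ as $\|\nabla L_{n,S_1}(\theta)-\ov{\nabla}(\theta)\|\le\epsilon$ and then asserts $\Pr[X]\le 2\epsilon^{-2}\sqrt{d/n}\,\sup\|\nabla f_\theta\|_2^2$ ``by Corollary~\ref{cor:grad_var},'' which is the complementary event and does not follow. Both versions land on the same numerical bound, but your derivation is logically tighter.

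Second, the adaptivity concern you raise is a genuine one that the paper's proof leaves implicit, and the ghost/coupling argument you sketch is the standard fix. But for that coupling to give an \emph{exact} (not merely $\epsilon$-close) agreement of the real trajectory with a single $S$-independent reference trajectory on $Q$ --- which is what the downstream proofs of Theorem~\ref{thm:lower_l_infty_error_append} and Theorem~\ref{thm:mse_bound_append} actually need, since they treat $f_{\theta({\cal A})}$ as fixed across $S\in Q$ --- the two branches of Definition~\ref{def:approx_grad_oracle} need to be interchanged: the oracle should return $\ov{\nabla}(\theta)$ precisely when $\|\nabla L_{n,S}(\theta)-\ov{\nabla}(\theta)\|_2<\epsilon$, and the true gradient otherwise. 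As literally written, the oracle returns $\ov{\nabla}$ on the ``far'' branch, where $\|\ov{\nabla}-\nabla L_{n,S}\|_2\ge\epsilon$, so it is not an $\epsilon$-approximate oracle there, and on $Q$ as you define it (all steps ``close'') it returns $\nabla L_{n,S}$, which still depends on $S$, so the coupling fails to yield exact $S$-independence. Your reading of $Q$ as ``the oracle always defaults to the mean gradient'' is the one the lemma statement and downstream arguments want, and it is obtained from your $E_t$'s under the swapped version of the definition; under the definition as printed, your $Q=\bigcap_t E_t$ is the set where the oracle \emph{never} defaults, and you have to fall back on the weaker ``$\epsilon$-indistinguishable'' reading, which is not enough for the later theorems. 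In short: your computation is right and more careful than the paper's, and the one remaining issue --- the orientation of the oracle's two cases --- is inherited from Definition~\ref{def:approx_grad_oracle} itself rather than introduced by you.
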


\subsection{Lower Bounding the \texorpdfstring{$L_{\infty}$}{} Loss}\label{sec:main:l_infty_loss}

Here we introduce our lower bound result on $L_{\infty}$ error in this section.

\begin{definition}[Automorphism]
Let $Q$ denote an arbitrary set. An automorphism $\sigma: Q \to Q$ is a bijection from a set $Q$ to itself that preserves the structure of $Q$. If there are no $p\in Q$ such that $\sigma(q)=q$ (i.e., $\emptyset = \{q\in Q : \sigma(q) = q\}$), we say that automorphism $\sigma$ has no fixed points.
\end{definition}

\begin{theorem}[Lower Bound on $L_\infty$ Error, Informal version of Theorem~\ref{thm:lower_l_infty_error_append} in Section~\ref{sec:append_main:l_infty_loss}]\label{thm:lower_l_infty_error}
If the following conditions hold:
\begin{itemize}
    \item Let $Q \subseteq \S$ be a noninformative set with an automorphism $\sigma: Q \to Q$ having no fixed points. 
    \item Let function $f $ be $f_{\theta({\cal A})}: \{\pm 1\}^d \to \R$. 
\end{itemize}
Then the following statement is true:
\begin{align*}
\E_{S \in \S}[\sup_{x}|\mathsf{MAJ}(x, S) - f_{\theta({\cal A})}(x)|] \geq \Pr[S \in Q].
\end{align*}
\end{theorem}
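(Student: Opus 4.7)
The strategy is a symmetry/coupling argument built on the automorphism $\sigma$. The noninformative property of $Q$ means that during the $T$ oracle queries, the algorithm cannot distinguish between the case where the hidden support is $S$ versus $\sigma(S)$ for any $S \in Q$. So, conditional on the event $\{S \in Q\}$, the (random) output function $f_{\theta({\cal A})}$ has the same distribution whether the true support is $S$ or $\sigma(S)$. I would make this precise by letting $\Theta$ denote the distribution of algorithm outputs under the oracle on the event $\{S\in Q\}$ and noting that $\sigma$ being a bijection on $Q$ pushes the uniform law on $Q$ forward to itself.

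Next, the key geometric fact: for any two distinct supports $S \neq S' \in \S$ (in particular, $S$ and $\sigma(S)$ since $\sigma$ has no fixed points), the Boolean functions $\mathsf{MAJ}(\cdot,S)$ and $\mathsf{MAJ}(\cdot,S')$ are \emph{different} functions on $\{\pm 1\}^d$, so there exists $x^\star \in \{\pm 1\}^d$ with $\mathsf{MAJ}(x^\star,S) = -\mathsf{MAJ}(x^\star,\sigma(S))$. I would briefly justify this by picking coordinates in the symmetric difference $S \triangle \sigma(S)$ and setting the bits in $S \cap \sigma(S)$ to balance the sums so the two sign totals have opposite signs. Consequently, by the triangle inequality,
\begin{align*}
\sup_x |\mathsf{MAJ}(x,S) - f(x)| + \sup_x |\mathsf{MAJ}(x,\sigma(S)) - f(x)| \;\geq\; |\mathsf{MAJ}(x^\star,S) - \mathsf{MAJ}(x^\star,\sigma(S))| = 2,
\end{align*}
for \emph{any} function $f$.

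The final step is to average over $S$ using $\sigma$. Fix any realization $f$ of $f_{\theta({\cal A})}$ under the noninformative event. Because $\sigma: Q \to Q$ is a bijection with no fixed points, the uniform law on $Q$ is $\sigma$-invariant, so
\begin{align*}
2\,\E_{S \sim \mathrm{Unif}(Q)}\bigl[\sup_x |\mathsf{MAJ}(x,S) - f(x)|\bigr] = \E_{S \sim \mathrm{Unif}(Q)}\bigl[\sup_x|\mathsf{MAJ}(x,S)-f(x)| + \sup_x|\mathsf{MAJ}(x,\sigma(S))-f(x)|\bigr] \geq 2,
\end{align*}
so the conditional expectation is at least $1$. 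Since the distribution of $f_{\theta({\cal A})}$ conditional on $\{S\in Q\}$ does not depend on the specific $S$, this bound survives the expectation over $\Theta$. Dropping the nonnegative contribution of $\{S \notin Q\}$ yields $\E_{S\in\S}[\sup_x|\mathsf{MAJ}(x,S)-f_{\theta({\cal A})}(x)|] \geq \Pr[S \in Q]$.

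\textbf{Main obstacle.} The only subtle step is the first one: cleanly formalizing that ``noninformative'' lets us identify the distribution of $f_{\theta({\cal A})}$ under true support $S$ with that under true support $\sigma(S)$, then commuting this identity past the $\sup_x$ and the expectation over the algorithm's randomness. Everything else — the two-point disagreement lemma for $\mathsf{MAJ}$ and the $\sigma$-invariant averaging — is essentially mechanical once the coupling is set up.
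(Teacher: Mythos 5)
Your proposal is correct and follows essentially the same route as the paper's proof: restrict the expectation to $Q$, pair each $S$ with $\sigma(S)$ using the fixed-point-free bijection, find a disagreement point $x_0$ for the two majority functions, and apply the triangle inequality to get each pair's contribution $\geq 2$ before averaging. The one place you are slightly more careful than the paper is in flagging why $f_{\theta(\mathcal A)}$ can be treated as a single fixed function across all $S \in Q$ (the noninformativeness), a point the paper's proof uses implicitly by writing $f$ without an $S$-subscript throughout.
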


\subsection{Lower Bounding the Mean Squared Error}\label{sec:main:mse}
In this section, we introduce the lower bound result on mean squared error.
\begin{lemma}[Informal version of Lemma~\ref{lem:ppp_append} in Section~\ref{sec:append_main:mse}]\label{lem:ppp}
    If $k=\Theta(d)$, it holds with probability at least $1-e^{-\Omega(d)}$ over random sampling that
    $
    |\E_{S\in \S}[\maj(x, S)]| \le e^{-\Omega(d)}.
    $
\end{lemma}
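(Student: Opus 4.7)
The plan is to reduce the statement directly to Lemma~\ref{lem:ratio_of_card_pplus_p} after a Hoeffding bound on the number of $-1$ coordinates of $x$. First, I would express the inner expectation as the combinatorial ratio that appears in that lemma. Writing $m := |\{j \in [d] : x_j = -1\}|$, a uniformly random $k$-subset $S \in \S$ has $\maj(x, S) = +1$ precisely when it contains at most $k/2$ of these $-1$ positions, giving
$$\Pr_{S\in\S}[\maj(x, S) = +1] \;=\; \binom{d}{k}^{-1} \sum_{j=0}^{k/2} \binom{m}{j}\binom{d-m}{k-j} \;=\; \frac{A}{\binom{d}{k}},$$
with $A$ as in Lemma~\ref{lem:ratio_of_card_pplus_p}. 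Hence
$$\E_{S\in\S}[\maj(x,S)] \;=\; 2\Pr_{S\in\S}[\maj(x,S)=+1] - 1 \;=\; \frac{A}{B} - 1,$$
where $B = \tfrac{1}{2}\binom{d}{k}$, so the target bound is identical to $|A/B - 1| \le e^{-\Omega(d)}$.

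Second, I would use concentration to place $m$ in the range where Lemma~\ref{lem:ratio_of_card_pplus_p} applies with an $e^{-\Omega(d)}$ conclusion. Since $x \sim \mathrm{Unif}(\{\pm 1\}^d)$ makes $m \sim \mathrm{Bin}(d, 1/2)$, Hoeffding gives $\Pr[|m - d/2| > \delta d] \le e^{-\Omega(d)}$ for any constant $\delta > 0$. Picking $\delta$ small enough and combining with the hypothesis $k = \Theta(d)$ (in particular the meaningful case $k \le (1-2\delta)d$), the good event yields both $m \ge 0.5k$ and $d - m \ge 0.5k$. Part~2 of Lemma~\ref{lem:ratio_of_card_pplus_p} then delivers $|A/B - 1| \le e^{-\Omega(d)}$ on this event, which by the identity above is exactly the claim.

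The main obstacle will be making sure the tolerance $\delta$ from Hoeffding is compatible with whatever quantitative form of Lemma~\ref{lem:ratio_of_card_pplus_p} one relies on: the literal hypothesis $m \ge 0.5k$ is fairly permissive, but the exponential decay $|A/B - 1| \le e^{-\Omega(d)}$ is really a statement about $m$ sitting in a window of width $\Theta(d)$ around $d/2$, so $\delta$ must be small enough that the hidden exponent inside Lemma~\ref{lem:ratio_of_card_pplus_p} remains $\Omega(d)$ uniformly over $|m - d/2| \le \delta d$. A minor bookkeeping remark: the computation above is written for even $k$ to match the $k/2$ in Lemma~\ref{lem:ratio_of_card_pplus_p}; for odd $k$ the floor $\lfloor k/2 \rfloor$ replaces $k/2$, the tie case is empty, and the identity $\E_{S\in\S}[\maj(x,S)] = A/B - 1$ goes through unchanged after the corresponding trivial variant of Lemma~\ref{lem:ratio_of_card_pplus_p}.
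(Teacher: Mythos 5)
Your approach is essentially the paper's: write $m$ for the number of $-1$ coordinates, identify $\E_{S\in\S}[\maj(x,S)]$ with $A/B-1$ (the paper does the same thing, just phrased as $\big||\S_+|-|\S\setminus\S_+|\big|/|\S|$), use a tail bound to place $m$ in a window of width $\Theta(d)$ around $d/2$, and invoke Part~2 of Lemma~\ref{lem:ratio_of_card_pplus_p}. The only substantive point you gloss over is the case where $k$ is close to $d$: if $k>d/2$, no constant $\delta>0$ makes the Hoeffding window $|m-d/2|\le\delta d$ imply both $m\ge 0.5k$ and $d-m\ge 0.5k$, and your phrase ``the meaningful case $k\le(1-2\delta)d$'' quietly excludes this regime. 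The paper closes this gap explicitly with the complementation identity $\maj(x,S)=x_1\cdots x_d\cdot\maj(x,S^c)$, which gives $|\E_{S\in\S}[\maj(x,S)]|=|\E_{S\in\S}[\maj(x,S^c)]|$ and so lets one assume $2k\le d$ without loss of generality; you should add that reduction (or some equivalent) to make the argument cover all $k=\Theta(d)$. With that one line inserted, your proof matches the paper's.
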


\begin{theorem}[Lower Bound on Mean Squared Error, Informal version of Theorem~\ref{thm:mse_bound_append} in Section~\ref{sec:append_main:mse}]\label{thm:mse_bound}
If the following conditions are true:
\begin{itemize}
    \item We have a function $f_{\theta({\cal A})}: \{\pm 1\}^d \to \R$ learned by algorithm ${\cal A}$ using $n$ samples. 
    \item Let $Q \subseteq \S$ be a noninformative set.
    \item We denote the number of iterations in optimization by $T$.
    \item Let $\epsilon > 0$ denote an arbitrary gap between a single gradient and the expectation of gradient as defined in Corollary~\ref{cor:grad_var}. 
\end{itemize}
Then the following statement is true:
\begin{align*}
\E_{S \in \S, x}[(\mathsf{MAJ}(x,S) - f_{\theta({\cal A})}(x))^2] \geq 1 - \frac{4T}{\epsilon^2}\sqrt{\frac{d}{n}} \sup_{\theta,x} \|\nabla f_\theta(x)\|^2 - e^{-\Omega(d)}.
\end{align*}
\end{theorem}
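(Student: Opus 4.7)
The plan is to lower-bound the MSE by exploiting the fact that on the noninformative set $Q \subseteq \S$, the algorithm's output is oblivious to the true support $S$. Before the main argument, I would invoke the Clipping Property (Lemma~\ref{lem:clipping}) to replace $f_{\theta({\cal A})}$ by its clipped version $\ov{f}$ taking values in $[-1,1]$; since clipping only decreases the squared error against the $\pm 1$-valued target $\mathsf{MAJ}$, any lower bound for the clipped model transfers back. Writing $f_S := \ov f_{\theta({\cal A},S)}$ to emphasize that the algorithm's output can in principle depend on the queried support, the key observation is that whenever $S \in Q$, the oracle returns the mean gradient $\ov{\nabla}(\theta)$ at each of the $T$ steps, so $f_S$ agrees with a single function $f_*$ that is independent of $S$.

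Next, I would split the outer expectation over $S$ according to $\{S \in Q\}$ and $\{S \notin Q\}$, drop the nonnegative contribution from $Q^c$, and expand the square on $Q$ using $\mathsf{MAJ}(x,S)^2 = 1$, $|f_*(x)|\le 1$, and $\E_x[f_*(x)^2]\ge 0$ to obtain
\begin{align*}
    \E_{S,x}[(\mathsf{MAJ}(x,S) - f_S(x))^2] \;\ge\; \Pr[Q]\bigl(1 - 2\sup_x |\E_{S\in Q}[\mathsf{MAJ}(x,S)]|\bigr).
\end{align*}
To control the conditional expectation I would write
\begin{align*}
    \Pr[Q]\,\E_{S\in Q}[\mathsf{MAJ}(x,S)] \;=\; \E_{S\in\S}[\mathsf{MAJ}(x,S)] \;-\; \Pr[Q^c]\,\E_{S\in Q^c}[\mathsf{MAJ}(x,S)],
\end{align*}
bound the first term by $e^{-\Omega(d)}$ via Lemma~\ref{lem:ppp} on an event of probability $1 - e^{-\Omega(d)}$ over the empirical samples, and bound the second trivially by $\Pr[Q^c]$ since $|\mathsf{MAJ}|\le 1$. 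Plugging the bound $\Pr[Q^c] \le \tfrac{2T}{\epsilon^2}\sqrt{d/n}\sup_{\theta,x} \|\nabla f_\theta(x)\|^2$ from Lemma~\ref{lem:privacy_oracle} and collecting the small terms into the final $e^{-\Omega(d)}$ slack then yields the stated inequality.

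The main obstacle will be the leakage step that transfers the bound on the unconditional expectation $\E_{S\in\S}[\mathsf{MAJ}(x,S)]$ to the conditional one $\E_{S\in Q}[\mathsf{MAJ}(x,S)]$ uniformly in $x$: the decomposition above costs an additive $\Pr[Q^c]$ and forces Lemma~\ref{lem:privacy_oracle} to be used twice, once inside the conditional expectation and once outside in the MSE split. Two further points demand care. First, the set $Q$ is itself a random object depending on the sample draw and the adaptive trajectory of $\cal A$, so Lemma~\ref{lem:ppp} and Lemma~\ref{lem:privacy_oracle} must be intersected on a single high-probability event over the samples. Second, tight bookkeeping of the factors of two (from expanding the square, from the $|f_*|\le 1$ Cauchy–Schwarz step, and from the leakage decomposition) is needed to recover the coefficient $4T/\epsilon^2$ rather than a larger multiple that appears in a naive combination.
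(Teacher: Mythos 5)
Your structure mirrors the paper's: restrict to the noninformative set $Q$ via the indicator, replace $f$ by its clipped version $\ov f$ using Lemma~\ref{lem:clipping}, expand the square, control the cross term via the leakage decomposition $\Pr[Q]\,\E_{S\in Q}[\maj]=\E_{S\in\S}[\maj]-\Pr[Q^c]\,\E_{S\in Q^c}[\maj]$ together with Lemma~\ref{lem:ppp}, and finish with the $\Pr[Q^c]$ bound from Lemma~\ref{lem:privacy_oracle}. The observation that on $Q$ the oracle's output is a single $S$-independent $f_*$ is also the (implicit) reason the paper's $\ov f$ can be pulled out of the $S$-expectation.

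Where you diverge is that you discard the quadratic term $E_3:=\Pr[Q]\,\E_x[\ov f(x)^2]$ immediately, using only $\E_x[f_*^2]\ge 0$. Carrying your arithmetic through gives
\begin{align*}
\E_{S,x}[(\maj - f_*)^2]\;\ge\;\Pr[Q]-2e^{-\Omega(d)}-2\Pr[Q^c]\;=\;1-3\Pr[Q^c]-2e^{-\Omega(d)},
\end{align*}
which yields a coefficient $6T/\epsilon^2$, not the stated $4T/\epsilon^2$. You flag this as a bookkeeping concern but do not resolve it. The paper's resolution is precisely to \emph{keep} $E_3$: after Cauchy--Schwarz on the cross term one has $(1-\Pr[Q])\sqrt{\E_x[\ov f^2]}$, and the weighted AM--GM step
\begin{align*}
(1-\Pr[Q])\sqrt{\E_x[\ov f^2]}\;\le\;\frac{(1-\Pr[Q])^2}{2\Pr[Q]}+\frac{\Pr[Q]}{2}\E_x[\ov f^2]
\end{align*}
is chosen so that the $\frac{\Pr[Q]}{2}\E_x[\ov f^2]$ piece, after multiplying by $2$, is \emph{exactly} cancelled by $E_3$. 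This collapses the lower bound to $2-\frac{1}{\Pr[Q]}-2e^{-\Omega(d)}$, and the elementary inequality $2-\frac{1}{p}\ge 1-2(1-p)$ (valid for $p\ge\frac12$) then gives $1-2\Pr[Q^c]-e^{-\Omega(d)}$, hence the factor $4$. Without reinstating $E_3$ and applying this AM--GM cancellation, you will prove a correct but quantitatively weaker statement than the one claimed, so this step is a genuine (if small) gap you need to fill.

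One further caution: the reduction $2-\frac{1}{\Pr[Q]}\ge 1-2(1-\Pr[Q])$ needs $\Pr[Q]\ge\frac12$, which is only guaranteed once you have already invoked Lemma~\ref{lem:privacy_oracle} in a parameter regime making $\Pr[Q^c]$ small; this should be stated when combining with the $\epsilon$-choice in the main theorems, and it interacts with the point you correctly raise about intersecting the high-probability events from Lemma~\ref{lem:ppp} and Lemma~\ref{lem:privacy_oracle} on a common sample draw.
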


\section{Discussion}

In this section, we discuss the difference between our work and a previous work~\cite{ks24}. Both works share a similar high-level analysis framework, while the major difference lies in the type of binomial coefficients being bounded. Specifically,~\cite{ks24} needs to bound a series of binomial coefficients related to the parity function, while we analyze the the series of binomial coefficients related to the majority function.

To analyze the binomial coefficients, both works use the operator $\mathsf{Coeff}_k[f(t)]$, which denotes the $k$-th coefficients with respect to variable $t$ in polynomial function $f(t)$, as shown in Definition~\ref{def:k_coefficient_operator}. 

\paragraph{Previous Approach for Parity Binomial Coefficients~\cite{ks24}.} 
Let $A_{\mathrm{par}}:= \sum_{j=0}^{m/2} {m \choose 2j} { {d-m} \choose {k-2j} }$. The intuition behind $A_{\mathrm{par}}$ is that it represents the cardinality of the set $\{S\in\mathcal{S}:\mathsf{PAR}(x, S)=1\}$, which denotes the number of support sets for which the parity function outputs $1$ on a fixed $x$. Thus, let the number of $-1$s in $x$ be $m$. To ensure that the parity function outputs $1$, we must select an even number of $-1$s from $x$ to include in the support set. For each such choice, we can then select the remaining elements arbitrarily from the positions where $x_i = +1$. This directly yields the definition of $A_{\mathrm{par}}$, which sums over all even integers between $0$ and $m$, computing the number of valid support sets for each such choice. Let $B = \frac{1}{2} {d \choose k}$. The goal is to show that (see Lemma 8 of~\cite{ks24}, page 17):
\begin{align*}
|\frac{A_{\mathrm{par}}}{B}- 1| \leq e^{-\Omega(d)}.
\end{align*}

Let $\Gamma(t) = (1+t)^{d-m}$. Then, they try to rewrite the equation as 
\begin{align*}
  A_{\mathrm{par}} = \frac{1}{2} {d \choose k}  + \frac{1}{2} \mathsf{Coeff}_k[ \Gamma(t) \Delta_{\mathrm{par}}(t) ].
\end{align*}

The key question in their case is determining $\Delta_{\mathrm{par}}(t)$, and they show that $\Delta_{\mathrm{par}}(t) = (1 - t)^m$. This reduces the original problem to proving the following bound:
\begin{align*}
| \frac{ \mathsf{Coeff}_k[ \Gamma(t) \Delta_{\mathrm{par}}(t) ] }{ {d \choose k} } | \leq e^{-\Omega(d)}.
\end{align*}

\paragraph{Our Approach for Majority Binomial Coefficients.}

Let $A_{\mathrm{maj}}:= \sum_{j=0}^{k/2}\binom{m}{j}\binom{d-m}{k-j}$. Our intuition for $A_{\mathrm{maj}}$ differs from the parity case, where $A_{\mathrm{maj}}$ represents the cardinality of the set $\{S \in \mathcal{S} : \mathsf{MAJ}(x, S) = 1\}$, i.e., the number of support sets for which the majority function outputs $1$ on a fixed $x$. We let the number of $-1$s in $x$ be $m$, and we focus on the non-trivial case where $d - m \geq 0.5k$ and $m \geq 0.5k$. This ensures that the output of $\mathsf{MAJ}(x, S)$ depends meaningfully on the choice of support set $S$, rather than being trivially determined due to an overwhelming majority of either $-1$s or $+1$s in $x$. In this setting, we count the number of support sets $S$ such that fewer $-1$s than $+1$s are covered. Specifically, for each $j \in \{0, 1, \cdots, k/2\}$, we choose $j$ elements from the $m$ positions where $x_i = -1$, and $k - j$ elements from the $d - m$ positions where $x_i = +1$. This directly yields the definition of $A_{\mathrm{maj}}$.
Let $B = \frac{1}{2} {d \choose k}$. Our major goal is to prove that
\begin{align*}
|\frac{A_{\mathrm{maj}}}{B}- 1| \leq e^{-\Omega(d)}.
\end{align*}

 The key idea is to rewrite $A_{\mathrm{maj}}$ as follows (see Lemma~\ref{lem:ratio_of_card_pplus_p} in Section~\ref{sec:prelim:cal_bio}):
 \begin{align*}
    A_{\mathrm{maj}} = 
     & ~ \frac{1}{2} \binom{d}{k} +\frac{1}{2} \mathsf{Coeff}_{k}[ \Gamma(t) \Delta_{\mathrm{maj}} (t) ].
\end{align*}

In our case, the key question is determining $\Delta_{\mathrm{maj}}(t)$, and we show that
\begin{align*}
\Delta_{\mathrm{maj}}(t) = \sum_{j=0}^{k/2} {m \choose j} t^j -\sum_{j=k/2+1}^m {m \choose j} t^j.
\end{align*}

This reduces the original problem to proving the following bound:
\begin{align*}
| \frac{ \mathsf{Coeff}_k[ \Gamma(t) \Delta_{\mathrm{maj}}(t) ] }{ {d \choose k} } | \leq e^{-\Omega(d)}.
\end{align*}

Since the form of $\Delta_{\mathrm{maj}}(t)$ differs significantly from the form of $\Delta_{\mathrm{par}}(t)$, the bounding technique is also different. For details on how to obtain this new bound w.r.t. $\Delta_{\mathrm{maj}}(t)$, we refer the reader to Lemma~\ref{lem:append:negligible_terms} in Appendix~\ref{sec:bound_binomial_coefficient}.

\paragraph{Other Parts of the Framework.} 
In this work, we study the majority problem (see Definition~\ref{def:maj_prob}), which differentiates from the original work~\cite{ks24} by substituting the parity function $\mathsf{PAR}(x,S):=\prod_{j\in S} x_j$ with majority function in Definition~\ref{def:maj_func}. We further show the distinction between these two problems in Remark~\ref{rmk:diff_par_maj}. Besides, this difference in the studied problem also leads to slight variations in the computation of the Gram matrix, which is useful for bounding the gradient variance. We refer readers to Fact~\ref{fac:grad_var_tool} in Section~\ref{sec:main:gradient_variance} for a to check these differences.

\section{Conclusion}\label{sec:conclusion}
Our paper provides a rigorous theoretical analysis of the majority function learning problem under polynomial and exponential-sample regimes, offering explicit hardness results. By integrating combinatorial and probabilistic techniques, we not only characterize tight gradient variance bounds but also illuminate the inherent difficulties in achieving high accuracy. Our findings lay a solid foundation for future research on extending majority learning to a wider range of models and scenarios.

\ifdefined\isarxiv
\bibliographystyle{alpha}
\bibliography{ref}
\else
\bibliography{ref}

\newcommand{\etalchar}[1]{$^{#1}$}
\begin{thebibliography}{MFBMW24}

\bibitem[AB09]{ab09}
Sanjeev Arora and Boaz Barak.
\newblock {\em Computational complexity: a modern approach}.
\newblock Cambridge University Press, 2009.

\bibitem[AJF19]{ajf19}
Roee Aharoni, Melvin Johnson, and Orhan Firat.
\newblock Massively multilingual neural machine translation.
\newblock {\em arXiv preprint arXiv:1903.00089}, 2019.

\bibitem[Ant24]{c24}
Anthropic.
\newblock Claude 3.5 sonnet, 2024.

\bibitem[BKM{\etalchar{+}}20]{bkm+20}
Pablo Barcelo, Egor~V Kostylev, Mikael Monet, Jorge Perez, Juan Reutter, and Juan-Pablo Silva.
\newblock The logical expressiveness of graph neural networks.
\newblock In {\em 8th International Conference on Learning Representations (ICLR 2020)}, 2020.

\bibitem[CFB{\etalchar{+}}18]{cfb+18}
Mia~Xu Chen, Orhan Firat, Ankur Bapna, Melvin Johnson, Wolfgang Macherey, George Foster, Llion Jones, Niki Parmar, Mike Schuster, Zhifeng Chen, et~al.
\newblock The best of both worlds: Combining recent advances in neural machine translation.
\newblock {\em arXiv preprint arXiv:1804.09849}, 2018.

\bibitem[Cha22]{c22}
Fran{\c{c}}ois Charton.
\newblock What is my math transformer doing?--three results on interpretability and generalization.
\newblock {\em arXiv preprint arXiv:2211.00170}, 2022.

\bibitem[Che52]{C52}
Herman Chernoff.
\newblock A measure of asymptotic efficiency for tests of a hypothesis based on the sum of observations.
\newblock {\em The Annals of Mathematical Statistics}, pages 493--507, 1952.

\bibitem[Chi24]{chi24}
David Chiang.
\newblock Transformers in dlogtime-uniform tc0.
\newblock {\em arXiv preprint arXiv:2409.13629}, 2024.

\bibitem[CLL{\etalchar{+}}24a]{cll+24}
Bo~Chen, Xiaoyu Li, Yingyu Liang, Jiangxuan Long, Zhenmei Shi, and Zhao Song.
\newblock Circuit complexity bounds for rope-based transformer architecture.
\newblock {\em arXiv preprint arXiv:2411.07602}, 2024.

\bibitem[CLL{\etalchar{+}}24b]{cll+24_mamba}
Yifang Chen, Xiaoyu Li, Yingyu Liang, Zhenmei Shi, and Zhao Song.
\newblock The computational limits of state-space models and mamba via the lens of circuit complexity.
\newblock {\em arXiv preprint arXiv:2412.06148}, 2024.

\bibitem[CLL{\etalchar{+}}25]{cll+25_icl}
Bo~Chen, Xiaoyu Li, Yingyu Liang, Zhenmei Shi, and Zhao Song.
\newblock Bypassing the exponential dependency: Looped transformers efficiently learn in-context by multi-step gradient descent.
\newblock In {\em International Conference on Artificial Intelligence and Statistics}, 2025.

\bibitem[CLRS22]{clrs22}
Thomas~H Cormen, Charles~E Leiserson, Ronald~L Rivest, and Clifford Stein.
\newblock {\em Introduction to algorithms}.
\newblock MIT press, 2022.

\bibitem[CLS{\etalchar{+}}25]{cls+25}
Bo~Chen, Yingyu Liang, Zhizhou Sha, Zhenmei Shi, and Zhao Song.
\newblock Hsr-enhanced sparse attention acceleration.
\newblock In {\em Conference on Parsimony and Learning}. PMLR, 2025.

\bibitem[CPW24]{cpw24}
Lijie Chen, Binghui Peng, and Hongxun Wu.
\newblock Theoretical limitations of multi-layer transformer.
\newblock {\em arXiv preprint arXiv:2412.02975}, 2024.

\bibitem[ESRM21]{esrm21}
WafaaS ElKassas, CherifR Salama, AhmedA Rafea, and HodaK Mohamed.
\newblock Automatic text summarization: A comprehensive survey.
\newblock {\em Expert systems with applications}, 165:113679, 2021.

\bibitem[Flo93]{f93}
Robert~W Floyd.
\newblock Assigning meanings to programs.
\newblock In {\em Program Verification: Fundamental Issues in Computer Science}, pages 65--81. Springer, 1993.

\bibitem[GD23]{gd23}
Albert Gu and Tri Dao.
\newblock Mamba: Linear-time sequence modeling with selective state spaces.
\newblock {\em arXiv preprint arXiv:2312.00752}, 2023.

\bibitem[Gem24]{g24}
Gemini.
\newblock Welcome to the gemini era., 2024.

\bibitem[Gro24]{g24_gnn}
Martin Grohe.
\newblock The descriptive complexity of graph neural networks.
\newblock {\em TheoretiCS}, 3, 2024.

\bibitem[HAS{\etalchar{+}}23]{has+23}
Amr Hendy, Mohamed Abdelrehim, Amr Sharaf, Vikas Raunak, Mohamed Gabr, Hitokazu Matsushita, Young~Jin Kim, Mohamed Afify, and Hany~Hassan Awadalla.
\newblock How good are gpt models at machine translation? a comprehensive evaluation.
\newblock {\em arXiv preprint arXiv:2302.09210}, 2023.

\bibitem[HHC{\etalchar{+}}24]{hhc+24}
Wei Huang, Andi Han, Yongqiang Chen, Yuan Cao, Zhiqiang Xu, and Taiji Suzuki.
\newblock On the comparison between multi-modal and single-modal contrastive learning.
\newblock {\em Advances in Neural Information Processing Systems}, 37:81549--81605, 2024.

\bibitem[Hoe94]{h94}
Wassily Hoeffding.
\newblock Probability inequalities for sums of bounded random variables.
\newblock {\em The collected works of Wassily Hoeffding}, pages 409--426, 1994.

\bibitem[HYM{\etalchar{+}}25]{hym+25}
Lei Huang, Weijiang Yu, Weitao Ma, Weihong Zhong, Zhangyin Feng, Haotian Wang, Qianglong Chen, Weihua Peng, Xiaocheng Feng, Bing Qin, et~al.
\newblock A survey on hallucination in large language models: Principles, taxonomy, challenges, and open questions.
\newblock {\em ACM Transactions on Information Systems}, 43(2):1--55, 2025.

\bibitem[Imm98]{imm98}
Neil Immerman.
\newblock {\em Descriptive complexity}.
\newblock Springer Science \& Business Media, 1998.

\bibitem[KB14]{kb14}
Diederik~P Kingma and Jimmy Ba.
\newblock Adam: A method for stochastic optimization.
\newblock {\em arXiv preprint arXiv:1412.6980}, 2014.

\bibitem[KLS{\etalchar{+}}25]{kls+25}
Yekun Ke, Yingyu Liang, Zhenmei Shi, Zhao Song, and Chiwun Yang.
\newblock Curse of attention: A kernel-based perspective for why transformers fail to generalize on time series forecasting and beyond.
\newblock In {\em Conference on Parsimony and Learning}. PMLR, 2025.

\bibitem[KS25]{ks24}
Juno Kim and Taiji Suzuki.
\newblock Transformers provably solve parity efficiently with chain of thought.
\newblock In {\em The Thirteenth International Conference on Learning Representations}, 2025.

\bibitem[LAG{\etalchar{+}}22]{lag+22}
Bingbin Liu, Jordan~T Ash, Surbhi Goel, Akshay Krishnamurthy, and Cyril Zhang.
\newblock Transformers learn shortcuts to automata.
\newblock {\em arXiv preprint arXiv:2210.10749}, 2022.

\bibitem[LLL{\etalchar{+}}24]{lll+24_hopfield}
Xiaoyu Li, Yuanpeng Li, Yingyu Liang, Zhenmei Shi, and Zhao Song.
\newblock On the expressive power of modern hopfield networks.
\newblock {\em arXiv preprint arXiv:2412.05562}, 2024.

\bibitem[LLS{\etalchar{+}}25]{lls+25_gnn}
Xiaoyu Li, Yingyu Liang, Zhenmei Shi, Zhao Song, Wei Wang, and Jiahao Zhang.
\newblock On the computational capability of graph neural networks: A circuit complexity bound perspective.
\newblock {\em arXiv preprint arXiv:2501.06444}, 2025.

\bibitem[LLY{\etalchar{+}}23]{lly+23}
Siyu Lu, Mingzhe Liu, Lirong Yin, Zhengtong Yin, Xuan Liu, and Wenfeng Zheng.
\newblock The multi-modal fusion in visual question answering: a review of attention mechanisms.
\newblock {\em PeerJ Computer Science}, 9:e1400, 2023.

\bibitem[LLZM24]{llzm24}
Zhiyuan Li, Hong Liu, Denny Zhou, and Tengyu Ma.
\newblock Chain of thought empowers transformers to solve inherently serial problems.
\newblock {\em arXiv preprint arXiv:2402.12875}, 1, 2024.

\bibitem[LOSW24]{losw24}
Jason~D Lee, Kazusato Oko, Taiji Suzuki, and Denny Wu.
\newblock Neural network learns low-dimensional polynomials with sgd near the information-theoretic limit.
\newblock {\em Advances in Neural Information Processing Systems}, 37:58716--58756, 2024.

\bibitem[LPK{\etalchar{+}}22]{lpk+22}
Yi~Li, Rameswar Panda, Yoon Kim, Chun-Fu~Richard Chen, Rogerio~S Feris, David Cox, and Nuno Vasconcelos.
\newblock Valhalla: Visual hallucination for machine translation.
\newblock In {\em Proceedings of the IEEE/CVF conference on computer vision and pattern recognition}, pages 5216--5226, 2022.

\bibitem[MFBMW24]{mfb+24}
Xinyin Ma, Gongfan Fang, Michael Bi~Mi, and Xinchao Wang.
\newblock Learning-to-cache: Accelerating diffusion transformer via layer caching.
\newblock {\em Advances in Neural Information Processing Systems}, 37:133282--133304, 2024.

\bibitem[MS23a]{ms23_neurips}
William Merrill and Ashish Sabharwal.
\newblock A logic for expressing log-precision transformers.
\newblock {\em Advances in Neural Information Processing Systems}, 36, 2023.

\bibitem[MS23b]{ms23}
William Merrill and Ashish Sabharwal.
\newblock The parallelism tradeoff: Limitations of log-precision transformers.
\newblock {\em Transactions of the Association for Computational Linguistics}, 11:531--545, 2023.

\bibitem[MSS22]{mss22}
William Merrill, Ashish Sabharwal, and Noah~A Smith.
\newblock Saturated transformers are constant-depth threshold circuits.
\newblock {\em Transactions of the Association for Computational Linguistics}, 10:843--856, 2022.

\bibitem[Ope23]{o23}
OpenAI.
\newblock Gpt-4 technical report.
\newblock {\em arXiv preprint arXiv:2303.08774}, 2023.

\bibitem[OSSW24]{ossw24}
Kazusato Oko, Yujin Song, Taiji Suzuki, and Denny Wu.
\newblock Pretrained transformer efficiently learns low-dimensional target functions in-context.
\newblock {\em Advances in Neural Information Processing Systems}, 37:77316--77365, 2024.

\bibitem[Pip79]{pip79}
Nicholas Pippenger.
\newblock On simultaneous resource bounds.
\newblock In {\em 20th Annual Symposium on Foundations of Computer Science (sfcs 1979)}, pages 307--311. IEEE, 1979.

\bibitem[RGG{\etalchar{+}}23]{rgg+23}
Baptiste Roziere, Jonas Gehring, Fabian Gloeckle, Sten Sootla, Itai Gat, Xiaoqing~Ellen Tan, Yossi Adi, Jingyu Liu, Romain Sauvestre, Tal Remez, et~al.
\newblock Code llama: Open foundation models for code.
\newblock {\em arXiv preprint arXiv:2308.12950}, 2023.

\bibitem[RM51]{rm51}
Herbert Robbins and Sutton Monro.
\newblock A stochastic approximation method.
\newblock {\em The annals of mathematical statistics}, pages 400--407, 1951.

\bibitem[SAL{\etalchar{+}}24]{sal+24}
Jianlin Su, Murtadha Ahmed, Yu~Lu, Shengfeng Pan, Wen Bo, and Yunfeng Liu.
\newblock Roformer: Enhanced transformer with rotary position embedding.
\newblock {\em Neurocomputing}, 568:127063, 2024.

\bibitem[SFH{\etalchar{+}}24]{sfh+24}
Clayton Sanford, Bahare Fatemi, Ethan Hall, Anton Tsitsulin, Mehran Kazemi, Jonathan Halcrow, Bryan Perozzi, and Vahab Mirrokni.
\newblock Understanding transformer reasoning capabilities via graph algorithms.
\newblock {\em Advances in Neural Information Processing Systems}, 37:78320--78370, 2024.

\bibitem[Sha18]{s18}
Ohad Shamir.
\newblock Distribution-specific hardness of learning neural networks.
\newblock {\em Journal of Machine Learning Research}, 19(32):1--29, 2018.

\bibitem[Val84]{v84}
Leslie~G Valiant.
\newblock A theory of the learnable.
\newblock {\em Communications of the ACM}, 27(11):1134--1142, 1984.

\bibitem[Vol99]{v99}
Heribert Vollmer.
\newblock {\em Introduction to circuit complexity: a uniform approach}.
\newblock Springer Science \& Business Media, 1999.

\bibitem[VSP{\etalchar{+}}17]{vsp+17}
Ashish Vaswani, Noam Shazeer, Niki Parmar, Jakob Uszkoreit, Llion Jones, Aidan~N Gomez, {\L}ukasz Kaiser, and Illia Polosukhin.
\newblock Attention is all you need.
\newblock {\em Advances in neural information processing systems}, 2017.

\bibitem[VVB{\etalchar{+}}24]{vvb+24}
Dave VanVeen, Cara VanUden, Louis Blankemeier, Jean-Benoit Delbrouck, Asad Aali, Christian Bluethgen, Anuj Pareek, Malgorzata Polacin, Eduardo~Pontes Reis, Anna Seehofnerov{\'a}, et~al.
\newblock Adapted large language models can outperform medical experts in clinical text summarization.
\newblock {\em Nature medicine}, 30(4):1134--1142, 2024.

\bibitem[XLLL24]{xlll24}
Yun Xing, Yiheng Li, Ivan Laptev, and Shijian Lu.
\newblock Mitigating object hallucination via concentric causal attention.
\newblock {\em Advances in Neural Information Processing Systems}, 37:92012--92035, 2024.

\bibitem[YDX{\etalchar{+}}25]{tly+25}
Tianzhu Ye, Li~Dong, Yuqing Xia, Yutao Sun, Yi~Zhu, Gao Huang, and Furu Wei.
\newblock Differential transformer.
\newblock In {\em The Thirteenth International Conference on Learning Representations}, 2025.

\end{thebibliography}
\bibliographystyle{colm2025_conference}
\fi

\newpage
\onecolumn
\appendix
\begin{center}
    \textbf{\LARGE Appendix}
\end{center}
\paragraph{Roadmap.} In Section~\ref{sec:append_def_basic} we review classical concentration bounds such as Chernoff’s and Hoeffding’s inequalities. Next, we supplement some missing proofs in Section~\ref{sec:prelim} of main text in appendix Section~\ref{sec:append_prelim}. In Section~\ref{sec:append_main}, we introduce other missing proofs in Section~\ref{sec:main} of main text. In Section~\ref{sec:bound_binomial_coefficient}, we provide a technical lemma to bound the Binomial coefficients.

\section{Definitions, Facts and Probability Tools}\label{sec:append_def_basic}
In Section~\ref{sec:append_stand_probability}, we review some standard probability tools and basic concepts. In Section
~\ref{sec:prelim:transformer_forward}, we present the formulation of forward passes in transformer.
\subsection{Standard Probability Tools and Facts}\label{sec:append_stand_probability}
In this section, we introduce some important standard probability tools.
\begin{lemma}[Chernoff inequality from~\citet{C52}] \label{lem:chernoff}
    Let $X_1, \cdots, X_n$ be independent random variables such that $X_i=1$ has probability $p_i$ and $X_i=0$ has probability $1-p_i$. We define $X = \sum_{i=1}^n X_i$ and $\mu = \E[X] = \sum_{i=1}^n p_i$. Then, the following statements are true:
    \begin{enumerate}
    \item For all $\delta>0$, we have $\Pr[X \geq \mu(1 + \delta)] \leq \exp(-\delta^2\mu/3)$ ;
    \item For all $0<\delta<1$, we have $\Pr[X \leq \mu(1 - \delta)] \leq \exp(-\delta^2\mu/2)$.
    \end{enumerate}
\end{lemma}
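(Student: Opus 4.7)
The plan is to prove both tail bounds via the classical Chernoff (exponential-moment / Markov) method, treating the upper and lower tails in parallel. For the upper tail I would start from Markov's inequality applied to $e^{tX}$ with parameter $t>0$: for any $t>0$,
\begin{align*}
\Pr[X \ge \mu(1+\delta)] = \Pr[e^{tX} \ge e^{t\mu(1+\delta)}] \le e^{-t\mu(1+\delta)}\,\E[e^{tX}].
\end{align*}
Independence of the $X_i$ factorizes the moment generating function, and for each Bernoulli factor I would use $\E[e^{tX_i}] = 1 - p_i + p_i e^t \le \exp(p_i(e^t-1))$ (from $1+x \le e^x$). Multiplying over $i$ and using $\mu = \sum_i p_i$ yields the crisp bound
\begin{align*}
\Pr[X \ge \mu(1+\delta)] \le \exp\bigl(\mu(e^t-1) - t\mu(1+\delta)\bigr).
\end{align*}

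Next I would optimize by choosing $t = \ln(1+\delta)>0$ (valid since $\delta>0$), which yields the standard ``ugly'' form
\begin{align*}
\Pr[X \ge \mu(1+\delta)] \le \Bigl(\frac{e^{\delta}}{(1+\delta)^{1+\delta}}\Bigr)^{\mu}.
\end{align*}
To get the clean form $\exp(-\delta^2 \mu / 3)$ stated in the lemma, I would reduce to showing the single-variable analytic inequality $\delta - (1+\delta)\ln(1+\delta) \le -\delta^2/3$ for the relevant range of $\delta$. This is a standard calculus fact I would verify by comparing Taylor expansions around $0$ and checking monotonicity of the difference.

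The lower tail proceeds symmetrically: apply Markov to $e^{-tX}$ with $t>0$, factorize, use $1-p_i + p_i e^{-t} \le \exp(p_i(e^{-t}-1))$, and optimize at $t = -\ln(1-\delta)$ (valid because $0<\delta<1$). This gives
\begin{align*}
\Pr[X \le \mu(1-\delta)] \le \Bigl(\frac{e^{-\delta}}{(1-\delta)^{1-\delta}}\Bigr)^{\mu},
\end{align*}
and I would then reduce to the elementary inequality $-\delta - (1-\delta)\ln(1-\delta) \le -\delta^2/2$ on $(0,1)$, again via Taylor expansion.

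The only genuinely subtle step will be the last one: verifying the two scalar inequalities that convert the tight but unwieldy $(e^\delta/(1+\delta)^{1+\delta})^\mu$ form into the advertised $e^{-\delta^2\mu/3}$ and $e^{-\delta^2\mu/2}$ forms. Everything else (Markov, MGF factorization, the Bernoulli MGF upper bound) is mechanical. I would finish by noting that these are textbook calculus exercises and citing \citet{C52} or a standard reference for the analytic inequalities; the proof then assembles by combining the two one-sided bounds.
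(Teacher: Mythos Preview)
The paper does not prove this lemma; it is simply cited from \citet{C52} as a standard probability tool and used later only for a constant $\delta\in(0,1)$. Your outline is the standard exponential-moment argument and is correct as far as it goes.

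One genuine caveat, though: the scalar inequality you intend to verify for the upper tail, namely $\delta-(1+\delta)\ln(1+\delta)\le -\delta^2/3$, does \emph{not} hold for all $\delta>0$. For instance at $\delta=3$ one has $(1+\delta)\ln(1+\delta)-\delta=4\ln 4-3\approx 2.55<3=\delta^2/3$. In fact the stated Part~1 bound $e^{-\delta^2\mu/3}$ is itself false for large $\delta$ (take $n=100$, $p_i=0.01$, $\delta=9$: then $\Pr[X\ge 10]\approx 7\times 10^{-8}$ while $e^{-27}\approx 2\times 10^{-12}$). The correct general upper-tail form is $e^{-\mu\delta^2/(2+\delta)}$ or, piecewise, $e^{-\delta^2\mu/3}$ for $0<\delta\le 1$ and $e^{-\delta\mu/3}$ for $\delta>1$. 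So your ``standard calculus fact'' will only go through on $(0,1]$; the lemma as quoted is a common but slightly sloppy textbook statement. Since the paper only invokes it with $\delta\in(0,1)$, this does not affect any downstream argument, but you should restrict Part~1 to $0<\delta\le 1$ (or prove the $e^{-\mu\delta^2/(2+\delta)}$ form) rather than claim it for all $\delta>0$.
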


\begin{lemma}[Hoeffding inequality from~\citet{h94}] \label{lem:hoeffding}
    Let $X_1, \cdots, X_n$ be $n$ bounded independent random variables, where $X_i \in [a_i,b_i]$. Let $X = \sum_{i=1}^n X_i$. 
    Then, we have
    \begin{align*}
    \Pr[|X - \E[X]| \geq t] \leq 2\exp(-\frac{2t^2}{\sum_{i=1}^n (b_i - a_i)^2}).
    \end{align*}
\end{lemma}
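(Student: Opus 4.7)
The plan is to follow the classical Chernoff-style argument combined with Hoeffding's MGF lemma. First I would handle the upper tail $\Pr[X - \E[X] \geq t]$, then apply the same argument to $-X$ to get the lower tail, and finally combine the two via a union bound to obtain the factor of $2$ on the right-hand side.

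For the upper tail, I would center the variables by letting $Y_i := X_i - \E[X_i]$, so that $\E[Y_i] = 0$ and $Y_i \in [a_i - \E[X_i], b_i - \E[X_i]]$, which is an interval of the same length $b_i - a_i$. Then for any $s > 0$, apply Markov's inequality to the random variable $\exp(s \sum_{i=1}^n Y_i)$:
\begin{align*}
\Pr\Bigl[\sum_{i=1}^n Y_i \geq t\Bigr] \leq e^{-st} \, \E\Bigl[\exp\Bigl(s \sum_{i=1}^n Y_i\Bigr)\Bigr] = e^{-st} \prod_{i=1}^n \E[e^{s Y_i}],
\end{align*}
where the factorization uses the independence of the $X_i$ (and hence of the $Y_i$).

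The key ingredient is Hoeffding's lemma: for any zero-mean random variable $Y$ taking values in an interval of length $L$, one has $\E[e^{sY}] \leq \exp(s^2 L^2 / 8)$. I would prove this as a sub-step by using convexity of the exponential to bound $e^{sy} \leq \frac{b-y}{b-a} e^{sa} + \frac{y-a}{b-a} e^{sb}$ for $y \in [a,b]$, taking expectations to get $\E[e^{sY}] \leq \frac{b}{b-a} e^{sa} - \frac{a}{b-a} e^{sb}$ (using $\E[Y] = 0$), and then writing this as $e^{\varphi(u)}$ for $u = s(b-a)$ and applying a Taylor expansion of $\varphi$ with $\varphi(0) = \varphi'(0) = 0$ and $\varphi''(u) \leq 1/4$.

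Plugging this bound in yields $\Pr[\sum_i Y_i \geq t] \leq \exp(-st + s^2 \sum_i (b_i - a_i)^2 / 8)$. Optimizing the right-hand side over $s > 0$ gives $s^* = 4t / \sum_i (b_i - a_i)^2$, producing the bound $\exp(-2t^2 / \sum_i (b_i - a_i)^2)$. Applying the symmetric argument to $-Y_i \in [-(b_i - \E[X_i]), -(a_i - \E[X_i])]$ handles the lower tail with the same bound, and the union bound on the two one-sided events produces the stated factor of $2$. The main technical hurdle is the Taylor-expansion step in Hoeffding's lemma, namely verifying $\varphi''(u) \leq 1/4$ uniformly; everything else is a routine application of Markov plus independence plus optimization over $s$.
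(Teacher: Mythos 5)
Your proof is correct: it is the classical Chernoff--Hoeffding argument (centering, Markov's inequality applied to the moment generating function, Hoeffding's MGF lemma via convexity and the Taylor bound $\varphi''(u) \leq 1/4$, optimization over $s$, and a union bound for the two tails), and the optimization step $s^* = 4t/\sum_i (b_i-a_i)^2$ does yield the stated exponent $-2t^2/\sum_i(b_i-a_i)^2$. The paper does not prove this lemma at all --- it is stated as a standard tool with a citation --- so your write-up supplies a proof where the paper offers none, and the one you give is the standard and correct one.
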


\subsection{Transformer Forward Pass}\label{sec:prelim:transformer_forward}
In this section, we introduce the forward pass in the transformer.
\begin{definition}[Link Function]\label{def:link_func}
    We define a link function $\phi: [-1,1] \to [-1, 1]$, which we choose $\phi$ as follows $\phi(0) = -1,~~\phi(\pm 1) = 1,~~\phi'(0) = \phi'(\pm 1) = 0$, where $\phi'$ is the derivative of $\phi$. We assume $\phi$ is symmetric and sufficiently regular.
\end{definition}

Then we give the complete definition of the transformer forward pass.
\begin{definition}[Transformer Forward Pass]\label{def:transformer}
    Let $n, d, k \in \R$ be positive integers where $n$ is the number of samples, $d$ is the input dimension, and $k$ is the number of majority bits. Let $[d] = \{1,2,\hdots,d\}$ denote the set of the first $d$ positive integers. Let $x_1, \hdots, x_{\ell} \in \R^n$ be token vectors and $w \in \R^{\ell \times \ell}$ be a weight matrix. Let $\phi: [-1, 1] \to [-1, 1]$ be a link function as in Definition~\ref{def:link_func}. We define the transformer function $\mathsf{TF}: \R^{n \times \ell} \times \R^{\ell \times \ell} \to \R^{n \times \ell}$ as:  
    \begin{align*}
        \mathsf{TF}(x_1, \hdots, x_{\ell}; w) := (\wh{x}_1, \hdots, \wh{x}_{\ell}),
    \end{align*}
    where $\wh{x}_j = x_j$ for $j \in [d]$ and for $m \in \{d+1, \hdots, \ell\}$, we define:
    \begin{align*}
        \wh{x}_m :=  ~ \phi(\wh{z}_m), ~~~
        \wh{z}_m :=  ~\sum_{j=1}^{m-1} \sigma_j(w_m)x_j.
    \end{align*}
    The function $\sigma_j: \R^{\ell} \to [0,1]$ computes the attention scores as:
    \begin{align*}
        \sigma_j(w_m) := \frac{e^{w_{j,m}}}{\sum_{\alpha=1}^{m-1} e^{w_{\alpha,m}}},
    \end{align*}
    where $w_{j,m}$ is the $(j,m)$-th element of $w$, and the scores are subject to causal masking where $w_{j,m} = -\infty$ for $j \geq m$ or $m \leq d$.
\end{definition}

\section{Missing Proofs in Section~\ref{sec:prelim}}\label{sec:append_prelim}
In this section, we introduce some missing proofs in Section~\ref{sec:prelim}.
\subsection{Basic Facts}\label{sec:append_prelim:basic_fact}

We state a simple fact for the binomial coefficient.

\begin{fact}[Formal version of Fact~\ref{fac:binomial_thm} in Section~\ref{sec:prelim:basic_fact}]\label{fac:binomial_thm_append}
    Considering $t,n,k \in \mathbb{Z}$, we have
    \begin{align*}
        (1+t)^n 
        = & ~ \sum_{k=0}^n \binom{n}{k} t^k.
    \end{align*}
\end{fact}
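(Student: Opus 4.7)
The plan is to prove the Binomial Theorem by induction on $n$, which is the cleanest route and avoids any combinatorial bookkeeping that would require additional machinery not yet set up in the paper. A direct combinatorial proof (expanding $(1+t)(1+t)\cdots(1+t)$ and counting, for each $k$, the $\binom{n}{k}$ ways to choose which $k$ of the $n$ factors contribute $t$) would also work and is arguably more illuminating, but the inductive proof is more self-contained in a LaTeX appendix.

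First I would handle the base case $n = 0$, where both sides equal $1$ (the left side by convention $(1+t)^0 = 1$, the right side because the sum collapses to $\binom{0}{0} t^0 = 1$). For the inductive step, I would assume the identity for some $n \ge 0$ and multiply both sides by $(1+t)$. On the right-hand side, this gives $\sum_{k=0}^n \binom{n}{k} t^k + \sum_{k=0}^n \binom{n}{k} t^{k+1}$. Re-indexing the second sum by letting $k' = k+1$ and then combining it with the first sum term-by-term, I would invoke Pascal's identity $\binom{n}{k} + \binom{n}{k-1} = \binom{n+1}{k}$ (with the boundary conventions $\binom{n}{-1} = 0$ and $\binom{n}{n+1} = 0$) to collapse the expression into $\sum_{k=0}^{n+1} \binom{n+1}{k} t^k$, which is the desired form for $n+1$.

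There is no substantive obstacle here; the identity is folklore and the proof is routine. The only minor care needed is in the re-indexing step, where one must correctly align the summation limits so that Pascal's identity applies at each power of $t$. If the paper wants a fully self-contained treatment, I would also briefly remark that Pascal's identity itself follows from the definition $\binom{n}{k} = \frac{n!}{k!(n-k)!}$ by a direct algebraic manipulation, or cite it as standard (e.g., via the same textbook reference used for Fact~\ref{fac:cardinality}).
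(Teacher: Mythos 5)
Your proof is correct, but it takes a more labor-intensive route than the paper does. The paper treats the general binomial theorem $(x+y)^n = \sum_{k=0}^n \binom{n}{k} x^{n-k} y^k$ as a known result and simply substitutes $x = 1$, $y = t$ — a one-line specialization. You instead reprove the binomial theorem from scratch by induction on $n$, invoking Pascal's identity in the inductive step. Your version buys self-containment (it does not presuppose the general theorem), at the cost of extra bookkeeping; the paper's version buys brevity by delegating to a standard named result, which is appropriate for a fact explicitly labeled as folklore. Both are valid, and the difference is purely one of exposition rather than substance.
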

\begin{proof}
    This directly follows from the binomial theorem $(x+y)^n=\sum_{k=0}^n \binom{n}{k} x^{n-k}y^k$ by setting $x=1, y=t$. 
\end{proof}

\begin{lemma}[Clipping Property, Formal version of Lemma~\ref{lem:clipping} in Section~\ref{sec:prelim:basic_fact}]\label{lem:clipping_append}

If the following conditions hold:
\begin{itemize}
    \item We denote $f$ as a function such that $f: \{\pm 1\}^d \to \R$.
    \item Let the clipped version of $f$ be $\ov{f}(x) = \min\{\max\{f(x), -1\}, 1\}$.
    \item We denote $g$ as any binary function which satisfies $g: \{\pm 1\}^d \to \pm 1$.
\end{itemize}
Then the following statement is true:
\begin{align*}
    (g(x) - f(x))^2 \ge (g(x)-\ov{f}(x))^2.
\end{align*}
\end{lemma}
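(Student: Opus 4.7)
The plan is to prove this by a straightforward case analysis on the value of $f(x)$, since the clipping operation only changes $f(x)$ when it lies outside $[-1,1]$, and this change always moves it \emph{closer} to any point in $\{-1,+1\}$.

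First, I would partition $\mathbb{R}$ into the three regions $f(x) \in [-1, 1]$, $f(x) > 1$, and $f(x) < -1$. In the first region, the clipping operator is the identity, so $\bar{f}(x) = f(x)$ and the inequality holds with equality. The remaining two cases are symmetric, so I would argue one in detail and invoke symmetry for the other.

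Next, in the case $f(x) > 1$, we have $\bar{f}(x) = 1$. Since $g(x) \in \{\pm 1\}$, I would split into the two subcases. If $g(x) = 1$, then $(g(x) - f(x))^2 = (1 - f(x))^2 = (f(x) - 1)^2 > 0 = (g(x) - \bar{f}(x))^2$. If $g(x) = -1$, then $(g(x) - f(x))^2 = (1 + f(x))^2 \geq 4 = (g(x) - \bar{f}(x))^2$, where the inequality uses $1 + f(x) > 2$. The case $f(x) < -1$ is handled analogously, giving $\bar{f}(x) = -1$ and the same conclusion.

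A slightly slicker alternative would be to observe that the map $u \mapsto (g(x) - u)^2$, viewed as a function of $u \in \mathbb{R}$ with $g(x) \in [-1,1]$ fixed, is minimized at $u = g(x)$ and is monotonically nondecreasing as $u$ moves away from $g(x)$. Since the projection $\bar{f}(x) = \min\{\max\{f(x), -1\}, 1\}$ is the Euclidean projection of $f(x)$ onto the convex interval $[-1,1]$, which contains $g(x)$, we have $|g(x) - \bar{f}(x)| \leq |g(x) - f(x)|$, and squaring preserves the inequality. This is essentially the standard nonexpansiveness of projections onto convex sets. I do not foresee a substantive obstacle here; the only subtlety is to keep track of the signs in the case analysis, but the inequality is tight (equality) precisely when $f(x) \in [-1,1]$.
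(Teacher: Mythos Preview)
Your proposal is correct and essentially matches the paper's proof: both argue by exhaustive case analysis over the same six subcases, with the only cosmetic difference being that the paper splits first on $g(x)\in\{+1,-1\}$ and then on the range of $f(x)$, whereas you split in the opposite order. Your alternative projection/nonexpansiveness argument is a clean way to package the same inequality $|g(x)-\ov f(x)|\le|g(x)-f(x)|$ and is not used in the paper, but it does not change the substance.
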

\begin{proof}

There are two cases to consider, since $g(x)$ is either $+1$ or $-1$:

{\bf Case 1:} Suppose $g(x) = 1$.

Then we have 
\begin{align*}
    (g(x) - f(x))^2 = (1 - f(x))^2.
\end{align*}
Since $\ov{f}(x)$ is the clipped version of $f(x)$, it satisfies:
\begin{align*}
    -1 \leq \ov{f}(x) \leq 1.
\end{align*}
Therefore we have different scenarios:
\begin{itemize}
    \item If $f(x) > 1$, then $\ov{f}(x) = 1$ and $(1 - f(x))^2 \geq 0 = (1 - \ov{f}(x))^2$.
    \item If $-1 \leq f(x) \leq 1$, then $\ov{f}(x) = f(x)$ and equality holds $(1 - f(x))^2 = (1 - \ov{f}(x))^2$.
    \item If $f(x) < -1$, then $\ov{f}(x) = -1$ and $(1 - f(x))^2 \geq 4 = (1+1)^2$.
\end{itemize}
Thus for $g(x) = 1$ the inequality always holds.

{\bf Case 2:} Suppose $g(x) = -1$.

Similarly we have 
\begin{align*}
    (g(x) - f(x))^2 = (-1 - f(x))^2.
\end{align*}
Considering the clipped function, we have
\begin{itemize}
    \item If $f(x) > 1$, then $\ov{f}(x) = 1$ and $(-1 - f(x))^2 \geq 4 = (-1 -1)^2 = (-1 - \ov{f}(x))^2$.
    \item If $-1 \leq f(x) \leq 1$, then $\ov{f}(x) = f(x)$ and $(-1 - f(x))^2 = (-1 - \ov{f}(x))^2$.
    \item If $f(x) < -1$ then $\ov{f}(x) = - 1$ and $(-1 - f(x))^2 \geq 0 = (-1+1)^2$.
\end{itemize}
Thus for $g(x) = -1$, the inequality also always holds.  
\end{proof}

\subsection{Coefficient Operator}\label{sec:append_prelim:coefficient}
We then provide the proof of the fact about the coefficient operator.
\begin{fact}[Formal version of Fact~\ref{fac:binom_to_tk} in Section~\ref{sec:prelim:coefficient}]\label{fac:binom_to_tk_append}
    Let $n,k\in\Z$ and $n\geq k$.
    Let $\mathsf{Coeff}_i[f(t)]$ be Definition~\ref{def:k_coefficient_operator}. 
    Let $t\in\R$. We have:
    \begin{align*}
        \binom{n}{k} = \mathsf{Coeff}_k[ (1+t)^n ].
    \end{align*}
\end{fact}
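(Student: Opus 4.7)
The plan is to reduce this fact to an immediate consequence of the binomial theorem (Fact~\ref{fac:binomial_thm_append}) combined with the definition of the coefficient operator (Definition~\ref{def:k_coefficient_operator}). Since both tools have already been established in the excerpt, the proof should amount to little more than writing down the expansion and reading off the desired coefficient.

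First, I would invoke Fact~\ref{fac:binomial_thm_append} to rewrite $(1+t)^n$ as the polynomial $\sum_{j=0}^n \binom{n}{j} t^j$. This identifies $(1+t)^n$ as an order-$n$ polynomial in $t$ whose coefficient of $t^j$ is precisely $\binom{n}{j}$ for each $j\in\{0,1,\ldots,n\}$. Because we assume $n\geq k$, the index $k$ lies in this range, so the coefficient of $t^k$ is well defined.

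Next, I would apply Definition~\ref{def:k_coefficient_operator} directly: the operator $\mathsf{Coeff}_k[\cdot]$ extracts the coefficient of $t^k$, and by the expansion just obtained, this coefficient equals $\binom{n}{k}$. Chaining these two steps yields the identity
\begin{align*}
    \mathsf{Coeff}_k[(1+t)^n] = \mathsf{Coeff}_k\Bigl[\sum_{j=0}^n \binom{n}{j} t^j\Bigr] = \binom{n}{k},
\end{align*}
which is exactly what we want.

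There is no real obstacle here, since the proof is a direct chaining of a named expansion with a notational definition. The only minor subtlety worth flagging is the hypothesis $n\geq k$, which ensures that $t^k$ actually appears in the binomial expansion (otherwise the extracted coefficient would be trivially zero, not $\binom{n}{k}$, though the two still agree under the usual convention $\binom{n}{k}=0$ for $k>n$). Hence the proof can be kept to essentially two lines.
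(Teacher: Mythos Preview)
Your proposal is correct and mirrors the paper's own proof: expand $(1+t)^n$ via Fact~\ref{fac:binomial_thm_append} and then read off the coefficient of $t^k$ using Definition~\ref{def:k_coefficient_operator}. The paper's argument is identical in structure and length.
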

\begin{proof}
    By Fact~\ref{fac:binomial_thm}, we can obtain that:
    \begin{align*}
        (1+t)^n = \sum_{i=0}^n\binom{n}{i}t^i.
    \end{align*}
    Thus, we can trivially obtain by Definition~\ref{def:k_coefficient_operator} that 
    \begin{align*}
        \binom{n}{k} = \mathsf{Coeff}_k [ (1+t)^n ],
    \end{align*}
    since operator $\mathsf{Coeff}_k[\cdot]$ extracts the $k$-th coefficient of the polynomial function in it.
\end{proof}

\subsection{Calculations about Binomial Coefficients}\label{sec:append:binom_coeff}

In this section, we introduce some facts about the calculations of binomial coefficients.

\begin{lemma}[Formal version of Lemma~\ref{lem:ratio_of_card_pplus_p} in Section~\ref{sec:prelim:cal_bio}]\label{lem:append_ratio_of_card_pplus_p}
If the following conditions hold
\begin{itemize}
    \item Let $m,d,k$ be non-negative integers. 
    \item Assume that $d\geq m\geq 0.5k$ and $d-m\ge 0.5k$.
    \item Let $A := \sum_{j=0}^{k/2}\binom{m}{j}\binom{d-m}{k-j}$. 
    \item Let $B := \frac{1}{2} \binom{d}{k}$. 
    \item Let $\Gamma(t) := (1+t)^{d-m}$
    \item Let $\Delta(t):= \sum_{j=0}^{k/2} \binom{m}{j}t^j -\sum_{j=k/2+1}^m \binom{m}{j} t^j$. 
\end{itemize}
Then we can show
\begin{itemize}
\item Part 1. We have
\begin{align*}
A = \frac{1}{2} \binom{d}{k} +\frac{1}{2} \mathsf{Coeff}_{k}[\Gamma(t) \Delta(t)].
\end{align*} 
\item Part 2.
We can bound
\begin{align*}
    | \frac{A}{B} - 1 | \leq e^{-\Omega(d)}.
\end{align*}    
\end{itemize}
\end{lemma}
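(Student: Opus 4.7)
The core identity I would invoke is Vandermonde's convolution: $\binom{d}{k}=\sum_{j=0}^{\min(m,k)}\binom{m}{j}\binom{d-m}{k-j}$. Define the complementary tail $A' := \sum_{j=k/2+1}^{\min(m,k)}\binom{m}{j}\binom{d-m}{k-j}$ so that $A+A' = \binom{d}{k}$. Then the elementary rewriting $2A = (A+A') + (A-A') = \binom{d}{k} + (A-A')$ gives
\begin{align*}
A \;=\; \tfrac{1}{2}\binom{d}{k} + \tfrac{1}{2}(A-A').
\end{align*}
So Part 1 reduces to identifying $A-A'$ with $\mathsf{Coeff}_k[\Gamma(t)\Delta(t)]$. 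Expanding $\Gamma(t)=\sum_{i\ge 0}\binom{d-m}{i}t^i$ by Fact~\ref{fac:binomial_thm_append} and multiplying by $\Delta(t)$ termwise, the coefficient of $t^k$ in the product is exactly $\sum_{j=0}^{k/2}\binom{m}{j}\binom{d-m}{k-j}-\sum_{j=k/2+1}^{m}\binom{m}{j}\binom{d-m}{k-j}$. (Terms with $j>k$ or $k-j<0$ vanish automatically from the binomial convention, and terms with $j>m$ vanish via $\binom{m}{j}=0$, so the truncation to $j\le\min(m,k)$ in $A'$ is harmless.) This matches $A-A'$ and finishes Part 1.

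\textbf{Plan for Part 2.} Substituting Part 1 into the ratio yields the clean form
\begin{align*}
\Bigl|\tfrac{A}{B}-1\Bigr| \;=\; \Bigl|\tfrac{2A-\binom{d}{k}}{\binom{d}{k}}\Bigr| \;=\; \Bigl|\tfrac{\mathsf{Coeff}_k[\Gamma(t)\Delta(t)]}{\binom{d}{k}}\Bigr|.
\end{align*}
Hence Part 2 follows the moment we control the relative size of the mixed coefficient $\mathsf{Coeff}_k[\Gamma(t)\Delta(t)]$. This is precisely the content of the bounding lemma promised as Lemma~\ref{lem:append:negligible_terms} in Appendix~\ref{sec:bound_binomial_coefficient}, which the Discussion section flags as the novel technical ingredient (the counterpart in the parity analysis of \cite{ks24} was easy because $\Delta_{\mathrm{par}}(t)=(1-t)^m$ closed up algebraically). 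So at this layer the proof is just: invoke Part 1, then invoke Lemma~\ref{lem:append:negligible_terms}, and combine.

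\textbf{Where the difficulty lives.} The routine step is Part 1, which is a one-line Vandermonde split plus a coefficient-reading. The real obstacle is Part 2, because $\Delta(t)=\sum_{j=0}^{k/2}\binom{m}{j}t^j-\sum_{j=k/2+1}^{m}\binom{m}{j}t^j$ is not a product of nice factors the way $(1-t)^m$ is. One cannot simply evaluate $\Gamma(t)\Delta(t)$ at a special $t$; instead the coefficient must be extracted term by term, and cancellation between the positive and negative halves of $\Delta(t)$ (whose sum over all $j$ would just give $(1+t)^m$) must be exhibited quantitatively. I expect the argument in Lemma~\ref{lem:append:negligible_terms} to proceed by writing $\mathsf{Coeff}_k[\Gamma(t)\Delta(t)]$ as a signed sum of products of binomials $\binom{m}{j}\binom{d-m}{k-j}$, pairing terms symmetrically around $j=m/2$ (using $k=\Theta(d)$ and the assumption $m,d-m\ge k/2$ to stay in the bulk of the distribution), and then bounding the residual tail mass via Chernoff-type estimates on a symmetric binomial distribution. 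The output $e^{-\Omega(d)}$ is consistent with the anticoncentration rate for the binomial near its median, which is the only realistic source of exponential decay here.
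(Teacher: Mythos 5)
Your proof is correct and is essentially the paper's argument in a slightly different packaging: the paper rewrites $\Delta(t) = 2\sum_{j\le k/2}\binom{m}{j}t^j - (1+t)^m$ and then applies the coefficient operator, which is algebraically equivalent to your Vandermonde split $A+A'=\binom{d}{k}$ followed by reading $A-A'$ off as $\mathsf{Coeff}_k[\Gamma(t)\Delta(t)]$. For Part~2 both arguments reduce, as you note, to the residual-cancellation bound deferred to Lemma~\ref{lem:append:negligible_terms}.
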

\begin{proof} 
{\bf Part 1: Computing $A$.}

By our definition of $\Delta(t)$, we have:
\begin{align}\label{eq:rewrite_Delta_t}
    \Delta(t) 
    = & ~ \sum_{j=0}^{k/2} \binom{m}{j} t^j -\sum_{j=k/2+1}^m \binom{m}{j} t^j \notag \\
    = & ~ 2 \sum_{j=0}^{k/2} \binom{m}{j} t^{j} -   \sum_{j=0}^m \binom{m}{j} t^j \notag \\
    = & ~ 2 \sum_{j=0}^{k/2} \binom{m}{j} t^{j} - (1+t)^m 
\end{align}
where the first step is based on the definition of $\Delta(t)$, the second step follows from canceling $\sum_{j=0}^{k/2} \binom{m}{j} t^j$ from both terms, the last step can be derived by Fact~\ref{fac:binom_to_tk}.

By Fact~\ref{fac:binom_to_tk}, we have:
 
\begin{align}\label{eq:rewrite_Gamma_t}  
    \binom{d - m}{k - j} = & ~ \mathsf{Coeff}_{k-j}[(1+t)^{d-m}] \notag \\
    = & ~ \mathsf{Coeff}_{k-j}[\Gamma(t)]
\end{align}
where the first step comes from  Fact~\ref{fac:binom_to_tk}, and the last step is derived from the definition of $\Gamma(t)$.

Then we have 
\begin{align*}
    A = & ~\sum_{j=0}^{k/2} \binom{m}{j} \binom{d-m}{k-j} \\
    = & ~ \sum_{j=0}^{k/2} \binom{m}{j} \mathsf{Coeff}_{k-j}[\Gamma(t) ]\\
    = & ~ \sum_{j=0}^{k/2} \binom{m}{j} \mathsf{Coeff}_{k}[ \Gamma(t) t^{j}]\\
    = & ~ \mathsf{Coeff}_{k}[ \Gamma(t)  \sum_{j=0}^{k/2} \binom{m}{j} t^{j}]\\
    = & ~ \frac{1}{2}\mathsf{Coeff}_{k}[ \Gamma(t)  2 \sum_{j=0}^{k/2} \binom{m}{j} t^{j}]\\
    = & ~ \frac{1}{2}\mathsf{Coeff}_{k}[ \Gamma(t) \cdot ((1+t)^m + \Delta(t) )]\\
    = & ~ \frac{1}{2}\mathsf{Coeff}_{k}[(1+t)^d + \Gamma(t) \Delta (t) ] \\
    = & ~ \frac{1}{2} \mathsf{Coeff}_{k}[(1+t)^d] + \frac{1}{2}\mathsf{Coeff}_{k}[ \Gamma(t) \Delta (t)]\\
    = & ~ \frac{1}{2} \binom{d}{k} +\frac{1}{2} \mathsf{Coeff}_{k}[ \Gamma(t) \Delta (t) ] 
\end{align*}
where the first step can be derived by the definition of $A$, the second step is based on Fact~\ref{fac:binom_to_tk}, the third step follows from Fact~\ref{fac:coeff_op_change_index}, the fourth step follows from the linearity of coefficient extraction in Fact~\ref{fac:coeff_op_lin}, the fifth step follows from simple algebra, the sixth step follows from Eq.~\eqref{eq:rewrite_Delta_t}, the seventh step follows from simple algebra, the eighth step follows from the linearity of coefficient extraction in Fact~\ref{fac:coeff_op_lin}, and the ninth step follows from Fact~\ref{fac:binom_to_tk}.

{\bf Part 2. Bounding $|\frac{A}{B} - 1|$.} 
Note that
\begin{align*}
|\frac{A}{B} - 1| = \frac{ \frac{1}{2} \binom{d}{k} + \frac{1}{2} \mathsf{Coeff}_k[\Gamma(t) \Delta(t)] }{ \frac{1}{2} \binom{d}{k}  } - 1 = \frac{\mathsf{Coeff}_k[\Gamma(t) \Delta(t)]}{\binom{d}{k}}.
\end{align*}
where the first step can be directly derived by the definition of $A$ and $B$, and the second step follows simple algebra.

Thus, we know that 
\begin{align*}
    | \frac{A}{B} - 1 | \leq e^{-\Omega(d)}.
\end{align*}
is equivalent to
\begin{align*}
| \frac{ \mathsf{Coeff}_k [ \Lambda(t) \Delta(t) ] }{ \binom{d}{k} } | \leq e^{-\Omega(d)}.
\end{align*}

Therefore, using Fact~\ref{lem:append:negligible_terms} to do the straightforward calculation of expressing details of binomial coefficients, we can conclude that 
\begin{align*}
| \frac{ \mathsf{Coeff}_k [ \Lambda(t) \Delta(t) ] }{ \binom{d}{k} } | \leq e^{-\Omega(d)},
\end{align*}
which further implies that 
\begin{align*}
    | \frac{A}{B} - 1 | \leq e^{-\Omega(d)}.
\end{align*}

This finishes the proof. 
\end{proof}
\section{Missing Proofs in Section~\ref{sec:main}}\label{sec:append_main}
Here we present the previously omitted proofs in Section~\ref{sec:main}.

\subsection{Main Theorem}\label{sec:append_main:main}

Here we introduce the formal version of our main theorem.

\begin{theorem}[Hardness of Polynomial-Sample Majority, Formal version of Theorem~\ref{thm:finite_sample_majority_2} in Section~\ref{sec:main:main}]\label{thm:finite_sample_majority_2_append}

If the following holds:
\begin{itemize}
    \item Let $c = 4c_1 + 4c_2 + 2c_3 + 2c_4 + 1$.
    \item Let $c_i > 0$ for all $i \in [4]$.
    \item $n = \Omega( d^{c} ) $.
    \item Let $k = \Theta(d)$.
    \item $\|\nabla f_{\theta}\|_2 =  O (d^{c_1} )$.
    \item Let $S \in \S$ be a fixed underlying support.
    \item Let $T= O(d^{c_3})$
\end{itemize}
Then the following statement is true:
\begin{itemize}
    \item There exists an $O (d^{-c_2} )$-approximate gradient oracle $\wt{\nabla}$
  such that with probability $1 - e^{-\Omega(d)}$ over the random choice of samples, 
  for \emph{any} (possibly randomized) iterative algorithm ${\cal A}$ making at most $T$ queries 
  to $\wt{\nabla}L_n$, the output $\theta({\cal A})$ satisfies
    \begin{align*}
       \E_{p,x} [ (\mathsf{MAJ}(x,S) - f_{\theta({\cal A})}(x) )^2 ]
     \geq  1 - O (d^{-c_4} ).
    \end{align*} 
\end{itemize}
\end{theorem}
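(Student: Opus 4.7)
The plan is to derive Theorem~\ref{thm:finite_sample_majority_2_append} as a direct consequence of the general MSE lower bound in Theorem~\ref{thm:mse_bound}, treating it essentially as a ``parameter instantiation'' corollary. First I would invoke Theorem~\ref{thm:mse_bound} with the same algorithm ${\cal A}$ and the same $O(d^{-c_2})$-approximate gradient oracle $\wt{\nabla}$ constructed via Definition~\ref{def:approx_grad_oracle}. This gives the bound
\begin{align*}
\E_{S\in\S,x}[(\mathsf{MAJ}(x,S) - f_{\theta({\cal A})}(x))^2] \ge 1 - \frac{4T}{\epsilon^2}\sqrt{\frac{d}{n}}\sup_{\theta,x}\|\nabla f_\theta(x)\|_2^2 - e^{-\Omega(d)},
\end{align*}
which holds with probability at least $1 - e^{-\Omega(d)}$ over the draw of samples (inheriting the high-probability guarantees of Lemma~\ref{lem:grad_var} and Lemma~\ref{lem:ppp}). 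The only real work is then to show that, under the hypothesized polynomial scaling of $n,T,\epsilon,\|\nabla f_\theta\|_2$, the subtractive ``noise'' term collapses to $O(d^{-c_4})$.

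Next I would substitute $\epsilon = \Theta(d^{-c_2})$, $T = O(d^{c_3})$, $\sup_{\theta,x}\|\nabla f_\theta(x)\|_2^2 = O(d^{2c_1})$, and $\sqrt{d/n} = O(d^{(1-c)/2})$ (using $n = \Omega(d^c)$). The product becomes
\begin{align*}
\frac{4T}{\epsilon^2}\sqrt{\frac{d}{n}}\sup_{\theta,x}\|\nabla f_\theta(x)\|_2^2 \;=\; O\bigl(d^{c_3 + 2c_2 + (1-c)/2 + 2c_1}\bigr).
\end{align*}
Requiring this exponent to be at most $-c_4$ is equivalent to the inequality $c \ge 4c_1 + 4c_2 + 2c_3 + 2c_4 + 1$, which is exactly how the constant $c$ is defined in the theorem statement. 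Thus the noise term is $O(d^{-c_4})$, and since $e^{-\Omega(d)}$ is absorbed into $O(d^{-c_4})$ for any constant $c_4 > 0$, the stated lower bound $1 - O(d^{-c_4})$ follows.

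I do not expect any genuine obstacle here, since all the non-trivial ingredients (the gradient-variance bound of Lemma~\ref{lem:grad_var} via the binomial-coefficient calculation in Lemma~\ref{lem:ratio_of_card_pplus_p}, the oracle guarantee of Lemma~\ref{lem:privacy_oracle}, the $L_\infty$ lower bound of Theorem~\ref{thm:lower_l_infty_error}, and the clipping reduction of Lemma~\ref{lem:clipping}) have already been assembled into Theorem~\ref{thm:mse_bound}. The one item that deserves a careful sentence in the write-up is that the high-probability event ``samples are well-behaved'' is the same event under which both Lemma~\ref{lem:grad_var} and Lemma~\ref{lem:ppp} hold simultaneously, so a union bound over these two $e^{-\Omega(d)}$-failure events still leaves probability $1 - e^{-\Omega(d)}$. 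The remainder is purely arithmetic verification of the exponent $c = 4c_1 + 4c_2 + 2c_3 + 2c_4 + 1$, which was defined precisely to make the plug-in tight.
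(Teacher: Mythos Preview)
Your proposal is correct and follows essentially the same route as the paper: invoke Theorem~\ref{thm:mse_bound} and then plug in the polynomial scalings for $n$, $T$, $\epsilon$, and $\sup_{\theta,x}\|\nabla f_\theta(x)\|_2$ to reduce the subtractive term to $O(d^{-c_4})$. The only cosmetic difference is that the paper parameterizes $\epsilon$ as $V^{1/4}$ (where $V$ is the variance bound) rather than setting $\epsilon=\Theta(d^{-c_2})$ directly as you do; your choice is arguably cleaner since it matches the oracle accuracy stated in the theorem, and your exponent arithmetic $c_3+2c_2+(1-c)/2+2c_1=-c_4$ checks out exactly under $c=4c_1+4c_2+2c_3+2c_4+1$.
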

\begin{proof}

Here for convenient of writing proofs, we create the notation $V$ to represent the a complicated quantity
\begin{align}\label{def:V:poly}
    V:= 2\sqrt{\frac{d}{n}} \sup_{\theta,x} \| \wt{\nabla} f_\theta(x)\|_2^2
\end{align}

We can give an upper bound for $V$ with the following sense
\begin{align}\label{eq:upper_bound_V:poly}
V = & ~ 2\sqrt{\frac{d}{n}} \sup_{\theta,x} \| \wt{\nabla} f_\theta(x)\|_2^2 \notag \\
\leq & ~ O(1) \sqrt{\frac{d}{ d^c } } \sup_{\theta,x} \| \wt{\nabla} f_\theta(x)\|_2^2 \notag \\
\leq & ~ O(1) \sqrt{\frac{d}{ d^c }} d^{2c_1+2c_2} \notag \\
\leq & ~ O(1) d^{2c_1 + 2c_2 + 0.5  - 0.5 c}
\end{align}
where the first step follows from definition of $V$, the second step follows from $n = \Omega( d^c)$, the third step follows from $\sup_{\theta,x} \|\nabla f_\theta(x)\|_2^2  = O(d^{2c_1})$ and $\|\wt{\nabla} f_\theta(x)\|$ is $O(d^{-c_2})$-approximation $\|\nabla f_\theta(x)\|$, and the last step follows from simple algebra.

We can show that
\begin{align}\label{eq:bound_exp_bound_c_4}
& ~ (0.5c -0.5 -2c_1- 2c_2)/2 - c_3 \notag \\
= & ~ (0.5 (4c_1+4c_2+2c_3 +2c_4 +1) -0.5 -2c_1 -2 c_2  )/2 - c_3 \notag \\
= & ~ ( 2c_3 + 2c_4 )/2 - c_3 \notag \\
\geq & ~ c_4
\end{align}
where the first step is based on choosing an appropriate $c$.

Next, we can prove
\begin{align}\label{eq:upper_bound_T_eps_d_n_sup:poly}
\frac{4T}{\epsilon^2}\sqrt{\frac{d}{n}} \sup_{\theta,x} \| \wt{\nabla} f_\theta(x)\_2^2 
= & ~ 2 T \cdot \frac{1}{\epsilon^2} \cdot V \notag \\
\leq & ~ 2 T \cdot  (V)^{-1/2} \cdot V \notag \\
= & ~ 2 T \cdot V^{1/2} \notag \\
\leq & ~ O(1) T \cdot d^{ (2c_1+2c_2+0.5 -0.5c )/2 } \notag \\
\leq & ~ O(1) \cdot d^{c_3} \cdot d^{ (2c_1+2c_2+0.5 -0.5c )/2 } \notag \\
\leq & ~ O(1)  \cdot  d^{ c_3+ (2c_1+2c_2+0.5 -0.5c )/2 }\notag \\
\leq & ~ O(1) \cdot d^{-c_4}
\end{align}
where the first step is based on the definition of $V$, the second step follows from choosing $\epsilon = V^{1/4}$, the third step can be derived by simple algebra, the fourth step follows from Eq.~\eqref{eq:upper_bound_V:poly}, the fifth step follows from $T \leq O(d^{c_3})$, the sixth step from simple algebra, and the last step follows from Eq.~\eqref{eq:bound_exp_bound_c_4}.

We can show 
\begin{align*}
    \E_{S \in \S, x}[(\mathsf{MAJ}(x,S) - f_{\theta({\cal A})}(x))^2] 
    \geq & ~ 1 - \frac{4T}{\epsilon^2}\sqrt{\frac{d}{n}} \sup_{\theta,x} \| \wt{\nabla} f_\theta(x)\|_2^2 - e^{-\Omega(d)} \\
    \geq & ~ 1 - O( d^{-c_4}) - e^{-\Omega(d)} \\
    \geq & ~ 1 - O( d^{-c_4})
\end{align*}
where the first step can be derived by Theorem~\ref{thm:mse_bound} with approximate gradient oracle $\wt{\nabla} f_{\theta}(x)$, the second step is based on Eq.~\eqref{eq:upper_bound_T_eps_d_n_sup:poly}, the last step follows from basic algebra.
\end{proof}

\begin{theorem}[Hardness of Exponential-Sample Majority, Formal version of Theorem~\ref{thm:finite_sample_majority_1} in Section~\ref{sec:main:main}]\label{thm:finite_sample_majority_1_append}

If the following conditions hold:
  \begin{itemize}
      \item Let $n = e^{\Omega(d)}$.
      \item Let $k = \Theta(d)$.
      \item Let $\|\nabla f_{\theta}\|_2 =  \poly(d)$. 
      \item Let $S \in \S$ be a fixed underlying support. 
      \item Let $T = \poly(d)$.
  \end{itemize}
Then the following statement is true:
\begin{itemize}
    \item There exists an $e^{-\Omega(d)}$-approximate gradient oracle $\wt{\nabla}$ 
  such that with probability $1 - e^{-\Omega(d)}$ over the random choice of samples, 
  for \emph{any} (possibly randomized) iterative algorithm ${\cal A}$ that makes at most $T$ 
  to $\wt{\nabla} L_n$, the output $\theta({\cal A})$ satisfies
  \begin{align*}
      \E_{S,x} [ (\mathsf{MAJ}(x,S) - f_{\theta({\cal A})}(x) )^2 ]
     \geq  1 - e^{-\Omega(d)}.
  \end{align*}
\end{itemize}
\end{theorem}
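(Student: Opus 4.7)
The plan is to mimic the structure of the proof of Theorem~\ref{thm:finite_sample_majority_2_append}, substituting the exponential bounds on $n$ and the polynomial bounds on $T$ and $\|\nabla f_\theta\|_2$ at the appropriate places. The backbone is Theorem~\ref{thm:mse_bound_append}, which, once we specialize it to an $\epsilon$-approximate gradient oracle $\wt{\nabla}$, yields
\begin{align*}
\E_{S,x}[(\mathsf{MAJ}(x,S)-f_{\theta({\cal A})}(x))^2]
\ge 1-\frac{4T}{\epsilon^2}\sqrt{\tfrac{d}{n}}\sup_{\theta,x}\|\wt{\nabla} f_\theta(x)\|_2^2 - e^{-\Omega(d)}.
\end{align*}
So everything reduces to showing that the middle term is $e^{-\Omega(d)}$ under the theorem's hypotheses, for a suitable choice of $\epsilon$.

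First I would introduce the shorthand $V := 2\sqrt{d/n}\sup_{\theta,x}\|\wt{\nabla} f_\theta(x)\|_2^2$, exactly as in the polynomial-sample proof. Using $n=e^{\Omega(d)}$, the factor $\sqrt{d/n}$ is at most $\sqrt{d}\cdot e^{-\Omega(d)}=e^{-\Omega(d)}$, since polynomial growth in $d$ is absorbed by the exponential decay. The hypothesis $\|\nabla f_\theta\|_2=\poly(d)$, together with the fact that $\wt{\nabla}$ is an $\epsilon$-perturbation of $\nabla$ with $\epsilon$ to be chosen $e^{-\Omega(d)}$, gives $\sup\|\wt{\nabla} f_\theta\|_2^2=\poly(d)$. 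Multiplying, $V\le \poly(d)\cdot e^{-\Omega(d)}=e^{-\Omega(d)}$.

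Next I would pick the oracle accuracy $\epsilon:=V^{1/4}$, which is $e^{-\Omega(d)}$ and hence certifies the claimed $e^{-\Omega(d)}$-approximate gradient oracle. With this choice the key quantity becomes
\begin{align*}
\frac{4T}{\epsilon^2}\sqrt{\tfrac{d}{n}}\sup_{\theta,x}\|\wt{\nabla}f_\theta(x)\|_2^2
=2T\cdot\frac{V}{\epsilon^2}
=2T\cdot V^{1/2}
\le \poly(d)\cdot e^{-\Omega(d)}
=e^{-\Omega(d)},
\end{align*}
where I used $T=\poly(d)$ and the bound $V\le e^{-\Omega(d)}$. Substituting this into the Theorem~\ref{thm:mse_bound_append} bound produces $\E[(\mathsf{MAJ}(x,S)-f_{\theta({\cal A})}(x))^2]\ge 1 - e^{-\Omega(d)} - e^{-\Omega(d)} = 1 - e^{-\Omega(d)}$.

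The step I expect to be the most delicate is not the algebra itself but the bookkeeping: verifying that the $e^{-\Omega(d)}$ in the final bound can simultaneously absorb (i) the slack from Lemma~\ref{lem:ppp_append} (the $|\E_S[\mathsf{MAJ}(x,S)]|\le e^{-\Omega(d)}$ term), (ii) the failure probability $1-e^{-\Omega(d)}$ in Lemma~\ref{lem:privacy_oracle_append} and Lemma~\ref{lem:grad_var_append} that governs when the approximate-gradient argument applies, and (iii) the $2T\cdot V^{1/2}$ term computed above. All three are exponentially small in $d$ because $n=e^{\Omega(d)}$ and $T,\|\nabla f_\theta\|_2$ are only polynomial, so the exponential rate is preserved; the remainder is essentially a verification exercise parallel to the polynomial-sample case, with the $d^{-c_i}$ bounds replaced throughout by $e^{-\Omega(d)}$.
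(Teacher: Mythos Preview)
Your proposal is correct and follows essentially the same route as the paper's proof: define $V$, bound $V\le e^{-\Omega(d)}$ from $n=e^{\Omega(d)}$ and the polynomial gradient bound, pick $\epsilon$ as a fractional power of $V$, and invoke Theorem~\ref{thm:mse_bound_append}. The only cosmetic difference is that the paper sets $\epsilon=V^{1/3}$ (giving the middle term $2T\cdot V^{1/3}$) whereas you set $\epsilon=V^{1/4}$ (giving $2T\cdot V^{1/2}$); both are $e^{-\Omega(d)}$, so the discrepancy is immaterial.
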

\begin{proof}

Here for convenience of writing proofs, we create the notation $V$ to represent a complicated quantity
\begin{align}\label{def:V}
    V:= 2\sqrt{\frac{d}{n}} \sup_{\theta,x} \| \wt{\nabla} f_\theta(x)\|_2^2
\end{align}

We can upper bound $V$ in the following sense
\begin{align}\label{eq:upper_bound_V}
V = & ~ 2\sqrt{\frac{d}{n}} \sup_{\theta,x} \| \wt{\nabla} f_\theta(x)\|_2^2 \notag \\
= & ~ 2 \sqrt{\frac{d}{e^{\Omega(d)} }} \sup_{\theta,x} \| \wt{\nabla} f_\theta(x)\|_2^2 \notag \\
\leq & ~ 2 \sqrt{\frac{d}{e^{\Omega(d)} }} \poly(d) \notag \\
\leq & ~ e^{-\Omega(d)}
\end{align}
where the first step is based on definition of $V$, the second step follows from $n = e^{\Omega(d)}$, the third step can be derived by $\sup_{\theta,x} \|\nabla f_\theta(x)\|_2^2  \leq \poly(d)$ and $\| \wt{\nabla} f_{\theta}(x) \|_2$ is $e^{-\Omega(d)}$-approximation to $\| \nabla f_{\theta}(x) \|_2$ , and the last step is based on simple algebra.

Next, we can prove
\begin{align}\label{eq:upper_bound_T_eps_d_n_sup}
\frac{4T}{\epsilon^2}\sqrt{\frac{d}{n}} \sup_{\theta,x} \|\nabla f_\theta(x)\|_2^2 
= & ~ 2 T \cdot \frac{1}{\epsilon^2} \cdot V \notag \\
\leq & ~ 2 T \cdot  (V)^{-2/3} \cdot V \notag \\
= & ~ 2 T \cdot V^{1/3} \notag \\
\leq & ~ 2 T \cdot e^{-\Omega(d)} \notag \\
\leq & ~ 2 \cdot \poly(d) \cdot e^{-\Omega(d)} \notag \\
\leq & ~ e^{-\Omega(d)}
\end{align}
where the first step is based on the definition of $V$, the second step chooses $\epsilon = V^{1/3}$, the third step follows from basic algebra, the fourth step follows from Eq.~\eqref{eq:upper_bound_V}, the fifth step can be derived by the fact $T \leq \poly(d)$, the last step from simple algebra.

We can show 
\begin{align*}
    \E_{S \in \S, x}[(\mathsf{MAJ}(x,S) - f_{\theta({\cal A})}(x))^2] 
    \geq & ~ 1 - \frac{4T}{\epsilon^2}\sqrt{\frac{d}{n}} \sup_{\theta,x} \|\nabla f_\theta(x)\|_2^2 - e^{-\Omega(d)} \\
    \geq & ~ 1 - e^{-\Omega(d)} - e^{-\Omega(d)} \\
    \geq & ~ 1 - e^{-\Omega(d)}
\end{align*}
where the first step follows from Theorem~\ref{thm:mse_bound}, the second step follows from Eq.~\eqref{eq:upper_bound_T_eps_d_n_sup}, the last step follows from simple algebra.

\end{proof}

\subsection{Bounding the Gradient Variance}\label{sec:append_main:gradient_variance}
Here we introduce the proof of some facts and lemmas about bounding the gradient variance.

\begin{fact}[Formal version of Fact~\ref{fac:grad_var_tool} in Section~\ref{sec:main:gradient_variance}]\label{fac:grad_var_tool_append}
    If the following conditions hold:
        \begin{itemize}
            \item Let $\S$ be the set of all the $k$-size support sets over $\{\pm 1\}^d$.
            \item Let there be $n$ fixed empirical samples $\{(x_i, y_i)\}_{i=1}^n$ as defined in Definition~\ref{def:empirical_loss}. 
            \item Let the empirical inner product $\langle \cdot, \cdot \rangle_n$ empirical norm $\| \cdot \|_n$ as defined in Definition~\ref{def:empirical_loss}. 
            \item Let $h_S:=\maj(x,S)$ denote the majority function in Definition~\ref{def:maj_func} for any $S\in\S$.
            \item Let the gram matrix be $G_{\S}:=(\langle  h_{S_1},h_{S_2}\rangle_n)_{S_1,S_2\in \S}$, in which the largest eigenvalue $\lambda_{\max}(G_{\S})$ is upper bounded. 
        \end{itemize}
    The following statement is true:
    \begin{align*}
        \sum_{S\in\S}\langle f, h_S\rangle_n^2 \leq \lambda_{\max}(G_{\S}) \|f\|_n^2.
    \end{align*}
\end{fact}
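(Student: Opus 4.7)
The plan is to realize the left-hand side as the squared norm of a single linear image of $f$ and then apply a Rayleigh-quotient bound, together with the classical fact that $AA^\top$ and $A^\top A$ share the same nonzero eigenvalues. Concretely, I would identify each function $g: \{\pm 1\}^d \to \mathbb{R}$ appearing in the statement with its vector of evaluations at the $n$ empirical samples, writing $\vec{f}, \vec{h}_S \in \mathbb{R}^n$, and assemble the matrix $H \in \mathbb{R}^{n \times |\S|}$ whose columns are the $\vec{h}_S$.

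The first step is to unpack the empirical inner product and rewrite
\[
\sum_{S\in\S}\langle f, h_S\rangle_n^2 \;=\; \frac{1}{n^2}\,\vec{f}^\top H H^\top \vec{f},
\]
so the LHS becomes an explicit quadratic form in $\vec{f}$. The second step is the Rayleigh-quotient inequality:
\[
\frac{1}{n^2}\,\vec{f}^\top H H^\top \vec{f} \;\le\; \frac{1}{n^2}\,\lambda_{\max}(H H^\top)\,\|\vec{f}\|_2^2.
\]
The third step identifies this upper bound with the quantity in the statement: because $HH^\top$ and $H^\top H$ share all nonzero eigenvalues, and because $G_\S = \frac{1}{n} H^\top H$ by the definition given in the hypothesis, one has $\lambda_{\max}(HH^\top) = \lambda_{\max}(H^\top H) = n\,\lambda_{\max}(G_\S)$. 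Combining this with $\|\vec{f}\|_2^2 = n\,\|f\|_n^2$ turns the bound into $\lambda_{\max}(G_\S)\|f\|_n^2$, which is exactly what we want.

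Equivalently, and perhaps more cleanly, one can phrase the whole argument in operator-theoretic language: define the linear map $T:(\mathbb{R}^n,\langle\cdot,\cdot\rangle_n) \to \mathbb{R}^{|\S|}$ by $(Tf)_S := \langle f, h_S\rangle_n$, compute the adjoint $T^*c = \sum_S c_S h_S$ under the empirical inner product, and observe that $TT^* = G_\S$ as an operator on $\mathbb{R}^{|\S|}$. Then the LHS equals $\|Tf\|_2^2 = \langle T^*T f, f\rangle_n \le \lambda_{\max}(T^*T)\,\|f\|_n^2 = \lambda_{\max}(TT^*)\,\|f\|_n^2$, finishing the proof.

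There is no real obstacle to this argument; it is a standard spectral fact. The only thing to be careful about is the bookkeeping of the $1/n$ factors that arise because the domain uses the empirical inner product while the codomain uses the standard Euclidean inner product, so that the Gram matrix $G_\S$ defined with $\langle\cdot,\cdot\rangle_n$ in its entries is identified with $\frac{1}{n}H^\top H$ rather than $H^\top H$ itself.
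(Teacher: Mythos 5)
Your argument is correct, and it takes a genuinely different (and arguably cleaner) route than the paper's. The paper proceeds by decomposing $f = \sum_{S} \alpha_S h_S + f_0$ with $f_0$ orthogonal to the span of $\{h_S\}$, bounding $\|f\|_n^2 \ge \alpha^\top G_{\S}\alpha$, rewriting $\sum_S \langle f, h_S\rangle_n^2 = \|G_{\S}\alpha\|_2^2$, and then chaining $\|G_{\S}\alpha\|_2^2 \le \|G_{\S}^{1/2}\|^2\|G_{\S}^{1/2}\alpha\|_2^2 = \lambda_{\max}(G_{\S})\,\alpha^\top G_{\S}\alpha$. You instead work directly with the sampling matrix $H$ (equivalently, the operator $T$ and its adjoint $T^*$), writing the left-hand side as $\frac{1}{n^2}\vec f^\top HH^\top\vec f = \langle T^*Tf, f\rangle_n$, applying the Rayleigh quotient, and using $\lambda_{\max}(HH^\top)=\lambda_{\max}(H^\top H)=n\lambda_{\max}(G_{\S})$ together with $\|\vec f\|_2^2 = n\|f\|_n^2$. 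Both are Rayleigh-quotient arguments on the same Gram operator, but your version avoids introducing the (possibly non-unique) coefficient vector $\alpha$ and the associated orthogonal decomposition, which is a modest advantage: when the $h_S$ are linearly dependent, the paper's approach must tacitly fix a choice of $\alpha$, whereas your operator formulation is manifestly well-defined throughout. Your bookkeeping of the $1/n$ factors is also correct as stated.
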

\begin{proof}
    Since we already know the largest eigenvalue of $G_{\S}$ is bounded, we can conclude that set $\{h_S:S\in \S\}$ formulates a partial frame for the empirical norm. 
    More specifically, consdering an arbitrary $f:\R^d\to\R$, we can decompose $f$ into two orthogonal components, i.e.,
    \begin{align}\label{eq:orthogonality}
    f = \sum_{S\in \S
    } \alpha_S\cdot h_S + f_0, \quad f_0\in(\mathrm{span} S)^\bot,
    \end{align}

    for some coefficient vectors $\alpha = (\alpha_S)_{S \in \S}$. 

    Thus, we have: 
    \begin{align}\label{eq:grad_var
    fact_step1}
    \| f\|_n^2 
    \geq & ~ \|f-f_0 \|_n^2 \notag\\
    = & ~ \sum_{S_1,S_2\in \S} \alpha_{S_1}\alpha_{S_2}\langle h_{S_1},h_{S_2}\rangle_n \notag \\
    = & ~ \alpha^\top G_{\S} \alpha \notag\\
    = & ~\| G_{\S}^{1/2}\alpha\|_2^2
    \end{align}
    where the first step follows from the Pythagorean theorem in the empirical norm where $\| f\|_n^2 = \| f-f_0\|_n^2 + \| f_0\|_n^2$, the second step follows from Eq.~\eqref{eq:orthogonality}, the third step is based on the definition of Gram matrix, and the last step follows the definition of vector $\ell_2$ norm.
    
    Therefore, we can obtain the desired result as follows:  
    \begin{align*}
    \sum_{S\in \S}\langle f,h_{S}\rangle_n^2 
    = & ~\sum_{S_1\in \S}(\sum_{S_2\in \S} \alpha_{S_2}\langle h_{S_2},h_{S_1}\rangle_n)^2 \notag \\
    = & ~ \sum_{S_1\in \S} ( \langle \alpha, (G_{\cal S})_{S_1,*} \rangle )^2 \notag \\
    = & ~ \|G_{\S} \alpha\|_2^2 \notag \\
        \le & ~ \| G_{\S}^{1/2}\|^2 \| G_{\S}^{1/2} \alpha\|^2_2\notag \\
    = & ~ \lambda_{\max}(G_{\S}) \| G_{\S}^{1/2} \alpha\|^2_2 \notag \\
    \leq & ~ \lambda_{\max}(G_{\S}) \|f\|_n^2.
    \end{align*}
    where the first step can be derived by Eq.~\eqref{eq:orthogonality} and the linearity of the inner product, the second step follows from the definition of the Gram matrix, the third step follows from the definition of $\| \cdot \|_2$, the forth step follows basic matrix analysis, the fifth step follows from the definition of matrix spectral norm, and the last step is based on Eq.~\eqref{eq:grad_var
    fact_step1}.

    This finishes the proof.
\end{proof}

\begin{lemma}[Bound on Gradient Variance for Majority Learning, Formal version of Lemma~\ref{lem:grad_var} in Section~\ref{sec:main:gradient_variance}]\label{lem:grad_var_append}
    Let $\mathrm{Var}_n(\theta; \S)$ be the gradient variance as defined in Definition~\ref{def:grad_var}. Then, with probability at least $1 - e^{-\Omega(d)}$ over random sampling, the variance of the gradient of the empirical loss with respect to the target majority is bounded as:
    \begin{align*}
        \mathrm{Var}_n(\theta; \S) 
        \leq   2\sqrt{\frac{d}{n}} \sup_{\theta,x} \|\nabla f_\theta(x)\|_2^2.
    \end{align*}
\end{lemma}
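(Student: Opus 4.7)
}
The strategy is to reduce the gradient variance to a spectral bound on the Gram matrix $G_\S$ via Fact~\ref{fac:grad_var_tool_append}, and then to control $\lambda_{\max}(G_\S)/|\S|$ through a Frobenius-norm estimate driven by the concentration of pairwise majority correlations over random sample pairs.

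First I would exploit that only the $h_S$-term in $\nabla L_{n,S}(\theta) = \frac{1}{n}\sum_i (f_\theta(x_i) - h_S(x_i))\nabla f_\theta(x_i)$ depends on $S$, so $\mathrm{Var}_S$ reduces to a variance of $\langle h_S, \nabla f_\theta\rangle_n$. Applying $\mathrm{Var}_S(X) \le \E_S[X^2]$ coordinate-wise gives $\mathrm{Var}_n(\theta;\S) \le \sum_k \E_{S\in\S}[\langle h_S, \partial_k f_\theta\rangle_n^2]$. Invoking Fact~\ref{fac:grad_var_tool_append} with $f = \partial_k f_\theta$ for each coordinate $k$, summing, and using $\sum_k \|\partial_k f_\theta\|_n^2 = \frac{1}{n}\sum_i \|\nabla f_\theta(x_i)\|_2^2 \le \sup_{\theta, x}\|\nabla f_\theta(x)\|_2^2$ yields
\begin{align*}
\mathrm{Var}_n(\theta;\S) \le \frac{\lambda_{\max}(G_\S)}{|\S|}\sup_{\theta,x}\|\nabla f_\theta(x)\|_2^2.
\end{align*}

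It then remains to prove $\lambda_{\max}(G_\S)/|\S| \le 2\sqrt{d/n}$ with probability $1 - e^{-\Omega(d)}$. Since $G_\S$ is PSD, $\lambda_{\max}(G_\S) \le \|G_\S\|_F$, and swapping the order of summation gives
\begin{align*}
\|G_\S\|_F^2 = \frac{1}{n^2}\sum_{i,j=1}^n\Big(\sum_{S\in\S} h_S(x_i)h_S(x_j)\Big)^2 = \frac{|\S|^2}{n} + \frac{|\S|^2}{n^2}\sum_{i\ne j}\bar h_{ij}^{\,2},
\end{align*}
where $\bar h_{ij} := \E_{S\in\S}[h_S(x_i)h_S(x_j)]$. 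The diagonal contribution $|\S|^2/n$ already matches the target scale; the remaining task is to show $\sum_{i\ne j}\bar h_{ij}^{\,2} \le O(nd)$ with high probability.

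The main technical obstacle is bounding $\bar h_{ij}$ for typical random pairs. Unlike the parity case, where $h_S(x) h_S(y) = h_S(x\odot y)$ collapses to a single parity and Lemma~\ref{lem:ppp} can be invoked directly, the majority function lacks this multiplicative structure. My plan is to reprise the binomial-coefficient argument behind Lemma~\ref{lem:ppp}: conditioning on the four type-counts $|\{l : x_{i,l} = \pm 1,\, x_{j,l} = \pm 1\}|$, each concentrated near $d/4$ by Lemma~\ref{lem:chernoff}, the quantity $|\S|\bar h_{ij}$ counts the signed difference between supports $S$ on which $\mathsf{MAJ}(x_i, S)$ and $\mathsf{MAJ}(x_j, S)$ agree versus disagree. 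Tracking this count through a four-partition refinement of the coefficient-extraction identity of Lemma~\ref{lem:ratio_of_card_pplus_p} should yield $|\bar h_{ij}| \le e^{-\Omega(d)}$ per pair with probability $1 - e^{-\Omega(d)}$. A union bound over the at most $n^2 \le e^{O(d)}$ pairs, with the decay exponent on $|\bar h_{ij}|$ chosen strictly larger than the exponent governing $n^2$, preserves the $1 - e^{-\Omega(d)}$ confidence and gives $\sum_{i\ne j}\bar h_{ij}^{\,2} \le n^2 e^{-\Omega(d)}$; hence $\|G_\S\|_F^2 \le |\S|^2/n + |\S|^2 e^{-\Omega(d)} \le 4|\S|^2 d/n$, which closes the argument.
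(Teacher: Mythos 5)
Your reduction of $\mathrm{Var}_n(\theta;\S)$ to $(\lambda_{\max}(G_\S)/|\S|)\sup_{\theta,x}\|\nabla f_\theta(x)\|_2^2$ via Fact~\ref{fac:grad_var_tool_append} matches the paper's Part~3. The route you take to bound $\lambda_{\max}(G_\S)$, however, is genuinely different, and that is where the gap lies.

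The paper fixes a pair of supports $S_1 \neq S_2$, treats $a_i := h_{S_1}(x_i)h_{S_2}(x_i)$ as a bounded random variable over the random draw of the samples $\{x_i\}$, applies Hoeffding's inequality to bound each off-diagonal Gram entry $|\langle h_{S_1}, h_{S_2}\rangle_n|$ at $2\sqrt{d/n}$, and pays a union bound over the $|\S|^2 \le \binom{d}{k}^2$ support pairs. You instead pass to the dual sample-indexed Gram matrix (equivalently using $\|H^\top H\|_F = \|HH^\top\|_F$) and try to bound the two-sample correlation $\bar h_{ij} = \E_{S\in\S}[h_S(x_i)h_S(x_j)]$ for fixed samples, over the randomness of $S$, with a union bound over $n^2$ sample pairs. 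This is a legitimate dualization, but it hinges entirely on the unproved claim $|\bar h_{ij}| \le e^{-\Omega(d)}$ for typical pairs. That claim does not follow from anything in the paper: Lemma~\ref{lem:ppp}, and Lemma~\ref{lem:ratio_of_card_pplus_p} underlying it, control the one-sample bias $\E_{S}[h_S(x)]$, whereas $\bar h_{ij}$ is a two-point correlation that, as you yourself note, does not factor because $\mathsf{MAJ}$ lacks parity's multiplicative structure. The ``four-partition refinement'' you sketch is a substantial new combinatorial lemma that would need its own proof; in effect you replace a single Hoeffding application by an entirely new technical development at exactly the step where the paper gets by cheaply.

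There is also a secondary arithmetic issue: going from $\|G_\S\|_F^2 \le |\S|^2/n + |\S|^2 e^{-\Omega(d)}$ to $\le 4|\S|^2 d/n$ requires $e^{-\Omega(d)} \le 3d/n$, so $n$ must stay below $\mathrm{poly}(d)\cdot e^{cd}$ for the very same hidden constant $c$ produced by the $\bar h_{ij}$ concentration. You cannot ``choose the decay exponent strictly larger than the exponent governing $n^2$'' independently of the analysis---that exponent is whatever the concentration argument hands you, and it also controls the union-bound failure probability, so the two constraints cannot be tuned separately. The paper's per-entry bound $|\langle h_{S_1}, h_{S_2}\rangle_n|\le 2\sqrt{d/n}$ closes the argument directly for all $n$ with no such interaction.
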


\begin{proof}
{\bf Part 1. Bounding Each Element. }
     By Definition~\ref{def:grad_var}, we consider the variance of the empirical gradient $\nabla L_{n,S}$ w.r.t. the target underlying support $S$: 
    \begin{align}\label{eq:variance}
        \mathrm{Var}_n(\theta; \S) = \E_{S_1\in \S}[\|\nabla L_{n,S_1}(\theta) - \E_{S_2\in \S}[\nabla L_{n,S_2}(\theta)]\|_2^2].
    \end{align}
    
    For notation simplicity, we define function $h_{S}(\cdot):=\mathsf{MAJ}(x,S)$ for a fixed support $S\in \S$. We proceed to evaluate the magnitude of $\mathrm{Var}_n(\theta;\S)$. For $S_1,S_2\in \S$ with $S_1\neq S_2$ it holds that
    \begin{align*}
        \langle h_{S_1},h_{S_2}\rangle_n = & ~ \frac{1}{n} \sum_{i=1}^n (\mathsf{sgn}(\sum_{j_1\in S_1} x_{i,j_1})\cdot \mathsf{sgn}(\sum_{j_2\in S_2} x_{i,j_2})), 
    \end{align*}
    which directly follows from Definition~\ref{def:maj_prob} and the definition of empirical norm from Definition~\ref{def:empirical_loss}.
    
    Since $a_i:=\mathsf{sgn}(\sum_{j_1\in S_1} x_{i,j_1})\cdot \mathsf{sgn}(\sum_{j_2\in S_2} x_{i,j_2})$ is i.i.d. $\mathrm{Unif}(\{\pm 1\})$ for fixed $S_1, S_2$, by applying a union bound over Hoeffding's inequality from Lemma~\ref{lem:hoeffding}, it follows for $\delta:= 2\sqrt{\frac{d}{n}}$ that
    \begin{align*}
        \Pr[\sup_{S_1\neq S_2}|\langle h_{S_1},h_{S_2}\rangle_n| \geq \delta] 
        \le & ~ |\S|^2 \exp(-\frac{n\delta^2}{2}) \\ \le & ~ \binom{d}{k}^2 e^{-2d} \\
        \le & ~ (\frac{2}{e})^{2d},
    \end{align*}
    where the first step follows from Hoeffding's inequality from Lemma~\ref{lem:hoeffding}, the second step follows from Definition~\ref{def:maj_prob} where $|\S| = \binom{d}{k}$, and the third step follows from Fact~\ref{fac:cardinality}. 
    
     {\bf Part 2: Bounding the Largest Eigenvalue of the Gram Matrix.} 
    Then with probability at least $1-e^{-\Omega(d)}$ over random sampling, every off-diagonal component of the Gram matrix can be defined as:
    \begin{align}\label{eq:gram_matrix}
        G_{\S}:=(\langle  h_{S_1},h_{S_2}\rangle_n)_{S_1,S_2\in \S},
    \end{align}
     which has magnitude at most $\delta$, while the diagonal entries are equal to $1$. 

     Thus, we have: 
     \begin{align*}
         |\lambda_{\max}(G_{\S})| = & ~ \|G_{\S}\| \\
        \leq & ~ \|G_{\S} \|_F \\ 
         = & ~ (\sum_{i=1}^{|\S|}\sum_{j=1}^{|\S|} (G_{\S})_{i,j}^2)^{-0.5} \\ 
        \leq & ~ (|\S|^2\delta^2)^{-0.5} \\
         \leq &~ |\S|\delta,
     \end{align*}
     where basic algebra supports the first and the second steps, the third step is based on the definition of the Frobenius norm, and the fourth and final steps follow from basic algebra.

    Thus, we have $\lambda_{\max}(G_{\S})\le \delta|\S|$. 
    
    {\bf Part 3. Bounding the Variance.} 
    Let $D:=\dim \theta$ be the dimension of the model parameters. We can therefore bound $\mathrm{Var}_n(\theta; \S)$ as  
    \begin{align*}
        \mathrm{Var}_n(\theta; \S) = & ~\inf_{\mu\in\R^D} \E_{S\in \S}[\|\nabla L_{n,S}(\theta) - \mu\|_2^2]\\
        \le & ~ \E_{S\in \S}[\| \nabla L_{n,S}(\theta) - \frac{1}{n}\sum_{i=1}^n f_\theta(x_i) \nabla f_\theta(x_i)\|_2^2]\\
        = & ~ \E_{S\in \S}[\| \nabla_{\theta}(\frac{1}{2}\| h_S - f_{\theta} \|_n^2) - \frac{1}{n}\sum_{i=1}^n f_\theta(x_i) \nabla f_\theta(x_i)\|_2^2]\\ 
        = & ~ \E_{S\in \S}[\|\frac{1}{n}\sum_{i=1}^n (f_\theta(x_i) -h_{S}(x_i)) \nabla f_\theta(x_i) - \frac{1}{n}\sum_{i=1}^n f_\theta(x_i) \nabla f_\theta(x_i)\|_2^2]\\
        = & ~ \E_{S\in \S}[\|\frac{1}{n}\sum_{i=1}^n h_{S}(x_i)\nabla f_\theta(x_i)\|_2^2]\\
        = & ~ \E_{S\in \S}[\sum_{j=1}^D (\frac{1}{n}\sum_{i=1}^n h_{S}(x_i)\nabla f_{\theta_j}(x_i))^2]\\
        = &~ \E_{S\in \S}[\sum_{j=1}^D \langle \nabla_{\theta_j}f_\theta, h_{S}\rangle_n^2]\\
        = &~ \frac{1}{|\S|} \sum_{S\in \S} \sum_{j=1}^D \langle \nabla_{\theta_j}f_\theta, h_{S}\rangle_n^2\\
        \le & ~ \sum_{j=1}^D \frac{\lambda_{\max}(G_{\S})}{|\S|} \|\nabla_{\theta_j}f_\theta\|_n^2\\
        \le & ~ 2\sqrt{\frac{d}{n}} \sup_{\theta,x} \|\nabla f_\theta(x)\|_2^2.
    \end{align*}
    where the first step follows from Eq.~\eqref{eq:variance}, the second step follows from selecting $\mu := \frac{1}{n}\sum_{i=1}^n f_\theta(x_i) \nabla f_\theta(x_i)$, the third step follows from the definition of empirical loss in Definition~\ref{def:empirical_loss}, the fourth step follows from taking the derivative w.r.t. the empirical loss, the fifth step and the sixth step follows from basic algebra, the seventh step follows from the definition of empirical norm in Definition~\ref{def:empirical_loss}, the eighth step follows from the fact that all the support sets in $\S$ is sampled uniformly in Definition~\ref{def:maj_prob}, the ninth step follows from Fact~\ref{fac:grad_var_tool}, and the last step is based on Part 2 of this proof which chooses $\delta = 2\sqrt{\frac{d}{n}}$.
    
Thus, we complete the proof.
\end{proof}

\subsection{Property of the Oracle}\label{sec:append_main:oracle}
In this section, we introduce the proof of lemma about the property of the oracle.
\begin{lemma}[Guarantee of the Oracle, Formal version of Lemma~\ref{lem:privacy_oracle} in Section~\ref{sec:main:oracle}]\label{lem:privacy_oracle_append}
If the following conditions hold:
\begin{itemize}
    \item Let $Q\subseteq \S$ denote the subset of the hypothesis space where the oracle defaults to the mean gradient.
    \item Let $T$ be the number of optimization steps.
    \item Let $d$ be the dimensionality of the input. 
    \item Let $n$ be the sample size.
    \item Let $f_\theta(x)$ be the model's prediction function.
\end{itemize}
Then the following statements are true:

\begin{itemize}
    \item During $T$ optimization steps, the oracle reveals no information  
    about the true support $S\in\S$ with probability at least:
    \begin{align}
\Pr[Q] \ge 1 - \frac{2T}{\epsilon^2}\sqrt{\frac{d}{n}} \sup_{\theta,x} \|\nabla f_\theta(x)\|_2^2,
\end{align}
\end{itemize} 
\end{lemma}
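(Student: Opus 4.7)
The plan is to prove the lower bound on $\Pr[Q]$ by a direct union bound over the $T$ queries, combining Corollary~\ref{cor:grad_var} with a coupling to a fixed reference trajectory. First, I would fix a reference trajectory $\wh\theta_1,\dots,\wh\theta_T$ that is deterministic in $S$ --- for example, the sequence that the algorithm $\mathcal{A}$ would produce if, at every step, the oracle returned the mean gradient $\ov\nabla(\theta)$. The set $Q$ is then identified with the event that on this reference trajectory the true gradient never strays more than $\epsilon$ from the mean:
\begin{align*}
Q \;=\; \Big\{S \in \S : \forall t \in [T],\; \|\nabla L_{n,S}(\wh\theta_t) - \ov\nabla(\wh\theta_t)\|_2 \le \epsilon\Big\}.
\end{align*}
On $\{S \in Q\}$, the oracle's output across all $T$ steps is pinned down by the reference trajectory (all responses stay within $\epsilon$ of $\ov\nabla$, which is independent of $S$), so the algorithm cannot distinguish $S$ from other members of $Q$; this is the precise sense in which ``no information'' about $S$ is revealed.

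For each fixed $t \in [T]$, since $\wh\theta_t$ is deterministic in $S$, I can apply Corollary~\ref{cor:grad_var} directly with $\theta = \wh\theta_t$ to obtain the single-step tail bound
\begin{align*}
\Pr_S\!\left[\|\nabla L_{n,S}(\wh\theta_t) - \ov\nabla(\wh\theta_t)\|_2 > \epsilon\right] \;\le\; \frac{2}{\epsilon^2}\sqrt{\frac{d}{n}}\sup_{\theta,x}\|\nabla f_\theta(x)\|_2^2.
\end{align*}
A union bound over $t = 1,\dots,T$ then gives $\Pr[Q^c] \le \frac{2T}{\epsilon^2}\sqrt{d/n}\sup_{\theta,x}\|\nabla f_\theta(x)\|_2^2$, and complementing yields the stated lower bound on $\Pr[Q]$.

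The main obstacle I anticipate is the usual adaptivity issue: in the actual execution the query $\theta_t$ is chosen by $\mathcal{A}$ based on previous oracle outputs and therefore in general depends on $S$, so a naive application of Chebyshev at $\theta = \theta_t(S)$ is not legitimate. The coupling to the reference trajectory is exactly what removes this obstacle: as long as the good event holds up to step $t-1$, the actual trajectory coincides with the predetermined reference, so the $t$-th query is at the fixed point $\wh\theta_t$ and the per-step tail bound applies without any conditioning issue. Once this inductive coupling is verified, the rest of the proof collapses to the routine union bound displayed above.
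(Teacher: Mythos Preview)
Your approach is the paper's: bound the per-step deviation via Corollary~\ref{cor:grad_var}, union-bound over the $T$ queries, and complement. You are in fact more careful than the paper about adaptivity---the paper simply applies the union bound without ever fixing the query points, whereas your reference-trajectory coupling is the standard way to make that step rigorous. One small caveat: under Definition~\ref{def:approx_grad_oracle} as written, on the $\|\cdot\|\le\epsilon$ branch the oracle returns $\nabla L_{n,S}$ rather than $\ov\nabla$, so your claim that ``the actual trajectory coincides with the predetermined reference'' is not literally true; the clean fix (and what the paper's own proof implicitly assumes) is to take the oracle that always outputs $\ov\nabla$ and let $Q$ be the set of $S$ for which this is a valid $\epsilon$-approximation along the resulting $S$-independent trajectory, after which your union bound goes through verbatim.
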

\begin{proof}
    We first analyze when the oracle reveals information about the true support $S$.

The oracle substitutes the true gradient $\nabla L_{n,S}(\theta)$ with the expected gradient $\E_{S_2\in \S}[\nabla L_{n,S_2}(\theta)]$ whenever the difference exceeds threshold $\epsilon$. Information about the true support $S$ is revealed only when the oracle does not default to the mean gradient, which occurs when:
\begin{align*}
\|\nabla L_{n,S_1}(\theta) - \E_{S_2\in \S}[\nabla L_{n,S_2}(\theta)]\| \leq \epsilon.
\end{align*}
We bound the probability of this occurrence for a single optimization step.

Let $X$ be the event that $\|\nabla L_{n,S_1}(\theta) - \E_{S_2\in \S}[\nabla L_{n,S_2}(\theta)]\| \leq \epsilon$. By applying concentration inequalities and considering the variance of the gradient estimates, we can follow from Corollary~\ref{cor:grad_var} to show that:
\begin{align}\label{eq:prob_x_bound}
\Pr[X] \leq 2\epsilon^{-2}\sqrt{\frac{d}{n}} \sup_{\theta,x} \|\nabla f_\theta(x)\|_2^2.
\end{align}

Then we apply a union bound across all $T$ optimization steps. Let $X_t$ be the event that the oracle reveals information about $S$ at step $t$. We denote the event such that the oracle reveals information in any step by $\Psi$. Then by the union bound, we have the following:
\begin{align*}
\Pr[\Psi] = & ~ \Pr[\bigcup_{t=1}^T X_t] \\
\leq & ~ \sum_{t=1}^T \Pr[X_t] \\
= & ~ \frac{2T}{\epsilon^2}\sqrt{\frac{d}{n}} \sup_{\theta,x} \|\nabla f_\theta(x)\|_2^2,
\end{align*}
where the first step comes from basic probability, the second step applies the union bound, and the last step follows from Eq.~\eqref{eq:prob_x_bound}.

Therefore, since event $\Psi$ is the complement of event $Q$, we can conclude by basic probability that:
\begin{align*}
    \Pr[Q] = & ~ 1 - \Pr[\Psi]\\
\geq & ~ 1 - \frac{2T}{\epsilon^2}\sqrt{\frac{d}{n}} \sup_{\theta,x} \|\nabla f_\theta(x)\|_2^2. 
\end{align*}

This completes the proof.

\end{proof}

\subsection{Lower Bounding the \texorpdfstring{$L_{\infty}$}{} Loss}\label{sec:append_main:l_infty_loss}
In this section, we provide all the details of the proof about Theorem~\ref{thm:lower_l_infty_error_append}.

\begin{theorem}[Lower Bound on $L_\infty$ Error, Formal version of Theorem~\ref{thm:lower_l_infty_error} in Section~\ref{sec:main:l_infty_loss}]\label{thm:lower_l_infty_error_append}
If the following conditions hold:
\begin{itemize}
    \item Let $Q \subseteq \S$ be a noninformative set with an automorphism $\sigma: Q \to Q$ having no fixed points. 
    \item Let function $f $ be $f_{\theta({\cal A})}: \{\pm 1\}^d \to \R$. 
\end{itemize}
Then the following statement is true:
\begin{align*}
\E_{S \in \S}[\sup_{x}|\mathsf{MAJ}(x, S) - f_{\theta({\cal A})}(x)|] \geq \Pr[S \in Q].
\end{align*}
\end{theorem}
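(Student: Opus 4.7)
The plan is to leverage the pairing structure supplied by the fixed-point-free automorphism $\sigma$ on $Q$ together with the fact that the oracle is noninformative on $Q$, then compare majority values across paired supports by a triangle-inequality argument.

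First, I would lower-bound the unconditional expectation by restricting attention to $S \in Q$:
\begin{align*}
\E_{S \in \S}\big[\sup_x |\mathsf{MAJ}(x,S) - f_{\theta({\cal A})}(x)|\big] \geq \Pr[S \in Q] \cdot \E_{S \in Q}\big[\sup_x |\mathsf{MAJ}(x,S) - f_{\theta({\cal A})}(x)|\big],
\end{align*}
so it suffices to show that the conditional expectation is at least $1$.

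Next, I would invoke the noninformative property: by Definition~\ref{def:approx_grad_oracle} together with Lemma~\ref{lem:privacy_oracle}, on the event $\{S \in Q\}$ every oracle query returns $\ov{\nabla}(\theta)$, which does not depend on the true underlying support. Consequently, the iterates of any (possibly randomized) algorithm $\mathcal{A}$, and hence the resulting hypothesis $f := f_{\theta({\cal A})}$, are the same function of the training samples and $\mathcal{A}$'s internal randomness whether the true support is $S$ or $\sigma(S)$. After coupling the two runs, I would treat $f$ as a fixed function.

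For each $S \in Q$, pair it with $\sigma(S) \in Q$; since $\sigma$ has no fixed points, $S \neq \sigma(S)$. Distinct $k$-element subsets induce distinct majority functions on $\{\pm 1\}^d$ (each $i \in S$ has strictly positive influence on $\mathsf{MAJ}(\cdot,S)$, while $\mathsf{MAJ}(\cdot,S')$ has zero influence at $i \notin S'$), so there exists $x^\star$ with $\mathsf{MAJ}(x^\star,S) \neq \mathsf{MAJ}(x^\star,\sigma(S))$, and the two values then differ by exactly $2$. The triangle inequality gives
\begin{align*}
\sup_x |\mathsf{MAJ}(x,S) - f(x)| + \sup_x |\mathsf{MAJ}(x,\sigma(S)) - f(x)| \geq |\mathsf{MAJ}(x^\star,S) - \mathsf{MAJ}(x^\star,\sigma(S))| = 2.
\end{align*}
Averaging over $S$ uniform in $Q$, and using that $\sigma$ is a bijection on $Q$ to perform the change of variables $S \mapsto \sigma(S)$ in the second term, the two sup-expectations coincide, yielding $\E_{S \in Q}[\sup_x |\mathsf{MAJ}(x,S) - f(x)|] \geq 1$. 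Combined with the first displayed inequality, this gives the theorem.

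The main obstacle is the noninformativeness step: one must rigorously argue that the output hypothesis is genuinely the \emph{same} function (not merely equal in distribution) whether the true support is $S$ or $\sigma(S)$, which requires a careful coupling of $\mathcal{A}$'s randomness, the training samples, and the oracle responses across the two scenarios, exploiting that the oracle emits only the $S$-independent quantity $\ov{\nabla}(\theta)$ throughout the entire $T$-step trajectory. Once that coupling is in place, the remainder is a clean pairing/triangle-inequality calculation.
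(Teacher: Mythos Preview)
Your proposal is correct and follows essentially the same approach as the paper: restrict the expectation to $S \in Q$, pair each $S$ with $\sigma(S)$ via the fixed-point-free bijection, pick a point $x^\star$ where $\mathsf{MAJ}(x^\star,S) \neq \mathsf{MAJ}(x^\star,\sigma(S))$, and apply the triangle inequality to get the sum of the two sup-errors is at least $2$, then average. You are in fact slightly more careful than the paper in two places (justifying the existence of $x^\star$ via influence, and spelling out the coupling that lets $f$ be treated as fixed on $Q$), but the underlying argument is identical.
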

\begin{proof}
 {\bf Part 1: Restricting to the Noninformative Set.} Let $h_{S}(x):=\mathsf{MAJ}(x,S)$. 
We can bound the $L_\infty$ error by restricting to the noninformative set $Q$ as follows: 
\begin{align} \label{eq:loss_infty_step1}
 & \E_{S_1 \in \S} [\sup_{x}|h_{S_1}(x) - f_{\theta({\cal A})}(x)| ] \notag\\
\geq & ~  \E_{S_1 \in \S} [\mathbf{1}[S_1 \in Q] \cdot \sup_{x}|h_{S_1}(x) - f(x)| ]\notag \\
= & ~ \frac{1}{|\S|}\sum_{S_1\in Q} ( \sup_{x}|h_{S_1}(x) - f(x)| ) \notag \\
= & ~ \frac{1}{2|\S|} \sum_{S_1 \in Q} \underbrace{(\sup_{x} |h_{S_1}(x) - f(x)| + \sup_{x} |\sigma \circ h_{S_1}(x) - f(x)| )}_{:=T(S_1)},
\end{align}
where the first step follows from it is possible that the indicator function $\mathbf{1}[S_1 \in Q]$ will erase some non-negative terms to zero, the second step follows from the fact that all the support sets in $\S$ are sampled from a uniform distribution as mentioned in Definition~\ref{def:maj_prob}, and the last step is based on the fact that $\sum_{S_1\in Q}\sup_{x} |h_{S_1}(x) - f(x)| = \sum_{S_1\in Q}\sup_{x} |\sigma \circ h_{S_1}(x) - f(x)|$. 

 {\bf Part 2: Bounding Each Term $T(S_1)$.} 
Then for each $S \in Q$, let $x_0 \in \{\pm 1\}^d$ be one specific binary vector such that $h_{S_1}(x_0) \neq \sigma \circ h_{S_1}(x_0)$. We can lower bound each term $T(S_1)$ in Eq.~\eqref{eq:loss_infty_step1} by selecting a specific $x_0$ from all the possible choices of $x$:
\begin{align}\label{eq:loss_infty_step2}
T(S_1) = & ~ 
\sup_{x} |h_{S_1}(x) - f(x)| + \sup_{x} |\sigma \circ h_{S_1}(x) - f(x)| \notag \\
\geq & ~ |h_{S_1}(x_0) - f(x_0)| + |\sigma \circ h_{S_1}(x_0) - f(x_0)|,
\end{align}
where the first step follows from the definition of $T(S_1)$ in Eq.~\eqref{eq:loss_infty_step1}, and the second step follows from basic algebra.

Since $h_{S_1}(x_0) \neq \sigma \circ h_{S_1}(x_0)$, without loss of generality, we can set $h_{S_1}(x_0) = 1$ and $\sigma\circ h_{S_1}(x_0)=-1$ and obtain the following:
\begin{align}\label{eq:loss_infty_step3}
    T(S_1) \geq & ~ |1 - f(x_0)| + |-1 - f(x_0)| \notag \\
= & ~ |1 - f(x_0)| + |f(x_0) + 1| \notag \\
\geq & ~ 2,
\end{align}
where the first step follows from Eq.~\eqref{eq:loss_infty_step2}, the second step comes from basic algebra, and the last step is based on the triangle inequality. 

 {\bf Part 3: Obtaining the Final Lower Bound.} Therefore, we can combine the previous parts of the proof and get the desired bound as follows:
\begin{align*}
 \E_{S_1 \in \S} [\sup_{x}|h_{S_1}(x) - f_{\theta({\cal A})}(x)| ] \ge & ~ \frac{1}{2|\S|}\sum_{p\in Q} T(S_1) \\
 \ge & ~ \frac{1}{2|\S|} \cdot 2|Q| \\
 = & ~\frac{|Q|}{|\S|} \\
 = & ~\Pr[S \in Q],
\end{align*}
where the first step follows from Eq.~\eqref{eq:loss_infty_step1}, the second step follows from Eq.~\eqref{eq:loss_infty_step2}, the third and the last steps follow from basic algebra.
\end{proof}

\subsection{Lower Bounding the Mean Squared Error}\label{sec:append_main:mse}
In this section, we introduce the proofs of some important results.
\begin{lemma}[Formal version of Lemma~\ref{lem:ppp} in Section~\ref{sec:main:mse}]\label{lem:ppp_append}
    If $k=\Theta(d)$, it satisfies with probability at least $1-e^{-\Omega(d)}$ over random sampling that
    \begin{align*}
    |\E_{S\in \S}[\maj(x, S)]| \le e^{-\Omega(d)}.
    \end{align*}
\end{lemma}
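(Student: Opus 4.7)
The plan is to reduce Lemma~\ref{lem:ppp_append} to the combinatorial estimate already proved in Lemma~\ref{lem:ratio_of_card_pplus_p} via a direct identification. Fix $x \in \{\pm 1\}^d$ and let $m := |\{j \in [d] : x_j = -1\}|$. For a size-$k$ support $S$, if $j$ of its indices lie where $x_i = -1$, then $\sum_{i \in S} x_i = k - 2j$, so $\maj(x, S) = +1$ iff $j \le k/2$. Consequently the quantity $A = \sum_{j=0}^{k/2}\binom{m}{j}\binom{d-m}{k-j}$ appearing in Lemma~\ref{lem:ratio_of_card_pplus_p} is exactly $|\{S \in \S : \maj(x, S) = +1\}|$, and a one-line count yields
\begin{align*}
\E_{S \in \S}[\maj(x, S)] = \frac{A - (|\S| - A)}{|\S|} = \frac{2A - 2B}{2B} = \frac{A}{B} - 1,
\end{align*}
where $B = \binom{d}{k}/2$ as in Lemma~\ref{lem:ratio_of_card_pplus_p}.

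Next I would verify the hypothesis $m \in [k/2,\, d - k/2]$ of Lemma~\ref{lem:ratio_of_card_pplus_p} with probability at least $1 - e^{-\Omega(d)}$ over the sample $x$. Under $x \sim \mathrm{Unif}(\{\pm 1\}^d)$, $m$ is a sum of $d$ independent Bernoulli$(1/2)$ variables with mean $d/2$. Because $k = \Theta(d)$ with $k/d$ a constant strictly less than $1$, both $d/2 - k/2$ and $d/2$ are $\Omega(d)$, so a direct application of Chernoff's inequality (Lemma~\ref{lem:chernoff}) gives $\Pr\big[m \notin [k/2,\, d-k/2]\big] \le e^{-\Omega(d)}$. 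On the complementary high-probability event, Part 2 of Lemma~\ref{lem:ratio_of_card_pplus_p} yields $|A/B - 1| \le e^{-\Omega(d)}$, and combining with the identity above gives $|\E_{S \in \S}[\maj(x, S)]| \le e^{-\Omega(d)}$, as required.

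The only subtlety to flag is that the concentration argument in the second step requires $k/d$ to be bounded away from $1$ (so that $d/2 - k/2$ grows linearly); this is what $k = \Theta(d)$ should be read to mean in context, since at $k = d$ there is only a single admissible support and averaging over $\S$ becomes vacuous. Beyond this, the proof invokes only Lemma~\ref{lem:ratio_of_card_pplus_p} and a standard Chernoff bound, so no heavy new machinery is needed—the main intellectual content is the identification of $\E_S[\maj(x, S)]$ with $A/B - 1$, which makes the earlier binomial estimate directly applicable.
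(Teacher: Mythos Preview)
Your proposal is correct and takes essentially the same route as the paper: both compute $|\E_{S}[\maj(x,S)]| = |A/B - 1|$ by a direct count of $|\{S:\maj(x,S)=+1\}|$, establish $m \in [k/2,\, d-k/2]$ with probability $1 - e^{-\Omega(d)}$ via a Chernoff bound on $m$, and then invoke Lemma~\ref{lem:ratio_of_card_pplus_p}. Your write-up is in fact slightly tighter---the paper separately enumerates the degenerate cases $m<k/2$ and $d-m<k/2$ (which your Chernoff step already excludes) and attempts an extra reduction to $2k\le d$ via a complement identity---and your caveat that $k/d$ must be bounded away from $1$ is exactly the right remark.
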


\begin{proof}
 {\bf Part 1: High Probability Result for $m$ Being Close to $\frac{d}{2}$.} We denote the number of negative ones in $x$ by $m$. By the Chernoff bound from Lemma~\ref{lem:chernoff} for the binomial distribution,
\begin{align*}
\Pr[|m - \frac{d}{2}| \le \frac{\delta d}{2}] \ge 1 - 2\exp(-\frac{\delta^2 d}{6})
\end{align*}
for a constant $\delta\in (0,1)$ to be determined, so we assume the above event throughout the proof. Let $h_S(x):=\mathsf{MAJ}(x,S)$. Moreover denoting the complement support $S^c = [d]\setminus S$, it holds that $h_S(x) = x_1\cdots x_d \cdot h_{S^c}(x)$ and $|\E_{S\in \S}[h_S(x)]| = |\E_{S\in \S}[h_{S^c}(x)]|$, so it suffices to consider the case where $2k \le d$.

 {\bf Part 2: Upper Bounding $\frac{|\S_+|}{|\S|}$.} Let $\S_+:=\{S\in\S: \maj(x,S)=1\}$ and $m$ be the number of negative ones in $x$. We consider three cases:

{\bf Case 1: $d-m < 0.5k$.} In this case, the number of $+1$ in the input $x$ is sufficiently small, which ensures that $h_S(x)=-1$ for all $x\in\{\pm1\}^d$ and $S\in\S$. Thus, we have 

{\bf Case 2: $m < 0.5k$.} In this case, the number of $-1$ in the input $x$ is sufficiently small, which ensures that $h_S(x)=1$ for all $x\in\{\pm1\}^d$. 

{\bf Case 3: $d-m \geq 0.5k, m\geq 0.5k$. } In this case, not all $x\in\{\pm1\}^d$ have the same result $h_S(x)$. First, by basic combinatorics, we can follow Definition~\ref{def:maj_prob} and obtain that 
\begin{align*}
    |\S| = \binom{d}{k}.
\end{align*}

Besides, if we consider choosing less than $k/2$ elements from $[m]$ and choosing the remaining elements from $[d] \setminus [m]$, we can make sure that $h_S(x)=1$ holds. Recall that we assume there are $m$ numbers of $-1$ in the input $x$ and there are $d-m$ numbers of $1$ in the input $x$. Thus, we have:
\begin{align*}
    |\S_+| = & ~ \sum_{j=0}^{k/2} \binom{m}{j}\binom{d-m}{k-j}. 
\end{align*}

We also have:
\begin{align*}
    |\S| = & ~ \binom{d}{k}. 
\end{align*}

Therefore, we can conclude by Lemma~\ref{lem:ratio_of_card_pplus_p} that 
\begin{align*}
    |\frac{|\S_+|}{|\S|} - 1| \leq e^{-\Omega(d)}.
\end{align*}

{\bf Part 3: The Final Bound.} From this, we conclude that
\begin{align*}
|\E_{S\in \S}[h_S(x)]| = |\frac{|\S \setminus \S_+|-|\S_+|}{|\S|}| \le e^{-\Omega(d)}
\end{align*}
with probability $1-e^{-\Omega(d)}$.  

\end{proof}

\begin{theorem}[Lower Bound on Mean Squared Error, Formal version of Theorem~\ref{thm:mse_bound} in Section~\ref{sec:main:mse}]\label{thm:mse_bound_append}
If the following conditions hold:
\begin{itemize}
     \item We have a function $f_{\theta({\cal A})}: \{\pm 1\}^d \to \R$ learned by algorithm ${\cal A}$ using $n$ samples. 
     \item Let $\ov{f}_{\theta({\cal A})}:=\min\{\max\{f_{\theta({\cal A})}(x), -1\}, 1\}$ denote the clipped version of function $f_{\theta({\cal A})}$.
    \item Let $Q \subseteq \S$ be a noninformative set.
    \item We denote the number of iterations in optimization by $T$.
    \item Let $\epsilon > 0$ denote an arbitrary gap between a single gradient and the expectation of gradient as defined in Corollary~\ref{cor:grad_var}.  
\end{itemize}
Then the following statement is true:

\begin{align*}
\E_{S \in \S, x}[(\mathsf{MAJ}(x,S) - f_{\theta({\cal A})}(x))^2] \geq 1 - \frac{4T}{\epsilon^2}\sqrt{\frac{d}{n}} \sup_{\theta,x} \|\nabla f_\theta(x)\|_2^2 - e^{-\Omega(d)}.
\end{align*} 
\end{theorem}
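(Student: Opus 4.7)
The plan is to combine the clipping reduction (Lemma~\ref{lem:clipping_append}), the noninformative event guarantee (Lemma~\ref{lem:privacy_oracle_append}), and the near-unbiasedness of $\mathsf{MAJ}$ over random $S$ (Lemma~\ref{lem:ppp_append}). First, I would apply Lemma~\ref{lem:clipping_append} to replace $f_{\theta(\mathcal{A})}$ by its clipped counterpart $\ov{f}_{\theta(\mathcal{A})}(x) := \min\{\max\{f_{\theta(\mathcal{A})}(x),-1\},1\}$, since $(\mathsf{MAJ}(x,S) - f_{\theta(\mathcal{A})}(x))^2 \ge (\mathsf{MAJ}(x,S) - \ov{f}_{\theta(\mathcal{A})}(x))^2$ pointwise. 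This yields $|\ov{f}_{\theta(\mathcal{A})}| \le 1$ uniformly, which will be crucial in the cross-term estimate below.

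Next, let $Q$ denote the noninformative event in which, throughout all $T$ iterations, the approximate gradient oracle never deviates from the mean $\ov{\nabla}(\theta)$. On $Q$, the transcript seen by $\mathcal{A}$ depends only on the training inputs $\{x_i\}$ (through $\ov{\nabla}$) and $\mathcal{A}$'s internal randomness, so the learned parameter $\theta(\mathcal{A})$---and hence $\ov{f}_{\theta(\mathcal{A})}$---is, conditionally on $Q$, a function that does not depend on the hidden support $S$. Expanding,
\begin{align*}
\E_{S,x}[\mathbf{1}_Q (\mathsf{MAJ}(x,S) - \ov{f}_{\theta(\mathcal{A})}(x))^2]
= \E[\mathbf{1}_Q] - 2\, \E[\mathbf{1}_Q \mathsf{MAJ}(x,S) \ov{f}_{\theta(\mathcal{A})}(x)] + \E[\mathbf{1}_Q \ov{f}_{\theta(\mathcal{A})}(x)^2].
\end{align*}
Because $\ov{f}_{\theta(\mathcal{A})}$ on $Q$ factors from $S$ and $|\ov{f}|\le 1$, I would integrate out $S$ in the cross term and invoke Lemma~\ref{lem:ppp_append}, which gives $|\E_S[\mathsf{MAJ}(x,S)]| \le e^{-\Omega(d)}$ with probability $1-e^{-\Omega(d)}$ over $x$. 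The cross term is therefore of order $e^{-\Omega(d)}\Pr[Q]$, while the quadratic term is nonnegative.

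Combining these gives $\E_{S,x}[\mathbf{1}_Q (\mathsf{MAJ}(x,S) - \ov{f}_{\theta(\mathcal{A})}(x))^2] \ge \Pr[Q]\cdot(1 - e^{-\Omega(d)})$. Dropping the trivially nonnegative $\mathbf{1}_{Q^c}$ contribution and using Lemma~\ref{lem:privacy_oracle_append} to bound $\Pr[Q^c] \le \tfrac{2T}{\epsilon^2}\sqrt{d/n}\sup\|\nabla f_\theta\|_2^2$, I obtain
\begin{align*}
\E_{S,x}[(\mathsf{MAJ}(x,S) - f_{\theta(\mathcal{A})}(x))^2]
\ge (1 - e^{-\Omega(d)})(1 - \Pr[Q^c])
\ge 1 - \frac{4T}{\epsilon^2}\sqrt{\frac{d}{n}}\sup_{\theta,x}\|\nabla f_\theta(x)\|_2^2 - e^{-\Omega(d)},
\end{align*}
where the constant $4$ absorbs the product and reindexing losses incurred in bounding the cross-term and the complement event simultaneously.

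The hardest part will be making the conditional-independence step fully rigorous: the event $Q$ itself depends on $S$ through the oracle's deviation check, so naively conditioning on $Q$ could introduce dependence between $\theta(\mathcal{A})$ and $S$. I expect to resolve this via a coupling argument---for each fixed sample and random seed $(\{x_i\},r)$, running a ``shadow'' algorithm $\mathcal{A}'$ that always receives $\ov{\nabla}(\theta)$, and observing that $\mathcal{A}'$'s trajectory coincides with $\mathcal{A}$'s on $Q$ and is manifestly $S$-independent. Provided this coupling is set up carefully, the integration-out of $S$ in the cross term becomes legitimate, and the remaining manipulations are routine polynomial expansion plus direct application of Lemma~\ref{lem:ppp_append}.
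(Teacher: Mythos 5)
Your overall plan---clip $f$, restrict the expectation to the noninformative event $Q$, expand the square, and control the cross term via Lemma~\ref{lem:ppp_append}---is the same decomposition the paper uses. However, your bound on the cross term is where the argument breaks: you claim the cross term $\E_{S,x}[\mathbf{1}[S\in Q]\, h_S(x)\,\ov f(x)]$ is of order $e^{-\Omega(d)}\Pr[Q]$ ``by integrating out $S$.'' This does not follow. Both $\mathbf 1[S\in Q]$ and $h_S(x)$ depend on the same $S$ and are correlated, and Lemma~\ref{lem:ppp_append} only controls $\E_{S\in\S}[h_S(x)]$, the average over \emph{all} of $\S$. Restricting $S$ to $Q$ can bias this average: in fact one only gets $\bigl|\frac{1}{|Q|}\sum_{S\in Q} h_S(x)\bigr| \le \Pr[Q]^{-1}\bigl(e^{-\Omega(d)} + (1-\Pr[Q])\bigr)$, so the cross term can be as large as $e^{-\Omega(d)}+(1-\Pr[Q])$, not $e^{-\Omega(d)}\Pr[Q]$. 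Your intermediate claim $\ge \Pr[Q]\,(1-e^{-\Omega(d)})$ therefore does not hold.

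The paper handles this by decomposing the indicator as $\mathbf{1}[S\in Q] = 1 - \mathbf{1}[S\notin Q]$, applying Lemma~\ref{lem:ppp_append} only to the unrestricted term $\E_{S,x}[h_S(x)\ov f(x)]$, and bounding the $Q^c$ piece by $(1-\Pr[Q])\E_x[|\ov f(x)|]$. It then applies Cauchy--Schwarz followed by AM--GM to turn the latter into $\frac{(1-\Pr[Q])^2}{2\Pr[Q]} + \frac{\Pr[Q]}{2}\E_x[\ov f^2]$, and this is precisely where the quadratic term $E_3 = \Pr[Q]\E_x[\ov f^2]$---which you simply drop as ``nonnegative''---becomes essential: it absorbs the $\frac{\Pr[Q]}{2}\E_x[\ov f^2]$ piece, and what remains simplifies to $2-\Pr[Q]^{-1}-2e^{-\Omega(d)} \ge 1-2(1-\Pr[Q])-e^{-\Omega(d)}$, giving the stated constant $4$. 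Without keeping $E_3$, correcting your cross-term bound still yields a valid lower bound but with constant $6$ rather than $4$, so dropping $E_3$ is not merely lossy but incompatible with the theorem as stated. Your remark about the coupling / shadow-algorithm issue, on the other hand, is a legitimate point of rigor that the paper glosses over, but it is orthogonal to the actual gap in your cross-term estimate.
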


\begin{proof}
{\bf Part 1. Restricting to the Noninformative Set.} For notation simplicity, we let $h(x):=\mathsf{MAJ}(x,S)$, and denote functions $f_{\theta({\cal A})}$ and $\ov{f}_{\theta({\cal A})}$ by $f$ and $\ov{f}$. We first restrict the mean squared error to the noninformative set $Q$:
\begin{align}\label{eq:mse_lb_step1}
& ~ \E_{S \in \S, x}[(h_S(x) - f(x))^2]\\ 
\geq & ~ \E_{S \in \S, x}[\mathbf{1}[S \in Q] (h_S(x) - f(x))^2] \notag \\ 
\geq & ~ \E_{S \in \S, x}[\mathbf{1}[S \in Q] (h_S(x) - \ov{f}(x))^2] \notag \\
= & ~ \E_{S \in \S, x}[\mathbf{1}[S \in Q] h_S(x)^2] - 2\E_{S \in \S, x}[\mathbf{1}[S \in Q] h_S(x)\ov{f}(x)] + \E_{S \in \S, x}[\mathbf{1}[S \in Q] \ov{f}(x)^2] \notag \\
= & ~ \underbrace{\Pr[Q]}_{:=E_1} - 2\underbrace{\E_{S \in \S, x}[\mathbf{1}[S \in Q] h_S(x)\ov{f}(x)]}_{:=E_2} + \underbrace{\Pr[Q] \cdot \E_{x}[\ov{f}(x)^2]}_{:=E_3},
\end{align}
where the first step follows from erasing some non-negative terms to zero by the indicator function, the second step follows from Lemma~\ref{lem:clipping}, the third step follows from basic algebra, and the last step follows from basic probability. 

{\bf Part 2. Bounding the term $E_2$.} Specifically, we consider lower bounding the second expectation $E_2$ in Eq.~\eqref{eq:mse_lb_step1}. We can first recall Lemma~\ref{lem:ppp} to conclude that with probability $1 - e^{-\Omega(d)}$ over the sampling of $x$, we have
\begin{align*}
    \E_{S \in \S}[h_S(x)]| \leq e^{-\Omega(d)}.
\end{align*}

By basic probability, this directly implies:
\begin{align}\label{eq:mse_lb_step2}
|\E_{S \in \S, x}[h_S(x)\ov{f}(x)]| 
\leq & ~ (1 - e^{-\Omega(d)})|\E_{S \in \S}[h_S(x)]| \cdot |\E_x[\ov{f}(x)]| + e^{-\Omega(d)} \notag\\
\leq & ~ e^{-\Omega(d)}.
\end{align}

Therefore, the desired lower bound can be obtained as follows:
\begin{align}\label{eq:mse_lb_step3}
    E_2 = & ~  \E_{S \in \S, x}[\mathbf{1}[S \in Q] h_S(x)\ov{f}(x)] \notag \\ 
    = & ~ \E_{S \in \S, x}[h_S(x)\ov{f}(x)] - \E_{S \in \S, x}[\mathbf{1}[\{p \notin Q\}] h_S(x)\ov{f}(x)] \notag \\ 
    \leq & ~ e^{-\Omega(d)} + (1 - \Pr[Q])\E_{x}[|\ov{f}(x)|] \notag \\
    \leq & ~ e^{-\Omega(d)} + (1 - \Pr[Q])\sqrt{\E_{x}[1^2] \cdot \E_{x}[\ov{f}(x)^2]} \notag \\
    = & ~ e^{-\Omega(d)} + (1 - \Pr[Q])\sqrt{\E_{x}[\ov{f}(x)^2]}  \notag\\
    \leq & ~ e^{-\Omega(d)} + \frac{(1 - \Pr[Q])^2}{2\Pr[Q]} + \frac{\Pr[Q]}{2}\E_{x}[\ov{f}(x)^2],
\end{align}
where the first step follows from the definition of $E_2$, the second step follows from basic probability, the third step follows from Eq,~\eqref{eq:mse_lb_step2} and basic probability, the fourth step follows from the definition of expectation, the fifth step follows from basic algebra, and the last step follows from the Cauchy-Schwarz inequality.

{\bf Part 3. Bounding the Mean Squared Error.} Now we can recall Eq.~\eqref{eq:mse_lb_step1} in the first part of this proof, and then bound the mean squared error as follows:
\begin{align*}
& ~ \E_{S \in \S, x}[(h_S(x) - f_{\theta({\cal A})}(x))^2] \\
\geq & ~ E_1 - 2E_2 + E_3 \\
\geq & ~ E_1 - 2 (e^{-\Omega(d)} + \frac{(1 - \Pr[Q])^2}{2\Pr[Q]} + \frac{\Pr[Q]}{2}\E_{x}[\ov{f}(x)^2] ) + E_3 \\
= & ~ \Pr[Q] - 2 (e^{-\Omega(d)} + \frac{(1 - \Pr[Q])^2}{2\Pr[Q]} + \frac{\Pr[Q]}{2}\E_{x}[\ov{f}(x)^2] ) + \Pr[Q] \cdot \E_{x}[\ov{f}(x)^2] \\
= & ~ \Pr[Q] - \frac{(1 - \Pr[Q])^2}{\Pr[Q]} - 2e^{-\Omega(d)} \\
= & ~ 2 - \frac{1}{\Pr[Q]} - 2e^{-\Omega(d)} \\
\geq & ~ 1 - 2(1 - \Pr[Q]) - 2e^{-\Omega(d)} \\
= & ~ 1 - 2(1 - \Pr[Q]) - e^{-\Omega(d)},
\end{align*}
where the first step follows from Eq.~\eqref{eq:mse_lb_step1}, the second step follows from the substituting $E_2$ with its upper bound in Eq~\eqref{eq:mse_lb_step3}, the third step follows from the definition of $E_1$ and $E_2$ in Eq.~\eqref{eq:mse_lb_step1}, the fourth and the fifth steps follow from basic algebra, the sixth step follows from the fact that $2 - (1-p)^{-1}\geq 1-2t, \forall x\in [0,0.5]$, and the last step follows from absorbing the coefficient in $2e^{-\Omega(d)}$ into $-\Omega(d)$.

Therefore, we can apply the lower bound of $\Pr[Q]$ in Lemma~\ref{lem:privacy_oracle}, and obtain the final result:
\begin{align*}
\E_{S \in \S, x}[(h_S(x) - f_{\theta({\cal A})}(x))^2] 
\geq 1 - \frac{4T}{\epsilon^2}\sqrt{\frac{d}{n}} \sup_{\theta,x} \|\nabla f_\theta(x)\|_2^2 - e^{-\Omega(d)}.
\end{align*}
Thus we complete the proof.
\end{proof}

\section{Bounding the Binomial Coefficient}\label{sec:bound_binomial_coefficient}

This subsection aims to establish a technical lemma related to binomial coefficients.

\begin{lemma}\label{lem:append:negligible_terms}
    If the following conditions hold:
    \begin{itemize}
        \item Let $m,d,k$ be non-negative integers.
        \item We assume that $d\geq m\geq 0.5k$ and $d-m\ge 0.5k$. 
        \item Let $\Gamma(t) := (1+t)^{d-m}$. 
        \item Let $\Delta(t) := \sum_{j=0}^{k/2} \binom{m}{j} t^j -\sum_{j=k/2+1}^m \binom{m}{j} t^j$. 
    \end{itemize}
    The following statement is true:
    \begin{align*}
        | \frac{ \mathsf{Coeff}_{k}[\Gamma(t) \Delta (t) ] }{ \binom{d}{k} } | < e^{-\Omega(d)}.
    \end{align*}
\end{lemma}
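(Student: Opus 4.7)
The plan is to reduce the coefficient extraction to a hypergeometric CDF estimate and then bound that CDF by a combination of symmetry and concentration.

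First, I would perform an algebraic reduction. Using the binomial theorem, one rewrites $\Delta(t) = 2\sum_{j=0}^{k/2}\binom{m}{j} t^j - (1+t)^m$. Since $\Gamma(t)\cdot(1+t)^m = (1+t)^d$, multiplying by $\Gamma(t)$ and applying Facts~\ref{fac:binom_to_tk_append}, \ref{fac:coeff_op_change_index}, and \ref{fac:coeff_op_lin} yields
\begin{align*}
\mathsf{Coeff}_k[\Gamma(t)\Delta(t)] = 2\sum_{j=0}^{k/2}\binom{m}{j}\binom{d-m}{k-j} - \binom{d}{k}.
\end{align*}
By Vandermonde's identity $\binom{d}{k}=\sum_{j=0}^{k}\binom{m}{j}\binom{d-m}{k-j}$, dividing by $\binom{d}{k}$ gives the probabilistic interpretation
\begin{align*}
\frac{\mathsf{Coeff}_k[\Gamma(t)\Delta(t)]}{\binom{d}{k}} = 2\Pr[X \leq k/2] - 1,
\end{align*}
where $X \sim \mathrm{Hypergeometric}(d, m, k)$ counts the marked elements in a uniformly random size-$k$ subset of a $d$-element population containing $m$ marked elements. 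The target bound reduces to $|\Pr[X \leq k/2] - \tfrac12| \leq e^{-\Omega(d)}$.

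Second, I would bound this CDF via symmetry and a PMF-ratio argument. The mean of $X$ is $\mu = km/d$, which equals $k/2$ exactly when $m = d/2$; in that symmetric case the PMF is symmetric about $k/2$ and $\Pr[X \leq k/2] - \tfrac12 = \tfrac12\Pr[X = k/2]$. For $m$ in a tight neighborhood of $d/2$, I would control the deviation of the CDF from $\tfrac12$ by tracking the consecutive PMF ratio $\binom{m}{j+1}\binom{d-m}{k-j-1}/\bigl(\binom{m}{j}\binom{d-m}{k-j}\bigr) = (m-j)(k-j)/\bigl((j+1)(d-m-k+j+1)\bigr)$, which locates the peak of the distribution near $j^\ast = km/d$ and determines the telescoping shift of the CDF as $m$ moves. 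Combining this peak analysis with a Hoeffding-type tail bound for the hypergeometric (whose variance is $\Theta(k)$ when $k=\Theta(d)$ and $m$ is bounded away from the endpoints) would then provide the exponential estimate.

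\textbf{Anticipated main obstacle.} The stated hypotheses $m\geq 0.5k$ and $d-m\geq 0.5k$ are by themselves insufficient for an exponential bound, as they permit $m$ to lie as far as $d/4$ from $d/2$, in which range $\Pr[X \leq k/2]$ differs from $\tfrac12$ by a positive constant independent of $d$. The argument must therefore leverage, implicitly or via an additional hypothesis, the tighter Chernoff control $|m - d/2| \leq \delta d/2$ inherited from the invoking proof of Lemma~\ref{lem:ppp_append}, and must translate this into an exponentially small CDF gap through a careful Stirling-based analysis of binomial-coefficient ratios near $j = k/2$ rather than a mere Gaussian approximation, since the latter yields only $O(1/\sqrt{d})$ accuracy on the CDF. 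Extracting a genuinely $e^{-\Omega(d)}$ rate is the quantitatively delicate part of the proof.
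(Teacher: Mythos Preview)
Your reduction to the hypergeometric CDF is correct and is equivalent to the paper's own approach. The paper casts the same quantity via independent binomials $A\sim\mathrm{Binomial}(d-m,\tfrac12)$ and $B\sim\mathrm{Binomial}(m,\tfrac12)$, writing the signed sum as $2^d\bigl(\Pr[A+B=k,\,B\le k/2]-\Pr[A+B=k,\,B>k/2]\bigr)$; conditioning on $A+B=k$ makes $B$ exactly your hypergeometric $X$, so the two probabilistic interpretations coincide. Your algebra through $\Delta(t)=2\sum_{j\le k/2}\binom{m}{j}t^j-(1+t)^m$ and Vandermonde is a little cleaner than the paper's term-by-term expansion, but lands in the same place.

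Where you and the paper diverge is at the bounding step, and your diagnosis is the sharper one. The paper simply asserts that the residual is controlled by the tail probability of $B$ deviating from its mean $m/2$ by $\Omega(m)$ and invokes $m=\Theta(d)$; it does not explain why the threshold $k/2$ should sit $\Omega(m)$ away from that mean, nor does it reconcile this with the stated hypotheses. Both of your concerns are on target: (i) the conditions $m\ge k/2$ and $d-m\ge k/2$ permit $\E[X]=km/d$ to lie $\Theta(k)$ away from $k/2$, in which case $\Pr[X\le k/2]$ is bounded away from $\tfrac12$; and (ii) even at the symmetric point $m=d/2$ one has $2\Pr[X\le k/2]-1=\Pr[X=k/2]=\binom{d/2}{k/2}^{2}/\binom{d}{k}=\Theta(d^{-1/2})$ by Stirling, so an $e^{-\Omega(d)}$ rate cannot hold there. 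In short, your plan follows the paper's route, and the obstacle you anticipate is precisely the point the paper's argument leaves unaddressed; your PMF-ratio plus concentration outline is the natural way to proceed, with the caveat that it will yield a polynomial rather than an exponential bound.
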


\begin{proof}
Recall two definitions
    \begin{align*}
        \Gamma(t) := (1 + t)^{d-m}, \quad \Delta(t) := \sum_{j = 0}^{k/2} \binom{m}{j} t^j - \sum_{j = k/2 + 1}^{m} \binom{m}{j} t^j.
    \end{align*}

    Then we have
    \begin{align}\label{eq:rewrite_coeff_f_Delta_1}
        \mathsf{Coeff}_k [ \Gamma(t) \Delta(t) ] 
        = & ~ \sum_{i=0}^{k} \mathsf{Coeff}_i[ \Gamma(t) ] \cdot \mathsf{Coeff}_{k-i}[\Delta(t)] \notag \\
        = & ~ \sum_{i=0}^{k}\binom{d - m}{i} \cdot \delta_{k - i},
    \end{align}
    where the first step follows definition of $\mathsf{Coeff}$, and the last step from definition $\Gamma(t)$ and 
    where $\delta_j = \mathsf{Coeff}_j[\Delta(t)]$, which is
    \begin{itemize}
        \item $\delta_j = \binom{m}{j}$ for $j \leq k/2$.
        \item $\delta_j = -\binom{m}{j}$ for $j > k/2$,
        \item $\delta_j = 0$ otherwise.
    \end{itemize}
    So we can further rewrite $\mathsf{Coeff}_k [ \Gamma(t) \Delta(t) ]$:
    \begin{align}\label{eq:rewrite_coeff_f_Delta_2}
        \mathsf{Coeff}_k [ \Gamma(t) \Delta (t)] 
        = & ~ \sum_{i=0}^{k}\binom{d - m}{i} \cdot \delta_{k - i} \notag \\
        = & ~ \sum_{j=0}^m \binom{d-m}{k-j}\cdot \delta_j \notag \\
        = & ~ \sum_{j=0}^{k/2} \binom{d - m}{k - j} \binom{m}{j} - \sum_{j=k/2 + 1}^{m} \binom{d - m}{k - j} \binom{m}{j}.
    \end{align}
    where the first step follows from Eq.~\eqref{eq:rewrite_coeff_f_Delta_1}, the second step follows from renaming the summation index, the last step follows from definition of $\delta_j$.
    
    Note that the result in Eq.~\eqref{eq:rewrite_coeff_f_Delta_2} is close to a binomial convolution, but with alternating weights. If we replaced $\Delta(t)$ by $(1 + t)^m$, we would get:
    \begin{align*}
        (1 + t)^d = \underbrace{ (1 + t)^{d - m} }_{\Gamma(t)} (1 + t)^m.
    \end{align*}
    So $ \mathsf{Coeff}_k[(1 + t)^d] = \sum_{j=0}^{m} \binom{d - m}{k - j} \binom{m}{j} = \binom{d}{k}$. Therefore, $\Delta(t)$ is a signed approximation to $(1 + t)^m$, emphasizing low-degree terms and subtracting high-degree ones. Hence, $(1 + t)^{d - m} \Delta(t)$ is a truncated or ''tilted" version of $(1 + t)^d$, designed to cancel much of the mass around the center.

    Since $\Delta(t)$ subtracts the tail of the binomial $(1 + t)^m$, when we convolve with $(1 + t)^{d - m}$, the resulting coefficient at position $k$ will miss a significant amount of the mass from the "usual" $\binom{d}{k}$, and due to cancellation between positive and negative parts, the residual will be very small.

    More formally, define the random variable $X \sim \mathrm{Binomial}(d, 1/2)$, and consider the convolution:
    \begin{align*}
        \Pr[X = k] = \binom{d}{k} 2^{-d}
    \end{align*}
    and numerator is a signed combination over subsets. The signed sum in numerator can be shown to correspond to:
    \begin{align*}
        \Pr[X = k \mathrm{~and~} Y \leq k/2] - \Pr[X = k \mathrm{~and~} Y > k/2],
    \end{align*}
    where $X = A + B$, $A \sim \mathrm{Binomial}(d - m, 1/2)$, $B \sim \mathrm{Binomial}(m, 1/2)$. The probabilities where $B$ is large (i.e., $> k/2$) cancel out the corresponding terms in the total $\Pr[X = k]$, leaving behind a small residue.
    
    This residual is upper-bounded by the tail probability of $B \sim \mathrm{Binomial}(m, 1/2)$ deviating from its mean $m/2$ by $\Omega(m)$, which is known to decay as $e^{-\Omega(m)}$. Since $m = \Theta(d)$, this gives:
    \begin{align*}
        |S| < \binom{d}{k} \cdot e^{-\Omega(d)},
    \end{align*}
    and therefore we can get:
    \begin{align*}
        | \frac{ \mathsf{Coeff}_k[\Gamma(t)\Delta(t)] }{ \binom{d}{k} } | < e^{-\Omega(d)}.
    \end{align*}
    Thus, we finally complete the proof.
\end{proof}




\end{document}